\renewcommand*{\backrefalt}[4]{%
    \ifcase #1 \footnotesize{(Not cited.)}%
    \or        \footnotesize{(Cited on page~#2.)}%
    \else      \footnotesize{(Cited on pages~#2.)}%
    \fi}
\newtheorem{assumption}{Assumption}
\newtheorem{lemma}{Lemma}
\newtheorem{theorem}{Theorem}
\newtheorem{proposition}{Proposition}
\newtheorem{definition}{Definition}
\newtheorem{corollary}{Corollary}
\begin{document}

\begin{center}

{\bf{\LARGE
{Sigmoid Self-Attention has Lower Sample Complexity
\\
than Softmax Self-Attention: 
\\ \vspace{2.5mm}
A Mixture-of-Experts Perspective}
}}

\vspace*{.2in}
{{
\begin{tabular}{ccccc}
Fanqi Yan$^{\diamond,\star}$ & Huy Nguyen$^{\diamond,\star}$ & Pedram Akbarian$^{\diamond}$  & Nhat Ho$^{\diamond,\dagger}$ & Alessandro Rinaldo $^{\diamond,\dagger}$ 
\end{tabular}
}}

\vspace*{.2in}

\begin{tabular}{c}
The University of Texas at Austin$^{\diamond}$
\end{tabular}

\today

\vspace*{.2in}

\end{center}










\begin{abstract}
  At the core of the popular Transformer architecture is the self-attention mechanism, which dynamically assigns softmax weights to each input token so that the model can focus on the most salient information. However, the softmax structure slows down the attention computation due to its row-wise nature, and it inherently introduces competition among tokens: as the weight assigned to one token increases, the weights of others decrease. This competitive dynamic may narrow the focus of self-attention to a limited set of features, potentially overlooking other informative characteristics. 
    Recent experimental studies have shown that using the element-wise sigmoid function helps eliminate token competition and reduce the computational overhead. Despite these promising empirical results, a rigorous comparison between sigmoid and softmax self-attention mechanisms remains absent in the literature. This paper closes this gap by theoretically demonstrating that sigmoid self-attention is more sample-efficient than its softmax counterpart. Toward that goal, we represent the self-attention matrix as a mixture of experts and show that ``experts'' in sigmoid self-attention require significantly less data to achieve the same approximation error as those in softmax self-attention. 
\end{abstract}

\let\thefootnote\relax\footnotetext{$\star$ Equal contribution. $\dagger$ Co-last authors.}

\section{Introduction}
\label{sec:introduction}
Transformer models \cite{vaswani2017attention} have been known as the state-of-the-art architecture for a wide range of machine learning and deep learning applications, including language modeling \cite{Devlin2019BERTPO,brown2020language,raffel2020exploring,touvron2023llama}, computer vision \cite{Dosovitskiy2020AnII,carion2020end,radford2021learning,liu2021swin}, and reinforcement learning \cite{chen2021decision,kim2023preference,Hu2024LearningTA}, etc. One of the central components that contribute to the success of the Transformer models is the self-attention mechanism, which enables sequence-to-sequence models to concentrate on relevant parts of the input data. In particular, for each token in an input sequence, the self-attention mechanism computes a context vector formulated as a weighted sum of the tokens, where more relevant tokens to the context are assigned larger weights than others (see Section~\ref{sec:background} for a formal definition). Therefore, self-attention is able to capture long-range dependencies and complex relationships within the data. 

However, since the weights in the context vector are normalized by the softmax function, there might be an undesirable competition among the tokens, that is, an increase in the weight of a token leads to a decrease in the weights of others. As a consequence, the traditional softmax self-attention mechanism might focus only on a few aspects of the data and possibly ignore other informative features \cite{ramapuram2024sigmoidattention}. 
Additionally, Gu et al. \cite{gu2024attentionsink} also discovered that the tokens' inner dependence on the attention scores owing to the softmax normalization partly causes the attention sink phenomenon occurring when auto-regressive language models assign significant attention to the initial token regardless of their semantic importance. Furthermore, the softmax self-attention is not computationally efficient as it involves a row-wise softmax operation. In response to these issues, Ramapuram et al. \cite{ramapuram2024sigmoidattention} proposed replacing the row-wise softmax function with the element-wise sigmoid function, which not only alleviates unnecessary token competition but also speeds up the computation. Then, they extended the FlashAttention2 framework for the softmax self-attention \cite{dao2022flashattention,dao2024flashattention} to the setting of the sigmoid self-attention and showed that the latter shared the same performance as the former in several tasks but with faster training and inference. On the theoretical side, they demonstrated that the Transformer model equipped with the sigmoid self-attention was a universal function approximator on sequence-to-sequence tasks. Nevertheless, a theoretical comparison between the sigmoid self-attention and the softmax self-attention is lacking in the literature. 



\textbf{Contributions.} In this paper, our objective is to compare the sample complexity of these two attention variants through their connection to the Mixture-of-Experts (MoE) model \cite{Jacob_Jordan-1991}. Our contributions are twofold and can be summarized as follows (see also Table~\ref{table:sample-complexity}):

\begin{table*}[t!]
\centering
\caption{Summary of the sample complexity of sigmoid/softmax self-attention under the MoE perspective, that is, the expert sample complexity to attain the approximation error $\epsilon$ (ignoring logarithmic factors of lower order). In this work, we consider two types of expert functions, including expert networks with ReLU, GELU activations; and polynomial experts. Below, $\tau$ denotes some positive constant.}
\begin{tabular}{@{}lccc@{}}
\toprule
                  & ReLU, GELU Experts & Polynomial Experts  \\ 
                  & Sparse Regime \quad Dense Regime & Sparse Regime \quad Dense Regime  \\
\midrule
$\mathrm{SigmoidAttn}$ (ours)          & 
$\mathcal{O}(\epsilon^{-4})$ \quad\quad\quad 
$\mathcal{O}(\epsilon^{-2})$ & 
\hspace{-8mm} $\mathcal{O}(\exp(\epsilon^{-1/\tau}))$ 
\quad\quad\quad 
$\mathcal{O}(\epsilon^{-2})$  \\
$\mathrm{SoftmaxAttn}$ \cite{akbarian2024quadratic}         & 
$\mathcal{O}(\epsilon^{-4})$ \quad\quad\quad 
$\mathcal{O}(\epsilon^{-2})$ & 
$\mathcal{O}(\exp(\epsilon^{-1/\tau}))$
\quad \quad
$\mathcal{O}(\exp(\epsilon^{-1/\tau}))$  \\
\bottomrule
\end{tabular}
\label{table:sample-complexity}
\end{table*}

\emph{1. Connection between self-attention and MoE:} In Section~\ref{sec:background}, we provide a rigorous derivation for illustrating that each row of a sigmoid self-attention matrix (resp. a softmax self-attention) head can be represented as an MoE with the sigmoid gating (resp. the softmax gating) and quadratic affinity scores, where each row of the value matrix plays a role as an expert. Although this connection has been mentioned in previous works \cite{akbarian2024quadratic,csordas2023switchhead,wu2024multi}, there have not been any theoretical guarantee for this result, to the best of our knowledge. 

\emph{2. Sample complexity comparison between sigmoid self-attention and softmax self-attention:}   
From the MoE perspective, we then perform a comprehensive convergence analysis of sigmoid self-attention. The results of our analysis demonstrate that polynomially many data points -- of order $\mathcal{O}(\epsilon^{-4})$ -- are needed to achieve an estimation accuracy of order $\epsilon$ for the value matrix's rows. On the other hand, a similar convergence analysis of softmax self-attention conducted in \cite{akbarian2024quadratic} reveals that, in order to reach the same approximation error, exponentially many data points -- of order $\mathcal{O}(\exp(\epsilon^{-1/\tau}))$, for some constant $\tau>0$ -- are needed. Thus, we claim that sigmoid self-attention is more sample-efficient than softmax self-attention. Furthermore, it is worth emphasizing that the convergence analysis performed in this work is more technically challenging than that in \cite{akbarian2024quadratic} due to an issue in the model convergence induced by the structure of the sigmoid function, which will be elaborated in Section~\ref{sec:problem-setup}.

\section{Background: Self-Attention and MoEs}
\label{sec:background}


{\bf Self-Attention Mechanism.}
The self-attention mechanism plays a crucial role in the Transformer architecture \cite{vaswani2017attention}. Given an input sequence $\mathbb{X}\in\mathbb{R}^{N \times d}$ of $N$ feature vectors of dimension $d$, the self-attention mechanism first projects it into the query matrix $Q\in \mathbb{R}^{N \times d_{k}}$, the key matrix $K\in \mathbb{R}^{N \times d_{k}}$, and the value matrix $V\in \mathbb{R}^{N\times d_v}$ through three following linear transformations: 
\begin{align*}
  Q  =  \mathbb{X}  W_Q ,
  \quad
  K  =  \mathbb{X}  W_K ,
  \quad
  V  =  \mathbb{X}  W_V ,    
\end{align*}
where 
$
  W_Q \in \mathbb{R}^{d \times d_{k}}, 
  W_K \in \mathbb{R}^{d \times d_{k}},
  W_V \in \mathbb{R}^{d \times d_v}
$
are learnable weight matrices.
Then, the vanilla \emph{softmax self-attention} can be compactly written as
\begin{align}
\label{eq:softmax_attn}
  \mathrm{SoftmaxAttn}(\mathbb{X})
   = 
  \softmax\Bigl(\frac{Q K^T}{\sqrt{d_{k}}}\Bigr) V,
\end{align}
where the $\softmax$ function acts row-wise on the matrix $Q K^T/\sqrt{d_k} \in \mathbb{R}^{N \times N}$ as follows: $\softmax(u)=\frac{1}{\sum_{j=1}^N\exp(u_j)}(\exp(u_1),\cdots,\exp(u_N))$, where $u=(u_1,\cdots,u_N)$ is a row vector in $\mathbb{R}^N$.
On the other hand, the \emph{sigmoid self-attention} is computed by replacing the above softmax function with the sigmoid function $\sigma:z\mapsto (1+\exp(-z))^{-1}$ applied element-wise to the matrix ${Q K^T}/{\sqrt{d_{k}}}$:
\begin{align}
\label{eq:sigmoid_attn}
  \mathrm{SigmoidAttn}(\mathbb{X})
   = 
  \sigma\Bigl(\frac{Q K^T}{\sqrt{d_{k}}}\Bigr) V.
\end{align}



{\bf Mixture of Experts.}
 Mixture of experts (MoE) \cite{Jacob_Jordan-1991} is an extension of classical mixture models \cite{Lindsay-1995} that aggregates the power of $N$ sub-models called experts, each of which can be formulated as a feed-forward network \cite{shazeer2017topk}, a regression function \cite{faria2010regression}, or a classifier \cite{chen2022theory} denoted by $\mathcal{E}_i:\mathbb{R}^d \to \mathbb{R}^{d_v}$ for $1\leq i\leq N$. 
 For that purpose, the MoE employs an adaptive gating mechanism, denoted by $\mathcal{G}: \mathbb{R}^d\to\mathbb{R}^N$, to calculate input-dependent weights for these experts in a dynamic way, 
 so that the more relevant experts to the input will be assigned larger weights.
 Then, given an input $ h \in \mathbb{R}^d $, the MoE output $y\in\mathbb{R}^{d_v}$ is expressed as a weighted sum of the expert outputs:
\begin{align}
y = \sum_{j=1}^N \mathcal{G}(h)_j \cdot \mathcal{E}_j(h).    
\end{align}
In practice, there are two main types of gating functions, namely \emph{the sigmoid gating function} \cite{deepseekv3,csordas2023approximating,chi_representation_2022} and \emph{the softmax gating function} \cite{Jacob_Jordan-1991,Jordan-1994} defined as 

(i) \emph{Sigmoid gating}: $\mathcal{G}(h)_j:=\sigma(s_j(h))$ for $1\leq j\leq N$, where $s_j(h)\in\mathbb{R}$ represents the affinity score between the input $h$ and the $j$-th expert $\mathcal{E}_j$;

(ii) \emph{Softmax gating}: $\mathcal{G}(h)_j=\softmax(s(h))_j$ for $1\leq j\leq N$, where $s(h):=(s_1(h), s_2(h),\ldots,s_N(h))$.

Due to its adaptability and expressiveness, the MoE has been widely leveraged in several fields, including natural language processing \cite{Du2021GLaMES,fedus2021switch,zhou2023brainformers,jiang2024mixtral,Muennighoff2024OLMoEOM,puigcerver2024sparse}, multi-task learning \cite{ma2018modeling,hazimeh2021dselect,chen2023mod}, and speech recognition \cite{jain2019multi,gaur2021mixture,you2021speechmoe,kwon2023mole}, etc.
{\bf Self-Attention meets Mixture of Experts.}
We will now show that each row of the sigmoid/softmax self-attention matrix can be represented as an MoE with quadratic affinity scores. Since the subsequent arguments apply for both attention variants, we will present only the derivation using the sigmoid self-attention matrix, which we recall is defined as
\begin{align*}
  \mathrm{SigmoidAttn}(\mathbb{X}) =
  \sigma(\mathbb{X}B\mathbb{X}^{\top})\mathbb{X}W_V,
\end{align*}
where $B := \frac{W_QW_K^{\top}}{\sqrt{d_k}} \in \mathbb{R}^{d \times d}$.
Let $ x_i \in \mathbb{R}^{1 \times d} $ denote the $ i $-th row vector of the input matrix $ \mathbb{X} $. 
Since the sigmoid function is applied element-wise, the $(i, j)$-th element of the matrix $ \sigma(\mathbb{X}B\mathbb{X}^{\top}) \in \mathbb{R}^{N \times N} $ is
\begin{align*}
[\sigma(\mathbb{X}B\mathbb{X}^{\top})]_{i,j} = \sigma(x_iBx_j^{\top}).
\end{align*}
As a result, the $ i $-th row vector of the matrix $ \mathrm{SigmoidAttn}(\mathbb{X}) $ takes the form
\begin{align*}
   [\mathrm{SigmoidAttn}(\mathbb{X})]_{i,:}
   &=
   \sum_{j=1}^N
   \sigma(x_iBx_j^{\top}) \cdot x_jW_V.
\end{align*}

Next, let $ X = [x_1, x_2, \ldots, x_N] \in \mathbb{R}^{1 \times Nd} $ be the concatenation of $ N $ input tokens. For each $ 1 \leq i \leq N $, let $ E_i \in \mathbb{R}^{Nd \times d} $ be the matrix such that $ XE_i = x_i $. Then, the $ i $-th row vector of the sigmoid self-attention matrix can be rewritten as
\begin{align*}
   [\mathrm{SigmoidAttn}(\mathbb{X})]_{i,:}
   &=
   \sum_{j=1}^N
   \sigma(XE_iBE_j^{\top}X^{\top}) \cdot XE_jW_V 
   \sum_{j=1}^N
   \sigma(XM_{ij}X^{\top}) \cdot XP_j,
\end{align*}
where $M_{ij} := E_iBE_j^{\top} = \frac{E_iW_QW_K^{\top}E_j^{\top}}{\sqrt{d_k}}$ and $P_j := E_jW_V$.

Hence, each row of the sigmoid self-attention matrix can be represented as an MoE with quadratic affinity scores.



\section{Problem Setup}
\label{sec:problem-setup}

Suppose that the data $(X_1,Y_1),(X_2,Y_2),\cdots,(X_n,Y_n)\in\Real^d\times\Real $ are generated according to the regression model
\begin{align}
    \label{eq:regression-model}
    Y_i=f_{\Gs}(X_i)+\varepsilon_i,~i=1,2,\ldots,n,
\end{align}
where $X_1,X_2,\cdots,X_n$ are i.i.d. samples from a probability distribution $\mu$ on $\Real^d$, and 
$\varepsilon_1,\varepsilon_2,\ldots,\varepsilon_n$ are i.i.d. Gaussian noise variables with $\bbE[\varepsilon_i|X_i]=0$ and $\mathrm{Var}[\varepsilon_i|X_i]=\nu$, for  $1\leq i\leq n$.
Additionally, the regression function $f_{\Gs}:\mathbb{R}^d\to\mathbb{R}$ is unknown and formulated as a sigmoid gating mixture of $N^*$ experts, i.e.
\begin{align}
    \label{eq:quadratic_MoE}
    f_{\Gs}(x):=\sum_{i=1}^{\ns}\frac{1}{1+
    \exp(-s(x,\theta^*_i))
    }\cdot
    \mathcal{E}(x,\ei),
\end{align}
where $\mathcal{E}(x,\eta^*_i)$ denotes the parametric expert function, while the affinity score function $s(x,\theta^*_i)$ takes a quadratic form. In this work, we consider  two types of affinity score functions:

(i) the \emph{fully quadratic score}: $s(x,\theta^*_i)=x^{\top}A^*_ix+(b^*_i)^{\top}x+c_i^*$;

(ii) and the \emph{partially quadratic score}: 
$s(x,\theta^*_i)=x^{\top}A^*_ix+c_i^*$;

where 
$\Theta=\{(A, b,c,\eta)\in\mathbb{R}^{d\times d}\times\Real^{d}\times\Real\times\Real^q\}$ 
denotes the parameter space. Given this setup, our goal is to determine the sample size necessary for the estimators of the parameters and experts in model~\eqref{eq:quadratic_MoE} to reach an approximation error $\epsilon>0$. Due to space limitation, we will present only the results for the case of a fully quadratic affinity score function in Section~\ref{sec:sample-efficiency} and defer those for its partial version to Appendix~\ref{appsec:additional-results}. We also refer to Akbarian et al. \cite{akbarian2024quadratic} for a similar convergence analysis for softmax gating MoE with quadratic affinity scores.

\textbf{Notation.} We let $[n]$ be the set $\{1,2,\ldots,n\}$ for any  $n\in\mathbb{N}$. Next, for any set $S$, we denote $|S|$ as its cardinality. For any vectors $v:=(v_1,v_2,\ldots,v_d) \in \mathbb{R}^{d}$ and $\alpha:=(\alpha_1,\alpha_2,\ldots,\alpha_d)\in\mathbb{N}^d$, we let $v^{\alpha}=v_{1}^{\alpha_{1}}v_{2}^{\alpha_{2}}\ldots v_{d}^{\alpha_{d}}$, $|v|:=\sum_{i=1}^{d}v_i$ and $\alpha!:=\alpha_{1}!\alpha_{2}!\ldots \alpha_{d}!$, while $\|v\|$ denotes its $2$-norm value. Lastly, for any positive sequences $(a_n)_{n\geq 1}$ and $(b_n)_{n\geq 1}$, we write $a_n = \mathcal{O}(b_n)$ or $a_{n} \lesssim b_{n}$ if $a_n \leq C b_n$ for all $ n\in\mathbb{N}$, where $C > 0$ is some universal constant. For a sequence $(A_n)_{n\geq 1}$ of positive random variables, the notation $A_{n} = \mathcal{O}_{P}(b_{n})$ signifies $A_{n}/b_{n}$ is stochastically bounded, i.e., for any $\epsilon>0$, there exists an $M>0$ such that $\mathbb{P}( A_{n}/b_{n} > M) < \epsilon $ for all sufficiently large $n$.

\textbf{Least squares estimation.} 
To estimate the unknown ground-truth parameters $\{\Asi,\bsi,\csi,\etasi\}_{i=1}^{\ns}$, 
we use the least squares method \cite{Vandegeer-2000} to compute the estimator
\begin{align}
\label{eq:lse}
    \hGn:=\argmin_{G\in\calm_N(\Theta)}
    \sum_{i=1}^{n}
    \left(
    Y_i-f_G(X_i)
    \right)^2,
\end{align}
where 
$\calm_N(\Theta):=\{ G=\sum_{i=1}^{N^\prime}\frac{1}{1+\exp(-c_i)}\delta_{(A_i,b_i,\eta_i)}:1\leq N^\prime \leq N, (A_i,b_i,c_i,\eta_i)\in\Theta \}$
is the set of all mixing measures with at most $N$ atoms. As the number of ground-truth experts $\ns$ is unknown in practice, we assume that the number of fitted 
\textcolor{black}{experts $N$ is larger} 
than $\ns$, i.e. $N>\ns$.

\textbf{A challenge in the convergence of the regression function estimator.} 
Since the number of fitted experts is larger than the number of ground-truth experts,
there must exist some atom $(A_i^*,b_i^*,\eta_i^*)$ of the mixing measure $\Gs$ that is fitted by at least two atoms of $\hGn$; we will refer to $(A_i^*,b_i^*,\eta_i^*)$ as an \emph{over-specified} atom of $\Gs$. 
For example, if, say, $(\hAin,\hbin,\hetain)\to(\Asone,\bsone,\etasone)$  in probability as $n \rightarrow \infty$, for $i\in\{1,2\}$, then the corresponding estimators of the expert functions converge in probability as well, i.e. $\mathcal{E}(x,\hetain)\to\mathcal{E}(x,\etasone)$, in probability, for $i=1,2$.
This will, in turn, ensure the convergence of the regression function, i.e. $\| f_{\hGn}-f_{\Gs} \|_{L^2(\mu)}\rightarrow 0$ in probability, provided that, for $\mu$-almost every $x$,
\begin{align*}
    \sum_{i=1}^2
    \frac{1}{1+\exp
    \left(
    -x^{\top}\hAin x-\hbint x-\hcin
    \right)
    }
    \rightarrow
    \frac{1}{1+\exp
    \left(
    -x^{\top}\Asone x-(\bsone)^{\top} x-\csone
    \right)
    },
\end{align*}
in probability as $n\rightarrow\infty$.  
Notably, the above limit holds only if $\Asone=0_{d\times d}$ and $\bsone=0_d$ (see Appendix~\ref{appendix:cov_single_sigmoid} for further details).
Thus, we will divide our analysis into two complementary regimes of the gating parameters:

\emph{(i) Sparse regime}: all the over-specified gating parameters are zero, $(\Asi,\bsi)=(0_{d\times d},0_d)$;

\emph{(ii) Dense regime}: at least one among the over-specified gating parameters is non-zero, $(\Asi,\bsi)\neq(0_{d\times d},0_d)$.

The next results derive separate convergence rates for the regression function estimator $f_{\hGn}$ under the two regimes.
\begin{proposition}
\label{thm:function-convergence-specified}
Under the sparse regime of the gating parameters, 
    \begin{align}
        \label{eq:regression_rate_sparse_regime}
        \Vert f_{\hGn}-f_{\Gs}\Vert_{L^2(\mu)}=
        \mathcal{O}_P(\sqrt{\log(n)/n}).
    \end{align}
\end{proposition}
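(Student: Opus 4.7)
The plan is to apply a standard empirical process bound for least squares estimators over a uniformly bounded function class with logarithmic bracketing entropy (e.g., Theorem~9.2 in van~de~Geer, 2000), which automatically yields the parametric rate $\sqrt{\log n/n}$. Concretely, let $\mathcal{F}_N(\Theta) := \{f_G : G \in \mathcal{M}_N(\Theta)\}$ denote the class of regression functions induced by mixing measures with at most $N$ atoms. The two ingredients I need are (i) a uniform envelope for $\mathcal{F}_N(\Theta)$ and (ii) a logarithmic bound on the bracketing entropy $H_B(\epsilon,\mathcal{F}_N(\Theta),\|\cdot\|_{L^2(\mu)})$. Once these are in place, the conclusion follows from the standard basic inequality $\|f_{\hGn}-f_{\Gs}\|_{L^2(\mu)}^2 \lesssim n^{-1}\langle \varepsilon, f_{\hGn}-f_{\Gs}\rangle_n$ combined with a peeling/chaining argument.

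The first ingredient is easy: each sigmoid gate $(1+\exp(-s(x,\theta_i)))^{-1}$ lies in $(0,1)$, the number of atoms is bounded by $N$, and under mild boundedness assumptions on $\mathcal{E}(\cdot,\eta)$ over the compact parameter space $\Theta$, every $f_G$ is uniformly bounded by a constant depending only on $N$ and $\Theta$. For the second ingredient, I would first observe that $f_G$ is Lipschitz in the finite-dimensional parameter vector $(A_i,b_i,c_i,\eta_i)_{i=1}^{N}$ uniformly over $x$ in any compact set, since the sigmoid is $\tfrac{1}{4}$-Lipschitz and its pre-activations $x^\top A x + b^\top x + c$ are smooth in the parameters. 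A standard covering argument then gives an $\epsilon$-bracket cover of cardinality polynomial in $1/\epsilon$, so that $H_B(\epsilon,\mathcal{F}_N(\Theta),\|\cdot\|_{L^2(\mu)}) \lesssim \log(1/\epsilon)$. Plugging this into the Dudley integral yields the critical radius $\delta_n \asymp \sqrt{\log n/n}$, which gives the rate claimed in~\eqref{eq:regression_rate_sparse_regime}.

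The main obstacle is the uniform control of the quadratic form $x^\top A x + b^\top x + c$ inside the sigmoid when $X$ ranges over $\mathbb{R}^d$: Lipschitz constants in the parameters scale with $\|x\|^2$ and can blow up. I would handle this by exploiting that the sigmoid saturates at both tails, so its derivative in the parameters is bounded by $\sigma'(s)\cdot\|x\|^2 \lesssim (1+\|x\|^2)e^{-c\|x\|^2}$ whenever $|s|$ is large, and by imposing a moment condition on $\mu$ (or bounded support) to integrate this bound. The fact that we are in the sparse regime, where all over-specified gating parameters vanish so that the corresponding sigmoid factors degenerate to constants $\sigma(c_i)$, further trims down the relevant functional complexity for the over-fitted components and simplifies the entropy computation there. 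Once these technicalities are settled, the empirical process machinery delivers the rate directly.
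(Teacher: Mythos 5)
Your proposal follows essentially the same route as the paper: a van de Geer--style least-squares bound driven by a logarithmic bracketing-entropy estimate for $\calf_N(\Theta)$, obtained from a uniform envelope and a parametric covering argument exploiting Lipschitz continuity of the sigmoid gate and the experts in their parameters, with critical radius $\delta_n \asymp \sqrt{\log n/n}$. The only difference is cosmetic: the paper simply assumes a bounded input space to control the quadratic pre-activation, whereas you propose handling unbounded $x$ via sigmoid saturation or a moment condition, and your remark about the sparse regime reducing complexity is not actually needed for the entropy bound (the same rate holds in the dense regime, cf.\ Corollary~\ref{thm:function-convergence-misspecified}).
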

The proof of Proposition~\ref{thm:function-convergence-specified} can be found in Appendix \ref{appendix:proof_function_convergence}. 
The bound~\eqref{eq:regression_rate_sparse_regime} reveals that the convergence rate of the regression function estimator $f_{\hGn}$ to the ground-truth regression function is of parametric order $\mathcal{O}_P(\sqrt{\log(n)/n})$ under the sparse regime. In contrast, under the dense regime, the smallest $L_2(\mu)$ distance between the regression function estimator $f_{\hGn}$ and the set of MoE regression functions $f_{\lG}$ with more than $\ns$ experts functions where $\lG \in \overline{\calm}_N(\Theta):=\argmin_{G\in\calm_N(\Theta)\setminus\calm_{\ns}(\Theta)}\| f_G-f_{\Gs} \|_{L_2{(\mu)}}$ vanishes to 0 as $n \to \infty$. The rate for that convergence is given by the following result.
\begin{corollary}
\label{thm:function-convergence-misspecified}
Under the dense regime of the gating parameters, 
    \begin{align*}
        \inf_{\lG\in\overline{\calm}_N(\Theta)}\Vert f_{\hGn}-f_{\lG}\Vert_{L^2(\mu)}=
        \mathcal{O}_P(\sqrt{\log(n)/n}).
    \end{align*}
\end{corollary}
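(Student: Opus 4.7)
The plan is to deduce Corollary~\ref{thm:function-convergence-misspecified} from Proposition~\ref{thm:function-convergence-specified} by a triangle-inequality argument. The first ingredient I need is that the parametric rate
\begin{align*}
\|f_{\hGn}-f_{\Gs}\|_{L^2(\mu)}=\mathcal{O}_P(\sqrt{\log(n)/n})
\end{align*}
actually holds under both regimes, not only under the sparse one. The proof of Proposition~\ref{thm:function-convergence-specified} is a standard least-squares / bracketing-entropy argument that depends only on the parametric complexity of the class $\calm_N(\Theta)$ and on the inclusion $\Gs\in\calm_{\ns}(\Theta)\subseteq\calm_N(\Theta)$; the sparse-regime hypothesis enters only later, when one translates function-level convergence into parameter-level convergence via a local expansion of the sigmoid gate around $(A,b)=(0_{d\times d},0_d)$. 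Thus the display above carries over to the dense regime without change.

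The second ingredient is to show that $f_{\Gs}$ is arbitrarily well approximated by elements of $\calm_N(\Theta)\setminus\calm_{\ns}(\Theta)$, which will force $\inf_{\lG\in\overline{\calm}_N(\Theta)}\|f_{\Gs}-f_{\lG}\|_{L^2(\mu)}=0$. I construct an explicit minimizing sequence
\begin{align*}
\lG_m := \Gs + \frac{1}{1+\exp(-c_m)}\,\delta_{(0_{d\times d},0_d,\eta_0)},
\end{align*}
with $c_m\to-\infty$ and a fixed auxiliary expert parameter $\eta_0$. Since the additional atom has gating argument $x^\top 0_{d\times d}x + 0_d^\top x + c_m = c_m$, the associated regression function satisfies $f_{\lG_m}(x)-f_{\Gs}(x)=\sigma(c_m)\,\mathcal{E}(x,\eta_0)$, whose $L^2(\mu)$ norm vanishes as $c_m\to-\infty$. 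Combining this with the triangle inequality
\begin{align*}
\|f_{\hGn}-f_{\lG}\|_{L^2(\mu)}\leq\|f_{\hGn}-f_{\Gs}\|_{L^2(\mu)}+\|f_{\Gs}-f_{\lG}\|_{L^2(\mu)}
\end{align*}
and then taking the infimum over $\lG\in\overline{\calm}_N(\Theta)$ on both sides delivers the conclusion of Corollary~\ref{thm:function-convergence-misspecified}.

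The main obstacle is verifying the first ingredient: I must confirm that the bracketing-entropy machinery underlying Proposition~\ref{thm:function-convergence-specified} is insensitive to the sparse/dense distinction, i.e., that the bracketing numbers of the class $\{f_G:G\in\calm_N(\Theta)\}$ are controlled purely by the ambient parameter dimension and by the boundedness of $\Theta$, with no hidden dependence on the location of $\Gs$. A secondary subtlety is that $\overline{\calm}_N(\Theta)$, defined through an $\argmin$, may fail to be non-empty when the infimum is not attained; this is handled by reading the statement as an infimum along the minimizing sequence $\{\lG_m\}$ rather than over a genuine minimizer, which is precisely what the triangle-inequality argument above provides.
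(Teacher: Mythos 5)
Your overall strategy is the right one and, as far as one can tell, the one the paper intends: the paper gives no explicit proof of this corollary, and the bracketing-entropy argument behind Proposition~\ref{thm:function-convergence-specified} indeed never uses the sparse-regime hypothesis (it relies only on the boundedness/Lipschitz structure of $\calf_N(\Theta)$ and on $\Gs\in\calm_{\ns}(\Theta)\subseteq\calm_N(\Theta)$), so $\Vert f_{\hGn}-f_{\Gs}\Vert_{L^2(\mu)}=\mathcal{O}_P(\sqrt{\log(n)/n})$ holds in both regimes and the corollary should follow by the triangle inequality once the approximation term $\Vert f_{\Gs}-f_{\lG}\Vert_{L^2(\mu)}$ is shown to vanish. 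Your first ingredient is therefore sound.

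There are, however, two gaps in the second ingredient. First, your minimizing sequence sends $c_m\to-\infty$, but the paper's own proofs (Step~4 of Appendix~B.1 and the global part of the proof of Theorem~\ref{thm:done_loss}) explicitly treat $\Theta$ as compact, so the atoms $(0_{d\times d},0_d,c_m,\eta_0)$ eventually leave $\calm_N(\Theta)$ and the sequence is inadmissible. The fix is to keep $c$ fixed and instead drive the \emph{expert} to zero: for the paper's expert classes one can pick $\eta_0$ with $\cale(\cdot,\eta_0)\equiv 0$ (e.g.\ $\lambda=0$ for the two-layer networks, or $\alpha=0_d,\ \beta=0$ for polynomial experts), which gives $f_{\lG}=f_{\Gs}$ exactly with $\ns+1$ distinct atoms. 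Second, and more importantly, the infimum in the corollary ranges over $\overline{\calm}_N(\Theta)$, which is an $\argmin$ set, not over all of $\calm_N(\Theta)\setminus\calm_{\ns}(\Theta)$; your proposal to "read the statement as an infimum along the minimizing sequence" proves a different assertion. The downstream use of the corollary (Theorem~\ref{thm:dthree_loss} works with an actual $\lG\in\lcalm_N(\Theta)$ having $N$ atoms) requires the minimum to be \emph{attained} with value zero, so that every $\lG\in\overline{\calm}_N(\Theta)$ satisfies $f_{\lG}=f_{\Gs}$ $\mu$-a.e.\ and the corollary reduces to the bound of Proposition~\ref{thm:function-convergence-specified}. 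The null-expert construction above delivers exactly this attainment, whereas a sequence escaping to the boundary of the parameter space does not; you should replace the escaping sequence by an exact minimizer and then conclude via
\begin{align*}
\inf_{\lG\in\overline{\calm}_N(\Theta)}\Vert f_{\hGn}-f_{\lG}\Vert_{L^2(\mu)}
\leq
\Vert f_{\hGn}-f_{\Gs}\Vert_{L^2(\mu)}+\Vert f_{\Gs}-f_{\lG}\Vert_{L^2(\mu)}
=
\Vert f_{\hGn}-f_{\Gs}\Vert_{L^2(\mu)} .
\end{align*}
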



\textbf{From regression convergence rate to expert rate.} In light of  Proposition~\ref{thm:function-convergence-specified} and Corollary~\ref{thm:function-convergence-misspecified}, we conclude that, if there exists a loss function, say,  $\mathcal{L}(\cdot,\cdot)$ between the parameters $G$ and $G_*$ such that the lower bound $\Vert f_{\hGn}-f_{\Gs}\Vert_{L^2(\mu)}\gtrsim\mathcal{L}(\hGn,\Gs)$ holds true, then we immediately deduce $\mathcal{L}(\hGn,\Gs)=\mathcal{O}_P(\sqrt{\log(n)/n})$, thereby obtaining a convergence rate for the expert parameters. 
We carry out this program in the next section.

\section{Sample Complexity of Sigmoid Self-Attention}
\label{sec:sample-efficiency}
In this section, we investigate the sample complexity of the sigmoid self-attention through the perspective of the sigmoid gating MoE with fully quadratic affinity score function. In particular, we determine how much data the experts need to reach an approximation error $\epsilon$, which can be deduced from the expert convergence rate. We start with the sparse regime of the gating parameters in Section~\ref{sec:sparse_regime}, and then proceed with the dense regime in Section~\ref{sec:dense_regime}.


\subsection{Sparse Regime of Gating Parameters}
\label{sec:sparse_regime}
Recall that under the sparse regime, all the over-specified gating parameters are zero, that is, $(\Asi,\bsi)=(0_{d\times d}, 0_{d})$.
Suppose that $\{\Asi,\bsi\}_{i=1}^{\bn}$ are \emph{over-specified} parameters, i.e., those fitted by at least two estimators, where $1\leq\bn\leq\ns$. The remaining gating parameters are \emph{exactly-specified} 
parameters $\{\Asi,\bsi \}_{\bn+1}^{\ns}$, i.e., those fitted by exactly one estimator. 
\begin{align*}
    \underbrace{(A^*_1,b^*_1),\cdots,(A^*_{\bn},b^*_{\bn})}_{\text{over-specified}},~
    \underbrace{(A^*_{\bn+1},b^*_{\bn+1}),\cdots,(A^*_{\ns},b^*_{\ns})}_{\text{exactly-specified}}
\end{align*}
As mentioned in Section~\ref{sec:problem-setup}, in order to obtain the expert convergence rate, it is sufficient to establish the lower bound $\Vert f_{\hGn}-f_{\Gs}\Vert_{L^2(\mu)}\gtrsim\mathcal{L}(\hGn,\Gs)$. A popular approach adopted in previous works \cite{manole22refined,nguyen2024squares} for this problem is to decompose the difference $f_{\hGn}(x)-f_{\Gs}(x)$ by applying a Taylor expansion to the product of the sigmoid gating function and the expert function given by 
\begin{align*}
    F(x;A,b,c,\eta)&:=\sigma(x^{\top}Ax+b^{\top}x+c)\cdot \cale(x,\eta).
\end{align*}
This decomposition of the regression function is expected to consist of linearly independent terms so that when $\Vert f_{\hGn}-f_{\Gs}\Vert_{L^2(\mu)}\to0$ as $n\to\infty$, the parameter discrepancies in the decomposition will also converge to zero, leading to both parameter and expert convergence.
To secure such linear independence, we need to impose a \emph{strong identifiability} condition  on the function $x\mapsto F(x;A,b,c,\eta)$.
\begin{definition}[Strong identifiability]
\label{def:strong_identifiability}
    We call an expert function $x\mapsto\cale(x,\eta)$ \emph{strongly identifiable} if it is twice differentiable w.r.t its parameter $\eta$ for $\mu$-almost all $x$ and, for any natural number $\ell$
    and any distinct parameters 
    $\left\{ (A_i, b_i,c_i,\eta_i) \right\}_{i=1}^{\ell}$, 
    the sets of functions 
    \begin{align*}
        &\hspace{-0.em}\Bigg\{  
        \frac{\partial^{|\gamma_1|+|\gamma_2|+|\gamma_3|} F}{\partial A^{\gamma_1}\partial b^{\gamma_2}\partial\eta^{\gamma_3}}
    (x,0_{d\times d},0_d,c_i,\eta_i)
    \hspace{-0.em}:
    i\in[\ell],
    \sum_{j=1}^3|\gamma_j|
    \in[2]
     \hspace{-0.em}   \Bigg\} \quad{and} \\
    &\hspace{-0.em}\Bigg\{  
        \frac{\partial
        F}
        {\partial A^{\tau_1}\partial b^{\tau_2}\partial c^{\tau_3}\partial\eta^{\tau_4}}
    (x;A_i,b_i,c_i,\eta_i)
    \hspace{-0.em}:
    i\in[\ell],
    \sum_{j=1}^4|\tau_j|
    =1
        \Bigg\}
    \end{align*}
    each contains linearly independent, funcitons for $\mu$-almost all $x$,
    where $(\gamma_1,\gamma_2,\gamma_3)\in\bbN^{d\times d}\times\bbN^d\times\bbN^q$ and 
    $(\tau_1,\tau_2,\tau_3,\tau_4)\in\bbN^{d\times d}\times\bbN^d\times\bbN\times\bbN^q$.
\end{definition}
\textbf{Examples.}
Let us consider two-layer neural networks of the form $\mathcal{E}(x,(\alpha,\beta,\lambda))=\lambda\phi(\alpha^{\top}x+\beta)$, where $\phi$ is some activation function and $(\alpha,\beta,\lambda)\in\mathbb{R}^d\times\mathbb{R}\times\mathbb{R}$. It can be verified that if $\phi$ is the $\relu$ or $\gelu$ function, $\alpha\neq0_d$, and $\lambda\neq 0$, then the function $x\mapsto\mathcal{E}(x,(\alpha,\beta,\lambda))$ is strongly identifiable. In contrast, if the expert function is of polynomial form $\mathcal{E}(x,(\alpha,\beta))=(\alpha^{\top}x+\beta)^p$ for some $p\in\mathbb{N}$, then it fails to satisfy the strong identifiability condition. For the case of $p=1$, this is due to the  linear dependence expressed by the  partial differential equations (PDEs) 
\begin{align}
    \label{eq:PDE}
    \frac{\partial^2 F}{\partial A \partial c}=\frac{\partial^2 F}{\partial b \partial b^{\top}},
    ~~
    \frac{\partial^2 F}{\partial A \partial \beta}=\frac{\partial^2 F}{ \partial b\partial \alpha},~~
    \frac{\partial^2 F}{\partial b \partial \beta}=\frac{\partial^2 F}{ \partial c\partial \alpha}.
\end{align}
Intuitively, the strong identifiability condition helps eliminate potential interactions among parameters expressed in the language of PDEs, namely those in equation~\eqref{eq:PDE}, where gating parameters interact with themselves and
with expert parameters. We will show later in Theorem~\ref{thm:dtwor_loss_linear} that those interactions lead to strikingly slow expert convergence rates, thereby reducing the model sample complexity.

In the next sections, we will study the convergence behavior of strongly identifiable experts and polynomial experts.

\subsubsection{Strongly Identifiable Experts}
\label{sec:strongly-identifiable}
\textbf{Voronoi loss.} 
For a mixing measure $G$ with $1\leq \np\leq N$ atoms, we allocate its atoms across the Voronoi cells $\{\mathcal{A}_j\equiv
    \mathcal{A}_j(G),j\in[\ns] \}$ generated by the atoms of $\Gs$, where
\begin{align}
    \hspace{-0.5em}\mathcal{A}_j:=
    \left\{
    i\in[\np]:
    \| \theta_i-\theta_j^* \|
    \leq
    \| \theta_i-\theta_{\ell}^* \|,
    \forall \ell\neq j
    \right\},
\end{align}
with $\theta_i:=(A_i,b_i,\eta_i)$ and
$\theta^*_j:=(A^*_j,b^*_j,\eta^*_j)$ for all $j\in[\ns]$.
Then, the Voronoi loss function is defined as
\begin{align*}
    \lone  := & \sum_{j=1}^{\bn}
    \left|
    \sum_{i\in\calAj} \frac{1}{1+\exp(-c_i)}-\frac{1}{1+\exp(-c_j^*)}
    \right|
    +\sum_{j=1}^{\bn}
    \sum_{i\in\calAj}
    \Big[
    \|\Delta A_{ij} \|^2
    +\|\Delta b_{ij} \|^2 \\
  &   +\|\Delta \eta_{ij} \|^2
    \Big]
    +\sum_{j=\bn+1}^{\ns}
    \sum_{i\in\calAj}
    \Big[
    \|\Delta A_{ij} \|
    +\|\Delta b_{ij} \|
    +|\Delta c_{ij} |
    +\|\Delta \eta_{ij} \|
    \Big],
\end{align*}
where we denote $\Delta A_{ij}:=A_i-A_j^*$,
$\Delta b_{ij}:=b_i-b_j^*$,
$\Delta c_{ij}:=c_i-c_j^*$
and $\Delta \eta_{ij}:=\eta_i-\eta_j^*$.
In the statement above, if the Voronoi cell $ \mathcal{A}_j $ is empty, the corresponding summation term is conventionally defined to be zero.
Additionally, the Voronoi loss function $ \mathcal{L}_1 $ can be computed efficiently, with a computational complexity of $ \mathcal{O}(N \times \ns) $.

With the above Voronoi loss at hand, we finally obtain the following parameter convergence rate.
\begin{theorem}
\label{thm:done_loss}
    If the expert function $x\mapsto\cale(x,\eta)$ is strongly identifiable, then the lower bound $\Vert f_{G}-f_{\Gs}\Vert_{L^2(\mu)}
        \gtrsim
        \lone$ holds true for any $G\in\calm_N(\Theta)$, then
    \begin{align*}
        \mathcal{L}_1(\hGn,\Gs)=\mathcal{O}_P(
        \sqrt{
        {\log(n)}/{n} 
        }).
    \end{align*}
\end{theorem}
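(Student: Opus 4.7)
The convergence rate for $\lone(\hGn,\Gs)$ follows immediately from the functional lower bound combined with Proposition~\ref{thm:function-convergence-specified}, so the entire task reduces to proving
\begin{align*}
\Vert f_G - f_{\Gs}\Vert_{L^2(\mu)} \gtrsim \lone
\end{align*}
uniformly over $G \in \calm_N(\Theta)$. I would establish this inequality in the two-stage fashion standard in the Voronoi-loss literature: a local bound valid on a shrinking neighborhood of $\Gs$, followed by a compactness argument to extend it globally.

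\textbf{Local step.} For the local bound I would argue by contradiction: assume a sequence $\{G_n\} \subset \calm_N(\Theta)$ with $\lone(G_n, \Gs) \to 0$ but $\Vert f_{G_n} - f_{\Gs}\Vert_{L^2(\mu)} / \lone(G_n,\Gs) \to 0$. Along a subsequence, the atoms of $G_n$ in each Voronoi cell $\calAj$ collapse to $\theta_j^*$. Then I would decompose $f_{G_n}(x) - f_{\Gs}(x)$ cell by cell and Taylor-expand the joint gating--expert map
\begin{align*}
F(x;A,b,c,\eta) = \sigma(x^{\top}Ax + b^{\top}x + c)\cdot\cale(x,\eta)
\end{align*}
around each $\theta_j^*$. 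For an over-specified index $j \leq \bn$, where $(\Asi,\bsi) = (0_{d\times d}, 0_d)$, I would expand to \emph{second} order in $(A,b,\eta)$, keeping the sigmoid-gate mass $(1+\exp(-c_i))^{-1} - (1+\exp(-c_j^*))^{-1}$ as its own coefficient; the residual coefficients are exactly the squared-norm quantities in the first block of $\lone$. For an exactly-specified index $j > \bn$, a first-order expansion produces the linear differences $\Delta A_{ij}, \Delta b_{ij}, \Delta c_{ij}, \Delta \eta_{ij}$ that make up the second block of $\lone$.

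\textbf{Closing the contradiction, main obstacle, and global step.} After dividing the decomposition by $\lone(G_n,\Gs)$, every normalized coefficient lies in $[-1,1]$ and at least one has modulus $1$ by construction. Passing to a further subsequence and invoking Fatou yields a nontrivial limiting identity $\sum_k \xi_{*,k}\, H_k(x) = 0$ for $\mu$-a.e.~$x$, where the $H_k$ are precisely the derivative families enumerated in Definition~\ref{def:strong_identifiability} evaluated at the $\theta_j^*$. Strong identifiability then forces every $\xi_{*,k} = 0$, contradicting $\max_k |\xi_{*,k}| = 1$. The hard part will be the bookkeeping of the second-order expansion at the over-specified atoms: because $(\Asi,\bsi)=0$, many cross-derivatives of $\sigma(x^{\top}Ax + b^{\top}x + c)$ coincide through PDE-type collisions such as those displayed in equation~\eqref{eq:PDE}, and one must regroup the terms so that only the genuinely independent basis functions from the first family of Definition~\ref{def:strong_identifiability} appear. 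Strong identifiability is precisely what rules out the polynomial-expert degeneracy and lets the contradiction close. Finally, the global inequality follows by a compactness argument: on $\{G \in \calm_N(\Theta) : \lone(G,\Gs) \geq \varepsilon_0\}$, the ratio $\Vert f_G - f_{\Gs}\Vert_{L^2(\mu)}/\lone$ is bounded below by a positive constant from continuity of $G \mapsto f_G$ and the non-degeneracy of $\lone$ there. Combining both bounds with Proposition~\ref{thm:function-convergence-specified} gives $\lone(\hGn,\Gs) = \mathcal{O}_P(\sqrt{\log(n)/n})$.
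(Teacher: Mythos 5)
Your proposal is correct and follows essentially the same route as the paper: reduce to the uniform lower bound $\Vert f_G-f_{\Gs}\Vert_{L^2(\mu)}\gtrsim\lone$, prove it locally by contradiction via a cell-by-cell Taylor expansion (second order at over-specified atoms with the gate-mass difference kept as a separate coefficient, first order at exactly-specified ones), close the contradiction with Fatou's lemma and strong identifiability, and handle the global regime by compactness together with an identifiability argument. The only minor imprecision is the claim that after dividing by $\lone(G_n,\Gs)$ some coefficient has modulus exactly $1$; the paper instead shows by a short contradiction that not all normalized coefficients vanish and then renormalizes by their maximum, but this does not change the substance of the argument.
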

The proof of Theorem~\ref{thm:done_loss} is in Appendix \ref{proof:done_loss}. A few comments regarding the above result are in order.

\emph{(i) Parameter convergence rates:} From the construction of the Voronoi loss $\mathcal{L}_1$, the convergence rates for estimating the over-specified parameters $\Asi,\bsi,\etasi, i\in[\bn]$, are all of the same order $\mathcal{O}_P([{\log(n)}/{n}]^{\frac{1}{4}})$. On the other hand, those for the exactly-specified parameters $\Asi,\bsi,\etasi, \bn+1\leq i\leq \ns$ are faster, of the order $\mathcal{O}_P([{\log(n)}/{n}]^{\frac{1}{2}})$.

\emph{(ii) Expert convergence rates:} Since the expert function $\cale(\cdot,\eta)$ is twice differentiable w.r.t $\eta$ over a bounded domain, it is also a Lipschitz function w.r.t $\eta$. Therefore, by denoting $\hGn=\sum_{i=1}^{\widehat{N}_n}\frac{1}{1+\exp(-\hat{c}^n_i)}\delta_{(\widehat{A}^n_i,\widehat{b}^n_i,\widehat{\eta}^n_i)}$, we conclude that
\begin{align}
    \label{eq:expert-rates}
    \sup_{x}|\cale(x,\hetain)-\cale(x,\eta_j^*)|
    &\leq
    L_1
    \|\hetain-\eta^*_j \|,
\end{align}
for any $i\in\calAj(\hGn)$, where $L_1\geq 0$ is a Lipschitz constant. The above bound implies that the convergence rates for estimating the exactly-specified experts and over-specified experts are of orders $\mathcal{O}_P([{\log(n)}/{n}]^{\frac{1}{2}})$ and $\mathcal{O}_P([{\log(n)}/{n}]^{\frac{1}{4}})$, respectively. Thus, it takes the exactly-specified experts a polynomial number $\mathcal{O}(\epsilon^{-2})$ of data points to achieve an approximation error of $\epsilon$, while the over-specified experts need a polynomial number $\mathcal{O}(\epsilon^{-4})$ of data to achieve the same error.

\subsubsection{Polynomial Experts}
\label{sec:polynomial}
We now investigate polynomial experts of the form $ \cale(x,(\alpha, \beta)) = (\alpha^{\top}x + \beta)^{p}$, where $p\in\mathbb{N}$. As mentioned in the example paragraph following the Definition~\ref{def:strong_identifiability}, the polynomial experts do not meet the strong identifiability condition due to the linear dependence among the derivative of the function $F(x;A,b,c,\eta)$. For example, when $p=1$, such linear dependence is exhibited via the interaction among the gating parameters (see the first PDE in equation~\eqref{eq:PDE})
and the interaction between the gating parameters and the expert parameters (see the last two PDEs in equation~\eqref{eq:PDE}).
Those PDEs account for the non-strong identifiability of the polynomial experts. 
Notably, we will demonstrate in Theorem~\ref{thm:dtwor_loss_linear} that such parameter interactions lead to slow convergence rates for parameter estimation and expert estimation. Toward that goal, let us introduce the  Voronoi loss function 
\begin{align}
\label{eq:dtwor}
    \hspace{-0.5em}\ltwor:= 
    \sum_{j=1}^{\bn}
    \left|
    \sum_{i\in\calAj} \frac{1}{1+\exp(-c_i)}-\frac{1}{1+\exp(-c_j^*)}
    \right|
    \nonumber
    +\sum_{j=1}^{\bn}
    \sum_{i\in\calAj}
    \Big[
    \|\Delta A_{ij} \|^r
    +\|\Delta b_{ij} \|^r
    \\
   +\|\Delta \alpha_{ij} \|^r  +|\Delta \beta_{ij} |^r
    \Big]
    +\sum_{j=\bn+1}^{\ns}
    \sum_{i\in\calAj}
    \big[
    \|\Delta A_{ij} \|^r
    +\|\Delta b_{ij} \|^r
    +|\Delta c_{ij} |^r
    +\|\Delta \alpha_{ij} \|^r 
    +|\Delta \beta_{ij} |^r
    \big],
\end{align}
where we denote $\Delta \alpha_{ij}:=\alpha_i-\alpha^*_j$ and $\Delta\beta_{ij}:=\beta_i-\beta^*_j$.
\begin{theorem}
\label{thm:dtwor_loss_linear}
    Suppose that the expert function takes a polynomial form $\mathcal{E}(x,\alpha,\beta)=(\alpha^{\top}x+\beta)^p$, for some $p\in\bbN$.
    Then, for any $r\geq 1$,
    \begin{align*}
        \inf_{\tGn\in\calm_N(\Theta)}
        \sup_{G\in\calm_N(\Theta)\setminus \calm_{\ns-1}(\Theta)}
        \bbE_{f_G}
        [\mathcal{L}_{2,r}(\tGn,G)]
        \gtrsim
        \frac{1}{\sqrt{n}},
    \end{align*}
    where $\bbE_{f_G}$ indicates the expectation taken w.r.t. the product measure with $f^n_G$.
\end{theorem}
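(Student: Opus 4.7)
The strategy is to prove this lower bound via Le Cam's two-point method, following the template used in prior work on minimax lower bounds for Voronoi-type losses \cite{manole22refined,nguyen2024squares}. The problem reduces to exhibiting two mixing measures $G_0, G_1 \in \calm_N(\Theta) \setminus \calm_{\ns-1}(\Theta)$ such that (i) $\Vert f_{G_0} - f_{G_1} \Vert_{L^2(\mu)} \lesssim 1/\sqrt{n}$, which under the Gaussian regression model~\eqref{eq:regression-model} makes the Kullback--Leibler divergence $\kl(f_{G_0}^n \,\|\, f_{G_1}^n) = \frac{n}{2\nu}\Vert f_{G_0}-f_{G_1}\Vert_{L^2(\mu)}^2$ uniformly bounded, and (ii) $\mathcal{L}_{2,r}(G_0, G_1) \gtrsim 1/\sqrt{n}$ with a constant independent of $r \geq 1$. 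Combining a Le Cam--type inequality with Pinsker's inequality then yields the claim.

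For the construction, I would take $G_0 = \Gs$ (assumed in the sparse regime, so $A_1^* = 0_{d\times d}$ and $b_1^* = 0_d$) and build $G_1 = \tG_n$ by splitting the first atom of $\Gs$ into two distinct atoms near $\theta_1^* = (0_{d\times d}, 0_d, \alpha_1^*, \beta_1^*)$. Concretely, atom~$1$ of $\tG_n$ coincides with $\theta_1^*$ but carries a shifted gating bias $c_1^n$, while atom~$2$ is a perturbation of $\theta_1^*$ of size $\tilde{\epsilon}_n := n^{-1/2}$ in a fixed nonzero direction $(A_0, b_0, \alpha_0, \beta_0)$ with gating bias $c_2^n$; the remaining $\ns - 1$ atoms are inherited from $\Gs$. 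The biases $c_1^n, c_2^n$ are chosen in a bounded subset of $\mathbb{R}$ so that $\sigma(c_1^n) + \sigma(c_2^n) - \sigma(c_1^*) = \delta_n$ with $\delta_n := n^{-1/2}$, which is feasible for $n$ large because $\delta_n \to 0$. A first-order Taylor expansion of the contribution of atom~$2$ about $\theta_1^*$ then gives
\begin{align*}
f_{\tG_n}(x) - f_{\Gs}(x) = \delta_n \, \mathcal{E}(x, (\alpha_1^*, \beta_1^*)) + \tilde{\epsilon}_n \, R_n(x),
\end{align*}
where $R_n(x)$ is a polynomial in $x$ with uniformly bounded $L^2(\mu)$-norm; with $\delta_n, \tilde{\epsilon}_n \asymp n^{-1/2}$, property~(i) follows. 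Both perturbed atoms of $\tG_n$ are assigned to the Voronoi cell $\calA_1$ of $\theta_1^*$ once $\tilde{\epsilon}_n$ is small enough, so the over-specified sigmoid-weight summand of $\mathcal{L}_{2,r}(\tG_n, \Gs)$ equals exactly $|\delta_n| = n^{-1/2}$; crucially, this summand does not carry any $r$-th power, which secures property~(ii) uniformly in $r \geq 1$.

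The step I expect to be the most delicate is controlling the $L^2(\mu)$-norm of the remainder $R_n$: for polynomial experts $\mathcal{E}(x,(\alpha,\beta)) = (\alpha^\top x + \beta)^p$, $R_n$ is a polynomial in $x$ of degree up to $p + 2$, whose $L^2(\mu)$-norm requires moment bounds on $\mu$ of order roughly $2(p+2)$; this is routine but must be checked carefully. Secondary verifications are that the Voronoi assignment is valid for all sufficiently small $\tilde{\epsilon}_n$ (immediate, since distinct atoms of $\Gs$ are separated by a positive distance) and that $\tG_n \in \calm_N(\Theta) \setminus \calm_{\ns-1}(\Theta)$ (immediate since $\ns + 1 \leq N$ by hypothesis). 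The conceptual observation underlying the construction is that the sigmoid-weight term in $\mathcal{L}_{2,r}$ is free of any $r$-dependence, so it alone furnishes the $r$-uniform contribution of order $1/\sqrt n$ even though the $r$-th-power parameter-discrepancy terms $\tilde{\epsilon}_n^r$ become negligible for large $r$.
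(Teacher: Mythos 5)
Your proposal follows the same Le Cam two--point skeleton as the paper's proof, and for the displayed inequality it does go through: your pair $(\Gs,\tG_n)$ satisfies $\call_{2,r}(\tG_n,\Gs)\asymp n^{-1/2}$, driven by the weight summand $\bigl|\sigma(c_1^n)+\sigma(c_2^n)-\sigma(c_1^*)\bigr|=n^{-1/2}$ which indeed carries no power of $r$, while $\Vert f_{\tG_n}-f_{\Gs}\Vert_{L^2(\mu)}\lesssim n^{-1/2}$ keeps the KL divergence of the product measures bounded. (Like the paper, you lean on the weak triangle inequality for the asymmetric loss $\call_{2,r}$ without verifying it, and the remainder control is routine on the bounded input domain the paper works with.)

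The substantive issue is where the separation in the loss comes from. In your construction it comes entirely from the mixing--weight term; the parameter discrepancies contribute only $\tilde{\epsilon}_n^{\,r}=n^{-r/2}$ and are negligible. Nothing in your argument uses the polynomial form of the experts: the identical construction proves the identical bound for strongly identifiable (e.g.\ ReLU) experts, for which the paper establishes a matching $\sqrt{\log(n)/n}$ upper bound, so your $n^{-1/2}$ is just the parametric rate for the weights. The paper's construction (proof of Lemma~\ref{lemma:dtwor_loss}) instead exploits the degeneracy specific to polynomial experts: it takes an atom with $\alpha_1^*=0_d$ and splits $\beta_1^*$ symmetrically into $\beta_1^*\pm 1/m$ so that the first-order contributions cancel, making the function distance $O(m^{-(r+1)})$ while the loss is $\asymp m^{-r}$ and is dominated by the \emph{parameter} term $|\Delta\beta_{ij}|^r$. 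That feature --- statistically indistinguishable hypotheses whose expert parameters differ by $m^{-1}\asymp n^{-1/(2r)}$ --- is what licenses the paper's reading of the theorem (parameter and expert rates slower than $n^{-1/(2r)}$ for every $r$, hence the $\mathcal{O}(\exp(\epsilon^{-1/\tau}))$ sample complexity in Table~\ref{table:sample-complexity}). Your proof establishes the stated inequality but not that conclusion; you should redesign the perturbation so that the loss separation is carried by the $r$-th-power parameter terms rather than by the weight term.
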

The proof of Theorem \ref{thm:dtwor_loss_linear} is in Appendix \ref{proof:dtwor_loss_linear}. Below we highlight some important implications of the above result.

\emph{(i) Parameter convergence rates:} The minimax lower bound in Theorem~\ref{thm:dtwor_loss_linear} implies that the convergence rates for estimating parameters $A^*_j,b^*_j,\alpha^*_j,\beta^*_j$ are slower than any polynomial rates $\mathcal{O}_P(n^{-1/2r})$ for any $r\geq 1$, potentially as slow as $\mathcal{O}_P(1/\log^{\tau}(n))$, for some constant $\tau>0$.

\emph{(ii) Expert convergence rates:} Following the parameter convergence rates and using the same arguments as in equation~\eqref{eq:expert-rates}, we deduce that the convergence rates for estimating experts could also be as slow as $\mathcal{O}_P(1/\log^{\tau}(n))$, for some constant $\tau>0$. Therefore, it may require the experts an exponential number of data $\mathcal{O}(\exp(\epsilon^{-1/\tau}))$ to obtain the approximation error $\epsilon$.


\textbf{Sample complexity comparison under the sparse regime:} According to the results in \cite{akbarian2024quadratic}, estimating the experts in the softmax self-attention share the same sample complexity as estimating those in the sigmoid version. In particular, strongly identifiable experts and polynomial experts need $\mathcal{O}(\epsilon^{-4})$ and $\mathcal{O}(\exp(\epsilon^{-1/\tau}))$ data points to achieve the approximation error $\epsilon$. Thus, we claim that the sigmoid self-attention is as sample-efficient as the softmax self-attention under the sparse regime of gating parameters. 
Note that it is unlikely for all gating parameters to vanish in practice, as typical initialization schemes, training dynamics, or architectural designs (e.g., ReLU activations, residual connections) generally ensure that all gating values (mixture weights) are input-dependent. As a result, the model rarely enters an extremely sparse regime. That is, the sparse regime is less common than the dense regime, which will be studied in the next section.



\subsection{Dense Regime of Gating Parameters}
\label{sec:dense_regime}
We now turn our attention to the dense regime where we assume that there exists some over-specified gating parameter  different from zero, that is, $(\Asi,\bsi)\neq(0_{d\times d}, 0_d)$, for some $i\in[\ns]$.
As demonstrated in Section~\ref{sec:problem-setup}, the smallest distance between the regression function estimator $f_{\hGn}$ and the set of the regression functions $f_{\lG}$ goes to 0 where 
$\lG\in\lcalm_N(\Theta):=\argmin_{G\in\calm_N(\Theta)\setminus\calm_{\ns}(\Theta)}
\|
f_G-f_{\Gs}
\|_{L^2(\mu)}$.
Without loss of generality (WLOG), we assume that 
\begin{align*}
    \lG:=\sum_{i=1}^N
    \frac{1}{1+\exp(-\Bar{c}_i)}
    \delta_{(\Bar{A}_i,\Bar{b}_i,\Bar{\eta}_i)}.
\end{align*}
Similar to the sparse regime of gating parameters, we also establish an identifiability condition on the expert function to avoid any interaction among the parameters expressed via PDEs under the dense regime. Since the expert function is required to satisfy only a subset of the strong identifiability condition in Definition~\ref{def:strong_identifiability} in this case, we refer to the condition as \emph{weak identifiability}.
\begin{definition}[Weak identifiability]
\label{def:weak_identifiability}
    We call an expert function $x\mapsto\cale(x,\eta)$ \emph{weakly identifiable} if it is differentiable w.r.t its parameter $\eta$ for $\mu$-almost all $x$ and, for any positive integer $\ell$
    and any distinct parameters 
    $\left\{ (A_i, b_i,c_i,\eta_i) \right\}_{i=1}^{\ell}$, 
    the functions in the family
    \begin{align*}
        \hspace{-0.5em}\left\{  
        \frac{\partial
        F}
        {\partial A^{\tau_1}\partial b^{\tau_2}\partial c^{\tau_3}\partial\eta^{\tau_4}}
    (x,A_i,b_i,c_i,\eta_i):
    i\in[\ell],
    \sum_{j=1}^4|\tau_j|
    =1
        \right\}
    \end{align*}
    are linearly independent, for $\mu$-almost all $x$,
    where 
    $(\tau_1,\tau_2,\tau_3,\tau_4)\in\bbN^{d\times d}\times\bbN^d\times\bbN\times\bbN^q$.
\end{definition}
\textbf{Examples.} Strongly identifiable experts meet the weak identifiability condition. For instance, it can be verified that the previously mentioned two-layer neural networks of the form $\mathcal{E}(x,(\alpha,\beta,\lambda))=\lambda\phi(\alpha^{\top}x+\beta)$, where $\phi$ is $\relu$ or $\gelu$ function, $\alpha\neq0_d$, and $\lambda\neq 0$ are weakly identifiable. Moreover, polynomial experts $\mathcal{E}(x,(\alpha,\beta))=(\alpha^{\top}x+\beta)^p$, for $p\in\mathbb{N}$, also satisfy the weak identifiability condition although they are not strongly identifiable. 


In Theorem~\ref{thm:dthree_loss} below, we provide convergence rates 
for weakly identifiable experts based on the Voronoi loss 
\begin{align*}
    \lthree :=
    \sum_{j=1}^N
    \sum_{i\in\calAj}
    \big[
    \|A_i-\bai\|
    +\|b_i-\bbi\|
    +|c_i-\bci|
    +\|\eta_i-\betai\|
    \big].
\end{align*}
\begin{theorem}
\label{thm:dthree_loss}
    If the function $x\mapsto\cale(x,\eta)$ is weakly identifiable, then the lower bound $ \inf_{\lG\in\lcalm_N(\Theta)}
        \Vert f_{G}-f_{\lG}\Vert_{L^2(\mu)}
        \gtrsim
        \lthree$ holds true for any mixing measure $G\in\calm_N(\Theta)$.
As a consequence,  
\begin{align*}
    \inf_{\lG\in\lcalm_N(\Theta)}
    \call_3(\hGn,\lG)=
        \mathcal{O}_P(\sqrt{\log(n)/n}).
\end{align*}
\end{theorem}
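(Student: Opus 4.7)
The plan has two components. First, once the deterministic lower bound $\Vert f_G - f_{\lG}\Vert_{L^2(\mu)}\gtrsim\lthree$ has been established for every $G\in\calm_N(\Theta)$ and every $\lG\in\lcalm_N(\Theta)$ with a uniform constant, the stochastic consequence is routine: substituting $G=\hGn$ and taking infimum over $\lG$ yields $\inf_{\lG\in\lcalm_N(\Theta)}\call_3(\hGn,\lG)\lesssim\inf_{\lG\in\lcalm_N(\Theta)}\Vert f_{\hGn}-f_{\lG}\Vert_{L^2(\mu)}$, and the right-hand side is $\mathcal{O}_P(\sqrt{\log(n)/n})$ by Corollary~\ref{thm:function-convergence-misspecified}. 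The substantive work is therefore the deterministic lower bound.

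For that lower bound, my strategy is a contradiction/compactness argument in the spirit of identifiability proofs for mixture models. Suppose the inequality fails; then there exist sequences $G_n\in\calm_N(\Theta)$ and $\lG_n\in\lcalm_N(\Theta)$ with $\Vert f_{G_n}-f_{\lG_n}\Vert_{L^2(\mu)}/\call_3(G_n,\lG_n)\to 0$. By compactness of $\Theta$ and passage to a subsequence, I can assume that the atoms of both $G_n$ and $\lG_n$ converge coordinate-wise and that $\call_3(G_n,\lG_n)\to 0$, which in turn forces the atoms of $G_n$ inside each Voronoi cell $\calA_j(\lG_n)$ to converge to the $j$-th atom of the common limiting measure.

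The key structural advantage of the dense regime is that $\lG_n$ already carries $N>\ns$ atoms, so no gating parameter has to vanish in order to approximate $f_{\Gs}$. Consequently, the parameter differences $A_i-\bai$, $b_i-\bbi$, $c_i-\bci$, $\eta_i-\betai$ entering $\lthree$ are all balanced at the same order, and only a first-order Taylor expansion of $F(x;A,b,c,\eta):=\sigma(x^\top A x + b^\top x + c)\cdot\cale(x,\eta)$ around the limiting parameters is required. Expanding $f_{G_n}(x)-f_{\lG_n}(x)$ as a linear combination of first-order partial derivatives $\partial F/\partial A^{\tau_1}\partial b^{\tau_2}\partial c^{\tau_3}\partial\eta^{\tau_4}$ with $\sum_j|\tau_j|=1$, dividing through by $\call_3(G_n,\lG_n)$, and letting $n\to\infty$, the weak identifiability of $\cale$ (Definition~\ref{def:weak_identifiability}) forces every normalized coefficient to converge to zero. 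This contradicts the fact that the normalized coefficients, by the construction of $\call_3$, satisfy a nontrivial identity, completing the proof.

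The main obstacle I anticipate is the Voronoi bookkeeping: carefully grouping the summands so that the Taylor remainders are genuinely $o(\call_3(G_n,\lG_n))$ after integrating against $\mu$, and verifying that the limit of any convergent subsequence of $\lG_n$ still lies in $\lcalm_N(\Theta)$, so that the linear relation extracted in the limit is an admissible object on which weak identifiability can be invoked. Compared with the sparse-regime proof of Theorem~\ref{thm:done_loss}, the dense regime is technically lighter precisely because first-order expansion suffices and we avoid the nonlinear systems of PDEs that arise from collapsing gating parameters.
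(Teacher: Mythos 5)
Your proposal is correct and follows essentially the same route as the paper: reduce to the deterministic bound and invoke Corollary~\ref{thm:function-convergence-misspecified}, then prove the bound by contradiction via a first-order Taylor expansion of $F$ (justified because in the dense regime every Voronoi cell is a singleton), normalize by $\mathcal{L}_3$, pass to the limit (the paper does this via Fatou's lemma), and use weak identifiability to force all limiting coefficients to vanish, contradicting the normalization identity. The only point you gloss over — that one may assume $\mathcal{L}_3(G_n,\lG_n)\to 0$, which otherwise requires the separate ``global'' identifiability argument — is likewise omitted in the paper's own proof by reference to the sparse-regime case.
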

The proof of Theorem \ref{thm:dthree_loss} is in Appendix \ref{proof:dthree_loss}. Since the convergence rates for the parameter estimators $\hat{\eta}^n_i$ are of order $\mathcal{O}_P([\log(n)/n]^{\frac{1}{2}})$, the weakly identifiable expert estimators $\mathcal{E}(x,\hat{\eta}^n_i)$ also admit the same convergence rates, as indicated by the inequality~\eqref{eq:expert-rates}. Therefore, it takes those experts only a polynomial number of samples
$\mathcal{O}(\epsilon^{-2})$ to achieve an approximation error of $\epsilon$.

\textbf{Sample complexity comparison under the dense regime:} Recall that it costs strongly identifiable experts and polynomial experts in the softmax self-attention $\mathcal{O}(\epsilon^{-4})$ and $\mathcal{O}(\exp(\epsilon^{-1/\tau}))$ data points to reach the approximation error $\epsilon$, respectively \cite{akbarian2024quadratic}. On the other hand, as those experts satisfy the weak identifiability condition, they need only a sample size of order $\mathcal{O}(\epsilon^{-2})$ to achieve the same error under the dense regime of sigmoid self-attention. For that reason, we claim that the sigmoid self-attention is more sample efficient than its softmax counterpart under the dense regime, which is more likely to occur in practice than the sparse regime.

\section{Numerical Experiments}
\label{sec:experiments}

\begin{figure*}[!t]
    \centering
    \subfloat[\textbf{MoE with ReLU Experts}\label{fig:relu-experts}]{
        \includegraphics[width=0.75\textwidth]{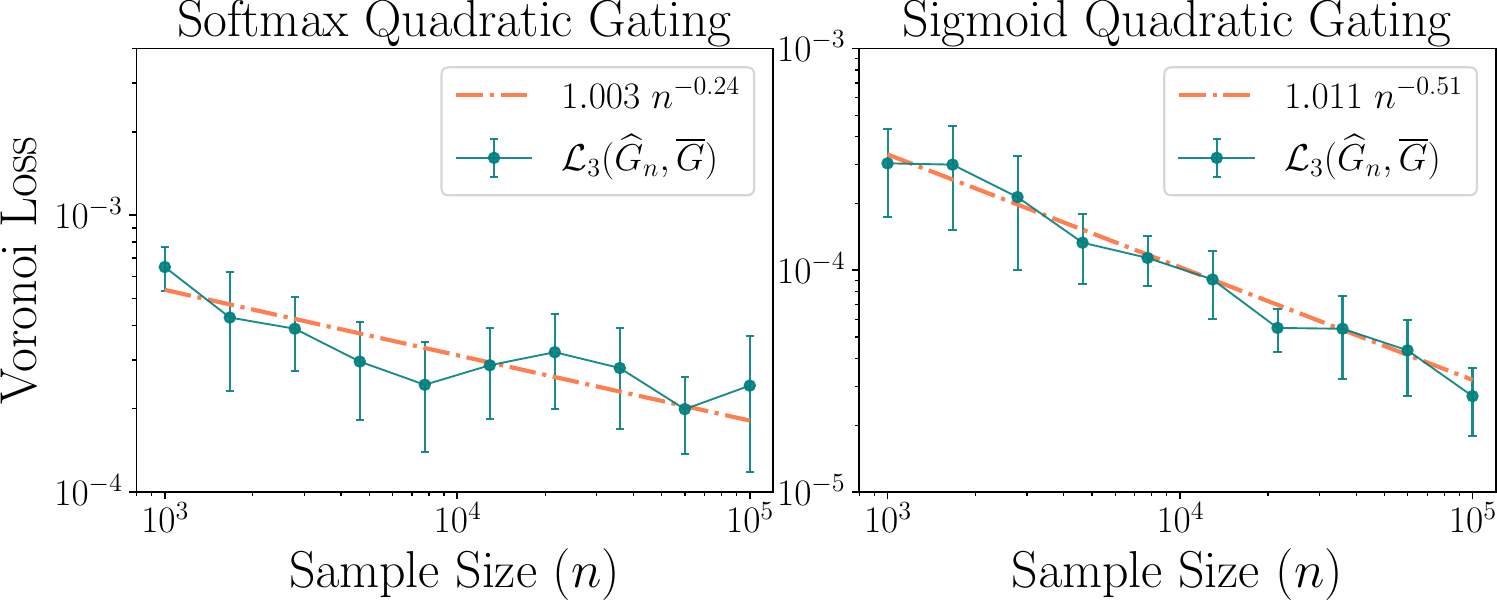}
    }
    \vspace{0.5em}
    \subfloat[\textbf{MoE with Linear Experts}\label{fig:linear-experts}]{
        \includegraphics[width=0.75\textwidth]{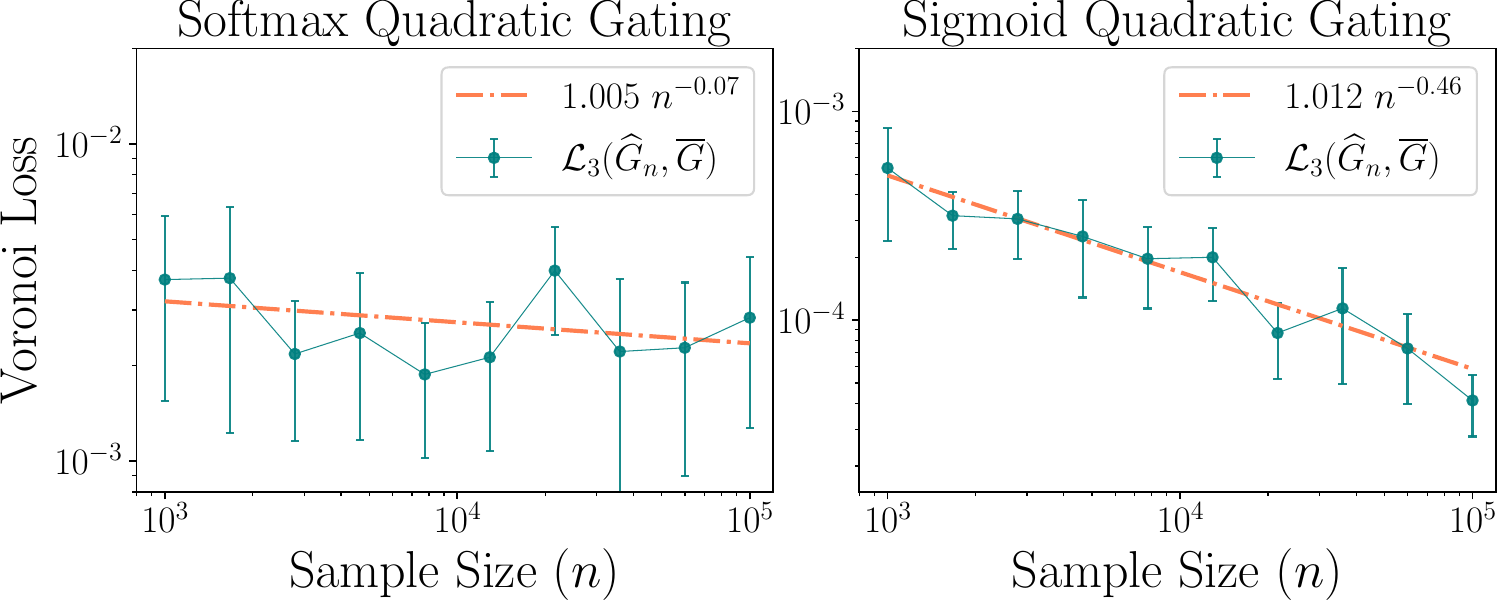}
    }
    \vspace{0.5em}
    \caption{Log-log plots of the convergence rates of Voronoi losses for softmax and sigmoid quadratic gating MoE models. 
    \ref{fig:relu-experts} Comparison between softmax quadratic gating and sigmoid quadratic gating MoE with ReLU experts. 
    \ref{fig:linear-experts} Comparison between softmax quadratic gating and sigmoid quadratic gating with linear experts. 
    Each plot illustrates the empirical Voronoi loss convergence rates, with solid lines representing the Voronoi losses and dash-dotted lines showing fitted trends.}
    \label{fig:sigmoid-vs-softmax}
    \vspace{-1em}
\end{figure*}

Below, we present some numerical experiments confirming  our theoretical findings that on the superior sample complexity of sigmoid attention versus softmax attention. 
In particular, we capture the empirical convergence rates of parameter estimation in sigmoid quadratic gating and softmax quadratic gating MoE models with two expert configurations: expert networks with ReLU activation (ReLU experts) and those with linear activation (linear experts). 

\textbf{Setup.} We generated ynthetic data from a softmax quadratic gating MoE model. 
Both sigmoid quadratic gating MoE and softmax quadratic gating MoE models were fitted, with varying sample sizes. The empirical convergence rates are then evaluated for each model across the two expert configurations (ReLU experts and linear experts), providing insights into sigmoid quadratic gating performance. 
The details about the values of the ground-truth parameters and the training procedure are presented in Appendix~\ref{appendix:experimental-details}.

\textbf{Results.}
Figure~\ref{fig:sigmoid-vs-softmax} shows the empirical convergence rates of Voronoi loss for sigmoid and softmax quadratic gating MoE models, with error bars representing three standard deviations to account for variability across runs. In Figure~\ref{fig:relu-experts} for MoE models with ReLU experts, we observe that the softmax quadratic gating MoE converges at the rate of order $\mathcal{O}(n^{-0.24})$, whereas the sigmoid version achieves a significantly faster rate of order $\mathcal{O}(n^{-0.51})$. Similarly, in Figure~\ref{fig:linear-experts} for MoE models with linear experts, the sigmoid quadratic gating MoE attains a convergence rate of order $\mathcal{O}(n^{-0.46})$, while the softmax counterpart exhibits a significantly slower convergence rate of order $\mathcal{O}(n^{-0.07})$. These results empirically validate our theoretical findings that sigmoid quadratic gating MoE admits faster parameter estimation rates across both the configurations of ReLU experts and linear experts. 

\section{Conclusion}
\label{sec:conclusion}
In this paper, we establish a mathematical connection between the self-attention mechanism in the Transformer architecture and the Mixture-of-Experts (MoE) model.
Withing the MoE framework, we investigate the sample complexity of the sigmoid self-attention by conducting a convergence analysis of parameter and expert estimation under the sigmoid gating MoE models with fully and partially quadratic affinity score functions in both the sparse and dense regimes for the gating parameters.
Our results show that sigmoid self-attention has a higher sample complexity than the softmax version in the more common dense regime, a finding further confirmed by our simulations. 
A limitation of our work is that we consider only a single attention head rather than a more popular multi-head attention mechanism \cite{vaswani2017attention}. However, we believe this problem can be overcome by formulating the multi-head attention as a hierarchical MoE \cite{Jordan-1994,nguyen2024hmoe}, which we leave to future work.




\vspace{1cm}
\appendix

\begin{center}
{}\textbf{\Large{Supplement to
``Sigmoid Self-Attention has Lower Sample Complexity than Softmax Self-Attention: 
\\ \vspace{2mm}
A Mixture-of-Experts Perspective''}}
\end{center}

The supplementary material is structured as follows: Appendix \ref{appsec:related-works} provides a discussion of related works on the self-attention mechanism and Mixture of Experts (MoE) models. Appendix \ref{appsec:proof} contains detailed proofs of the results presented in Sections \ref{sec:problem-setup} and \ref{sec:sample-efficiency}. In Appendix \ref{appsec:additional-results}, we investigate the sample complexity of sigmoid self-attention under the partially quadratic score for the gating mechanism of the MoE model, accompanied by proofs of the relevant results. Finally, Appendix \ref{appsec:additional-experiments} presents additional experimental results, while Appendix \ref{appendix:experimental-details} provides the experimental details.  



\section{Related Works}
\label{appsec:related-works}

Attention mechanisms have become a cornerstone of the Transformer architecture. In its standard formulation, attention weights are calculated as the softmax of the dot products between keys and queries \cite{vaswani2017attention}. More recently, Ramapuram et al. \cite{ramapuram2024sigmoidattention} established that Transformers with sigmoid-based attention are universal function approximators and exhibit enhanced regularity compared to those using softmax attention. 
Beyond their fundamental role in Transformers, attention mechanisms have been closely linked to Mixture of Experts (MoE) architectures. Csordas et al. \cite{csordas2023switchhead} introduces SwitchHead, an MoE-based attention method designed to reduce both the computation and storage requirements for attention matrices. Wu et al.  
\cite{wu2024multi} proposes Multi-Head MoE (MH-MoE), where each parallel layer contains a set of \(N\) experts decoupled from the head in multi-head self-attention, focusing on scalability and architectural improvements. Jin et al.  
\cite{jin2024moh} demonstrates that multi-head attention can be written in summation form, motivating the Mixture-of-Head (MoH) architecture, which treats each attention head as an MoE expert. This design boosts inference efficiency without compromising accuracy or increasing parameter counts. 
Le et al.
Furthermore, Akbarian et al. \cite{akbarian2024quadratic} examined the connection between softmax self-attention and softmax quadratic gating MoE, showing that MoE models with softmax quadratic gating outperform their counterparts using traditional softmax linear gating \cite{nguyen2024squares}.


Another line of research has also explored convergence rates for expert estimation in Gaussian Mixture of Experts models. 
First, Mendes et al. \cite{mendes2011convergence} examined maximum likelihood estimation for Mixture of Experts models with polynomial regression experts. They analyzed how quickly the estimated density converges to the true density under the Kullback–Leibler (KL) divergence and offered insights on selecting the appropriate number of experts. Ho et al.
\cite{ho2022gaussian} derived convergence rates for parameter estimation by utilizing a connection between the algebraic independence of expert functions and model parameters, which they formulated through a class of partial differential equations (PDEs). Building on this foundation, Nguyen et al. \cite{nguyen2023demystifying} investigated Gaussian MoE models with softmax gating, uncovering that expert estimation rates are influenced by the solvability of a system of polynomial equations resulting from the interplay between gating and expert parameters.
Yan et al. \cite{yan2025contaminated} analyzed the convergence of parameter estimation in a contaminated Gaussian mixture of experts, where a pre-trained Gaussian expert is mixed with a prompt Gaussian expert by an unknown proportion. Their approach leverages the algebraic interaction between the pre-trained model’s parameters and the prompt.
For Gaussian MoE models incorporating Top-K sparse softmax gates, Nguyen et al. \cite{nguyen2024statistical} analyzed convergence rates using novel loss functions specifically designed for sparse gating mechanisms. 


\section{Proof of the Results in Sections \ref{sec:problem-setup} and \ref{sec:sample-efficiency}}
\label{appsec:proof}

\subsection{Proof of Proposition \ref{thm:function-convergence-specified} }
\label{appendix:proof_function_convergence}


First we will introduce some necessary notations used throughout this appendix.
We let $\calf_N(\Theta):=\{f_G(x):G\in\calm_N(\Theta) \}$ 
to be the set of all regression functions in $\calm_N(\Theta)$.
Then we consider the intersection  between $L^2(\mu)$-ball centered around the regression function $f_{\Gs}(x)$ with radius $\delta>0$ and the set $\calf_N(\Theta)$ and define as
\begin{align*}
    \calf_N(\Theta,\delta):=
    \left\{
    f\in\calf_N(\Theta):
    \|f-f_{\Gs} \|_{L^2(\mu)}\leq \delta
    \right\}.
\end{align*}
We assess the complexity of the above class using the bracketing entropy integral in \cite{Vandegeer-2000},
\begin{align}
\label{def:bracketing-entropy-integral}
    \calj_B(\delta,\calf_N(\Theta,\delta)):=
    \int^{\delta}_{\delta^2/2^{13}}
    H_B^{1/2}(t,\calf_N(\Theta,t),\|\cdot\|_{L^2(\mu)})dt\vee\delta,
\end{align}
where $ H_B(t,\calf_N(\Theta,t),\|\cdot\|_{L^2(\mu)})$ represents the bracketing entropy \cite{Vandegeer-2000} of $\calf_N(\Theta,t)$ under the ${L^2(\mu)}$-norm, and $t\vee\delta:=\max\{t,\delta\}$.
Using similar arguments as in Theorems 7.4 and 9.2 of \cite{Vandegeer-2000}, with the notation adapted to our context, we derive the following lemma:
\begin{lemma}
\label{app_lemma:prop_bound}
Let $\Psi(\delta)\geq\calj_B(\delta,,\calf_N(\Theta,\delta))$
such that $\Psi(\delta)/\delta^2$ 
is a non-increasing function of $\delta$.
Then, for some universal constant $c$ and for some sequence $(\delta_n)$ that satisfied $\sqrt{n}\delta^2_n\geq c\Psi(\delta_n)$,
the following holds for any $\delta\geq\delta_n$:
    \begin{align*}
        \mathbb{P}
        \left(
        \| 
f_{\hGn}-f_{\Gs}
        \|_{L^2(\mu)>\delta} \right)
        \le
        c\exp\left(-\frac{n\delta^2}{c^2} \right).
    \end{align*}
\end{lemma}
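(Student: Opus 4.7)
The plan is to follow the standard empirical-process template for least-squares M-estimation developed in \cite{Vandegeer-2000}, specifically combining the chaining argument behind Theorem 7.4 with the peeling argument behind Theorem 9.2. The starting point is the \emph{basic inequality} for the least-squares estimator: since $\hGn$ minimizes the empirical squared loss over $\calm_N(\Theta)$, substituting the generative form $Y_i = f_{\Gs}(X_i) + \varepsilon_i$ from \eqref{eq:regression-model} into the LSE definition \eqref{eq:lse} and rearranging yields
\[
\frac{1}{n}\sum_{i=1}^n \bigl(f_{\hGn}(X_i) - f_{\Gs}(X_i)\bigr)^2 \;\leq\; \frac{2}{n}\sum_{i=1}^n \varepsilon_i\,\bigl(f_{\hGn}(X_i) - f_{\Gs}(X_i)\bigr).
\]
This identifies the task as bounding the supremum of the Gaussian-weighted empirical process $\nu_n(f) := n^{-1}\sum_{i=1}^n \varepsilon_i(f(X_i) - f_{\Gs}(X_i))$ over the slice $\calf_N(\Theta,\delta)$ and then transferring the empirical $L^2$-norm back to $\|\cdot\|_{L^2(\mu)}$.

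The execution proceeds in three steps. First, I would use chaining with brackets to control $\sup_{f \in \calf_N(\Theta,\delta)}|\nu_n(f)|$: build $L^2(\mu)$-brackets at geometrically decreasing scales, upper-bound the increments at each scale via Bernstein-type tail inequalities that exploit the sub-Gaussian character of $\varepsilon_i$, and sum over scales to obtain a deviation bound driven by $\calj_B(\delta,\calf_N(\Theta,\delta))$. Second, I would invoke the assumption $\Psi(\delta) \geq \calj_B(\delta,\calf_N(\Theta,\delta))$ together with $\sqrt{n}\delta_n^2 \geq c\,\Psi(\delta_n)$ to convert the chaining bound into the target sub-Gaussian tail $c\exp(-n\delta^2/c^2)$ at any fixed scale $\delta \geq \delta_n$. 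Third, I would apply a peeling argument: decompose the event $\{\|f_{\hGn}-f_{\Gs}\|_{L^2(\mu)} > \delta\}$ into dyadic annuli of the form $\{2^{s-1}\delta < \|f_{\hGn}-f_{\Gs}\|_{L^2(\mu)} \leq 2^s\delta\}$, apply the chaining bound on each annulus, and combine via a union bound. The monotonicity hypothesis that $\Psi(\delta)/\delta^2$ is non-increasing is precisely what guarantees that the per-annulus tail bounds remain of the form $c\exp(-n(2^s\delta)^2/c^2)$ and therefore sum geometrically.

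The main obstacle lies in making the peeling step uniform across $\delta \geq \delta_n$ while preserving a single clean exponential tail. Naive telescoping over scales can introduce spurious polynomial factors in $1/\delta$ from the bracketing entropy lower cutoff $\delta^2/2^{13}$ appearing in \eqref{def:bracketing-entropy-integral}, and these have to be absorbed into the universal constant $c$ using the non-increasing property of $\Psi(\delta)/\delta^2$. A secondary technical point is that the chaining bound natively controls the empirical $L^2$-norm, so one must localize to the event on which this norm is comparable to the $L^2(\mu)$-norm; this is handled by producing a uniform envelope for $\calf_N(\Theta)$, which follows from the boundedness of the sigmoid gating weights and the compactness of the parameter space $\Theta$. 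Once these pieces are assembled, the conclusion reproduces Theorems~7.4 and 9.2 of \cite{Vandegeer-2000} in the present notation.
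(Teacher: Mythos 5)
Your proposal is correct and follows essentially the same route as the paper, which does not prove this lemma in detail but simply invokes Theorems 7.4 and 9.2 of van de Geer (2000); your sketch is precisely the standard argument behind those results (basic inequality, bracketing chaining for the Gaussian-weighted empirical process, and peeling made summable by the monotonicity of $\Psi(\delta)/\delta^2$). You correctly identify the role of each hypothesis, including the empirical-versus-population norm comparison that the uniform envelope handles.
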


\textbf{Outline of Proof.} To prove the result, it suffices to establish the following bound for the bracketing entropy $ H_B(\cdot) $ of the function class $ \mathcal{F}_N(\Theta) $:  
\begin{equation} \label{eq:entropy_bound}
H_B(\epsilon, \mathcal{F}_N(\Theta), \|\cdot\|_{L^2(\mu)}) \lesssim \log(1/\epsilon), \quad \forall    \epsilon \in (0, 1/2].
\end{equation}

Using equation~\eqref{eq:entropy_bound}, the integral term $ \mathcal{J}_B(\delta, \mathcal{F}_N(\Theta, \delta)) $ can be bounded as:  
\begin{align}
\label{eq:entropy_integral_bound}
\mathcal{J}_B(\delta, \mathcal{F}_N(\Theta, \delta)) = \int_{\delta^{2}/2^{13}}^\delta H_B^{1/2}(t, \mathcal{F}_N(\Theta, t), \|\cdot\|_{L^2(\mu)})    dt\vee\delta \lesssim \int_{\delta^{2}/2^{13}}^\delta \sqrt{\log(1/t)}    dt\vee\delta.
\end{align}
This integral evaluates to:  
\begin{align*}
\mathcal{J}_B(\delta, \mathcal{F}_N(\Theta, \delta)) \lesssim \delta \cdot \log(1/\delta)^{1/2}.
\end{align*}

To ensure $ \Psi(\delta) \geq \mathcal{J}_B(\delta, \mathcal{F}_N(\Theta, \delta)) $, we define $ \Psi(\delta) := \delta \cdot \log(1/\delta)^{1/2} $, which satisfies the condition that $ \Psi(\delta)/\delta^2 $ is a non-increasing function.

\textbf{Choice of Sequence $ (\delta_n) $.} Set $ \delta_n := \sqrt{\log(n)/n} $. By construction, this sequence satisfies $ \sqrt{n}\delta_n^2 \geq c \Psi(\delta_n) $ for some universal constant $ c $. Substituting $ \delta = \delta_n $ into Lemma \ref{app_lemma:prop_bound} yields the desired probability bound.

\textbf{Conclusion.} By applying Lemma \ref{app_lemma:prop_bound}, we reduce the problem to verifying the entropy bound \eqref{eq:entropy_bound}, which will be established now:

\subsection*{Proof of Inequality \eqref{eq:entropy_bound}:}
\begin{proof}
To prove inequality \eqref{eq:entropy_bound}, 
we begin by noting that the expert functions are bounded, implying  
\begin{align*}
|f_G(x)| \leq M \quad \text{for almost every } x,
\end{align*}
where $ M > 0 $ is a constant. 

\textbf{Step 1: Covering the Function Class.}  
Let $ \tau \leq \epsilon $, and consider a $ \tau $-cover $ \{\zeta_1, \dots, \zeta_{V} \} $ of the function class $ \mathcal{F}_N(\Theta) $ under the $ L^2(\mu) $-norm. The covering number $ V := V(\tau, \mathcal{F}_N(\Theta), \|\cdot\|_{L^\infty}) $ represents the $ \tau $-covering number of the metric space $ \mathcal{F}_N(\Theta) $ with the $ L^\infty $-norm.  

We construct brackets of the form $ [L_i(x), U_i(x)] $ for $ i \in [V] $, where  
\begin{align*}
L_i(x) := \max\{\zeta_i(x) - \tau, 0\}, \quad U_i(x) := \max\{\zeta_i(x) + \tau, M\}.
\end{align*}

\textbf{Step 2: Properties of the Brackets.}  
It follows that for all $ i \in [V] $:  
\begin{itemize}
    \item $ \mathcal{F}_N(\Theta) \subseteq \bigcup_{i=1}^V [L_i(x), U_i(x)] $,
    \item $ U_i(x) - L_i(x) \leq \min\{2\tau, M\} $ .
\end{itemize}

Thus, the bracket size satisfies  
\begin{align*}
\|U_i - L_i\|_{L^2(\mu)}^2 = \int (U_i - L_i)^2    d\mu(x) \leq \int 4\tau^2    d\mu(x) = 4\tau^2.
\end{align*}  
This implies  
\begin{align*}
\|U_i - L_i\|_{L^2(\mu)} \leq 2\tau.
\end{align*}

\textbf{Step 3: Bracketing Entropy.}  
From the definition of bracketing entropy, we have  
\begin{align*}
H_B(2\tau, \mathcal{F}_N(\Theta), \|\cdot\|_{L^2(\mu)}) \leq \log V = \log V(\tau, \mathcal{F}_N(\Theta), \|\cdot\|_{L^\infty}),
\end{align*}
leading to the bound:  
\begin{equation} \label{eq:bracketing_entropy}
H_B(2\tau, \mathcal{F}_N(\Theta), \|\cdot\|_{L^2(\mu)}) \lesssim \log V.
\end{equation}

\vspace{0.5em}
\textbf{Step 4: Covering Compact Sets.}  
Next, consider the parameter space $ \Theta $, which we decompose into:  
\begin{align*}
\Delta := \{(A, b,c) \in \mathbb{R}^{d\times d}\times\mathbb{R}^d \times \mathbb{R} : (A, b,c, \eta) \in \Theta\}, \quad \Omega := \{\eta \in \mathbb{R}^q : (A, b,c, \eta) \in \Theta\}.
\end{align*}
Both $ \Delta $ and $ \Omega $ are compact sets. For any $ \tau > 0 $, there exist $ \tau $-covers $ \Delta_\tau $ and $ \Omega_\tau $ such that:  
\begin{align*}
|\Delta_\tau| \lesssim \mathcal{O}_P(\tau^{-(d^2+d+1)N}), \quad |\Omega_\tau| \lesssim \mathcal{O}_P(\tau^{-qN}).
\end{align*}

\textbf{Step 5: Difference of the Functions.}
Define $\check{\eta}_i \in \Omega_{\tau}$ such that  
\begin{align*}
\check{\eta}_i = \arg\min_{\eta \in \Omega_{\tau}} \|\eta - \eta_i\|,  
\end{align*}  
and $(\check{A}_i, \check{b}_i, \check{c}_i) \in \Delta_{\tau}$ such that  
\begin{align*}
(\check{A}_i, \check{b}_i, \check{c}_i) = \arg\min_{(A, b, c) \in \Delta_{\tau}} \|(A, b, c) - (A_i, b_i, c_i)\|.  
\end{align*}
Now we consider mixing measures as
\begin{align*}
    \widetilde{G}=\sum_{i=1}^{N}\frac{1}{1+\exp(-c_i)}\delta_{(A_i,b_i,\check{\eta}_i)},~~
    \check{G}=\sum_{i=1}^{N}\frac{1}{1+\exp(-\check{c})}\delta_{(\check{A}_i,\check{b}_i,\check{\eta}_i)}
\end{align*}
given $ {G}=\sum_{i=1}^{N}\frac{1}{1+\exp(-c_i)}\delta_{(A_i,b_i,{\eta}_i)}\in\calm_N(\Theta)$ as a mixing measure.
The above formulations imply that
\begin{align*}
    \|f_G-f_{\widetilde{G}} \|_{L^{\infty}}
    &=
    \sup_{x\in\mathcal{X}}\left| 
    \sum_{i=1}^N    
    \frac{1}{1+\exp
    \left(
    -x^{\top}A_ix-(b_i)^{\top}x-c_i
    \right)
    }
    \cdot
    \left[ 
    \cale
    \left(
    x,\eta_i
    \right)
    -
    \cale
    \left(
    x,\check{\eta}_i
    \right)
    \right]
    \right|
    \\&
    \leq 
    \sum_{i=1}^N 
    \sup_{x\in\mathcal{X}}   
    \frac{1}{1+\exp
    \left(
    -x^{\top}A_ix-(b_i)^{\top}x-c_i
    \right)
    }
    \cdot
    \left| 
    \cale
    \left(
    x,\eta_i
    \right)
    -
    \cale
    \left(
    x,\check{\eta}_i
    \right)
    \right|
    \\&
    \leq 
    \sum_{i=1}^N 
    \sup_{x\in\mathcal{X}}   
    \left| 
    \cale
    \left(
    x,\eta_i
    \right)
    -
    \cale
    \left(
    x,\check{\eta}_i
    \right)
    \right|
    \\&
    \leq 
    \sum_{i=1}^N 
    \sup_{x\in\mathcal{X}}   
    L_1
    \cdot
    \left\| 
    \eta_i-\check{\eta}_i
    \right\|    
    \\&
    \leq
    NL_1\tau\lesssim\tau.
\end{align*}
The second inequality holds because the sigmoid weight is bounded by one, while the third inequality stems from the expert function $ \cale(x, \cdot) $ being Lipschitz continuous with a Lipschitz constant $ L_1 > 0 $. Then we will have 
\begin{align*}
    &\|f_{{\tG}}-f_{\cG} \|_{L^{\infty}}\\
    &=
    \sup_{x\in\mathcal{X}}\left| 
    \sum_{i=1}^N  
        \left[ 
        \frac{1}{1+\exp
    \left(
    -x^{\top}A_ix-(b_i)^{\top}x-c_i
    \right)
    }-
    \frac{1}{1+\exp
    \left(
    -x^{\top}\cAi x-(\cbi)^{\top}x-\cci
    \right)
    }
        \right]
    \cdot 
    \cale
    \left(
    x,\check{\eta}_i
    \right)
    \right|
    \\&
    \leq 
    \sum_{i=1}^N 
    \sup_{x\in\mathcal{X}}   
   \left[ 
        \frac{1}{1+\exp
    \left(
    -x^{\top}A_ix-(b_i)^{\top}x-c_i
    \right)
    }-
    \frac{1}{1+\exp
    \left(
    -x^{\top}\cAi x-(\cbi)^{\top}x-\cci
    \right)
    }
        \right]
    \cdot 
    \left| 
    \cale
    \left(
    x,\check{\eta}_i
    \right)
    \right| 
    \\&
    \leq 
    \sum_{i=1}^N 
    \sup_{x\in\mathcal{X}}   
    M'
   \left[ 
        \frac{1}{1+\exp
    \left(
    -x^{\top}A_ix-(b_i)^{\top}x-c_i
    \right)
    }-
    \frac{1}{1+\exp
    \left(
    -x^{\top}\cAi x-(\cbi)^{\top}x-\cci
    \right)
    }
    \right]
    \\&
    \leq 
    \sum_{i=1}^N 
    \sup_{x\in\mathcal{X}}   
    M'L_2
    \left(
    \|A_i-\cAi \|\cdot \|x \|^2+
    \|b_i-\cbi \|\cdot \|x \|+
    |c_i-\cci|
    \right)
    \\&
    \leq
    NM'(\tau B^2+\tau B+\tau)\lesssim\tau
\end{align*}
The second inequality holds because the expert function is bounded, i.e., $ |\cale(x, \check{\eta}_i)| \leq M' $. The third inequality follows from the sigmoid function being Lipschitz continuous with a Lipschitz constant $ L_2 > 0 $, and the fourth inequality arises due to the boundedness of the input space. Then we know that from the triangle inequality
\begin{align*}
    \|f_{{G}}-f_{\cG} \|_{L^{\infty}}
    \leq
    \|f_{{G}}-f_{\tG} \|_{L^{\infty}}
    +
    \|f_{{\tG}}-f_{\cG} \|_{L^{\infty}}
    \leq\tau.
\end{align*}

\textbf{Step 6: Conclusion.}  
Now we recall the definition of the covering number, we will have
\begin{align*}
    N(\tau,\calf_N(\Theta),\|\cdot\|_{L^{\infty}})
    \leq
    |\Delta_{\tau}|\times|\Omega_{\tau}|
    \leq
    \mathcal{O}_P(n^{-(d^2+d+1)N})\times
    \mathcal{O}_P(n^{-qN})
    \leq
    \mathcal{O}_P(n^{-(d^2+d+1+q)N}).
\end{align*}

Recall the equation \eqref{eq:bracketing_entropy}
, the bracketing entropy of $ \mathcal{F}_N(\Theta) $ under $ L^2(\mu) $ is bounded by $ \log (1/\epsilon) $ by setting $\tau=\epsilon/2$, completing the proof.

\end{proof}

\subsection{Proof of Theorem \ref{thm:done_loss}}
\label{proof:done_loss}


\begin{proof}
In order to prove $\Vert f_{G}-f_{\Gs}\Vert_{L^2(\mu)}
        \gtrsim
        \lone$
for any $G\in\calm_N(\Theta)$, 
it is sufficient to show that 
\begin{align}
\label{pfeq:done_aim}
    \inf_{G\in\calm_N(\Theta)}
    \frac{\Vert f_{G}-f_{\Gs}\Vert_{L^2(\mu)}}{\lone}
    >0.
\end{align}
To prove the above inequality, we consider two cases for the denominator $\lone$: either it lies within a ball $B(0, \varepsilon)$ where the loss is sufficiently small, or it falls outside this region, where $\lone$ will not vanish.

\textbf{Local part:}
At first, we focus on that 
\begin{align}
\label{pfeq:done_local}
    \lim_{\varepsilon\to0}
    \inf_{G\in\calm_N(\Theta):\lone\leq\varepsilon}
    \frac{\Vert f_{G}-f_{\Gs}\Vert_{L^2(\mu)}}{\lone}
    >0.
\end{align}
Assume, for contradiction, that the above claim does not hold. Then there exists a sequence of mixing measures
$G_n=\sum_{i=1}^{\ns}\frac{1}{1+\exp(-c_i^n)}\delta_{(A_i^n,b_i^n,\eta_i^n)}$ in
$\calm_N(\Theta)$ such that as $n\to\infty$, we get
\begin{align}
    \begin{cases}
        \call_{1n}:=\call_1(G_n,\Gs)\to 0,\\
        \|f_{G_n}-f_{\Gs} \|_{L^2(\mu)}/\call_{1n}\to0.
    \end{cases}
\end{align}
Let us recall that
    \begin{align*}
    \call_{1n}
    &
    :=\sum_{j=1}^{\bn}
    \left|
    \sum_{i\in\calAj} \frac{1}{1+\exp(-c^n_i)}-\frac{1}{1+\exp(-c_j^*)}
    \right|
    +\sum_{j=1}^{\bn}
    \sum_{i\in\calAj}
    \left[
    \|\Delta A^n_{ij} \|^2
    +\|\Delta b^n_{ij} \|^2
    +\|\Delta \eta^n_{ij} \|^2
    \right]
    \\
    &+\sum_{j=\bn+1}^{\ns}
    \sum_{i\in\calAj}
    \left[
    \|\Delta A^n_{ij} \|
    +\|\Delta b^n_{ij} \|
    +|\Delta c^n_{ij} |
    +\|\Delta \eta^n_{ij} \|
    \right],
\end{align*}
where $\Delta A^n_{ij}:=A^n_i-A^*_j$, 
$\Delta b^n_{ij}:=b^n_i-b^*_j$, 
$\Delta c^n_{ij}:=c^n_i-c^*_j$, 
$\Delta \eta^n_{ij}:=\eta^n_i-\eta^*_j$.
Since $\call_{1n}\to0$ as $n\to0$, there are two different situation for parameter convergence:
\begin{itemize}
    \item For $j=1,\cdots,\bn$, parameters are over-fitted: $\sum_{i\in\calAj} \frac{1}{1+\exp(-c^n_i)}\rightarrow\frac{1}{1+\exp(-c_j^*)}$
    and $(A^n_i,b^n_i,\eta^n_i)\rightarrow (A^*_j,b^*_j,\eta^*_j)$, $\forall i\in\calAj$;
    \item For $j=\bn+1,\cdots,\ns$, parameters are exact-fitted: 
    $(A^n_i,b^n_i,c^n_i,\eta^n_i)\rightarrow (A^*_j,b^*_j,c^*_j,\eta^*_j), \forall i\in\calAj$.
    
\end{itemize}
\textbf{Step 1 - Taylor expansion:}
In this step, we decompose the term $f_{G_n}(x) - f_{\Gs}(x)$ using a Taylor expansion. First, let us denote
\begin{align}
\label{eq:done_taylor_expension}
    &f_{G_n}(x)-f_{\Gs}(x)
    \\&=
    \sum_{j=1}^{\bn}
    \left[
    \sum_{i\in\calAj}
    \frac{1}{1+
    \exp(-x^{\top}A^n_ix-(b^n_i)^{\top}x-c_i^n)
    }\cdot
    \cale(x,\eta^n_i)
    -
    \frac{1}{1+
    \exp(-x^{\top}A^*_jx-(b^*_j)^{\top}x-c_j^*)
    }\cdot
    \cale(x,\ej)
    \right]
    \nonumber
    \\&
    +
    \sum_{j=\bn+1}^{\ns}
    \left[
    \sum_{i\in\calAj}
    \frac{1}{1+
    \exp(-x^{\top}A^n_ix-(b^n_i)^{\top}x-c_i^n)
    }\cdot
    \cale(x,\eta^n_i)
    -
    \frac{1}{1+
    \exp(-x^{\top}A^*_jx-(b^*_j)^{\top}x-c_j^*)
    }\cdot
    \cale(x,\ej)
    \right]
    \nonumber
    \\&=
    \sum_{j=1}^{\bn}
    \sum_{i\in\calAj}
    \left[
    \frac{1}{1+
    \exp(-x^{\top}A^n_ix-(b^n_i)^{\top}x-c_i^n)
    }\cdot
    \cale(x,\eta^n_i)
    -
    \frac{1}{1+
    \exp(-c_i^n)
    }\cdot
    \cale(x,\ej)
    \right]:=\Ione_n
    \nonumber
    \\&
    +
    \sum_{j=1}^{\bn}
    \left[
    \sum_{i\in\calAj}
    \frac{1}{1+
    \exp(-c_i^n)
    }
    -
    \frac{1}{1+
    \exp(-c_j^*)
    }
    \right]
    \cdot
    \cale(x,\ej):=\Itwo_n
    \nonumber
    \\&
    +
    \sum_{j=\bn+1}^{\ns}
    \left[
    \sum_{i\in\calAj}
    \frac{1}{1+
    \exp(-x^{\top}A^n_ix-(b^n_i)^{\top}x-c_i^n)
    }\cdot
    \cale(x,\eta^n_i)
    -
    \frac{1}{1+
    \exp(-x^{\top}A^*_jx-(b^*_j)^{\top}x-c_j^*)
    }\cdot
    \cale(x,\ej)
    \right]:=\Ithree_n
    \nonumber
\end{align}
Let us denote $\sigma(x,A,b,c) = {1}/{[1 + \exp(-x^{\top}Ax - b^{\top}x - c)]}$,
then $\Ione_n, \Itwo_n$ and $\Ithree_n$ could be denoted as
\begin{align}
    \Ione_n&=\sum_{j=1}^{\bn}
    \sum_{i\in\calAj}
    \left[
    \frac{1}{1+
    \exp(-x^{\top}A^n_ix-(b^n_i)^{\top}x-c_i^n)
    }\cdot
    \cale(x,\eta^n_i)
    -
    \frac{1}{1+
    \exp(-c_i^n)
    }\cdot
    \cale(x,\ej)
    \right]
    \nonumber
    \\&
    =
    \sum_{j=1}^{\bn}
    \sum_{i\in\calAj}
    \left[
    \sigma(x,A_i^n,b_i^n,c_i^n) \cale(x,\eta^n_i)
    -
    \sigma(x,0_{d\times d},0_d,c_i^n) \cale(x,\eta^*_j)
    \right]
    \nonumber
    \\&
    =
    \sum_{j=1}^{\bn}
    \sum_{i\in\calAj}
    \sum_{|\gamma|=1}^2
    \frac{1}{\gamma!}
    (\Delta A_{ij}^n)^{\gamma_1}
    (\Delta b_{ij}^n)^{\gamma_2}
    (\Delta \eta_{ij}^n)^{\gamma_3}
    \frac{\partial^{|\gamma_1|+|\gamma_2|}\sigma}{\partial A^{\gamma_1}\partial b^{\gamma_2}}
    (x,0_{d\times d},0_d,c_i^n) 
    \frac{\partial^{|\gamma_3|}\cale}{\partial\eta^{\gamma_3}}
    (x,\eta^*_j)
    +R_1(x),
\end{align}
\begin{align}
    \Itwo_n=
    \sum_{j=1}^{\bn}
    \left[
    \sum_{i\in\calAj}
    \frac{1}{1+
    \exp(-c_i^n)
    }
    -
    \frac{1}{1+
    \exp(-c_j^*)
    }
    \right]
    \cdot
    \cale(x,\ej),
\end{align}
\begin{align}    \Ithree_n&=\sum_{j=\bn+1}^{\ns}
    \left[
    \sum_{i\in\calAj}
    \frac{1}{1+
    \exp(-x^{\top}A^n_ix-(b^n_i)^{\top}x-c_i^n)
    }\cdot
    \cale(x,\eta^n_i)
    -
    \frac{1}{1+
    \exp(-x^{\top}A^*_jx-(b^*_j)^{\top}x-c_j^*)
    }\cdot
    \cale(x,\ej)
    \right]
    \nonumber
    \\&
    =\sum_{j=\bn+1}^{\ns}
    \left[
    \sum_{i\in\calAj}
    \sigma(x,A^n_i,b^n_i,c^n_i)
    \cale(x,\eta^n_i)
    -
    \sigma(x,A^*_j,b^*_j,c^*_j)
    \cale(x,\ej)
    \right]
    \nonumber
    \\&
    =\sum_{j=\bn+1}^{\ns}
    \sum_{|\tau|=1}
    \left[
    \sum_{i\in\calAj}
    \frac{1}{\tau !}
    (\Delta A_{ij}^n)^{\tau_1}
    (\Delta b_{ij}^n)^{\tau_2}
    (\Delta c_{ij}^n)^{\tau_3}
    (\Delta \eta_{ij}^n)^{\tau_4}
    \right]
    \frac{\partial^{|\tau_1|+|\tau_2|+|\tau_3|}\sigma}{\partial A^{\tau_1} \partial b^{\tau_2} \partial c^{\tau_3}}(x,A^*_j,b^*_j,c^*_j)
    \frac{\partial^{|\tau_4|} \cale}{\partial \eta^{\tau_4}}(x,\ej)
    +R_2(x),
\end{align}
where $R_i(x), i=1,2$ are Taylor remainder such that $R_i(x)/\call_{1n}\to0$ as $n\to\infty$ for $i=[2]$.

Now we could denote 
\begin{align}
    K^n_{j,i,\gamma_{1:3}}&= \frac{1}{\gamma!}
    (\Delta A_{ij}^n)^{\gamma_1}
    (\Delta b_{ij}^n)^{\gamma_2}
    (\Delta \eta_{ij}^n)^{\gamma_3},~
    j\in[\bn],i\in\calAj,\gamma\in\Real^{d\times d}\times\Real^d\times\Real^q
    \label{pfeq:done_loss_notation1}
    \\
    L^n_j&=\sum_{i\in\calAj}
    \frac{1}{1+
    \exp(-c_i^n)
    }
    -
    \frac{1}{1+
    \exp(-c_j^*)
    },~
    j\in[\bn]
    \label{pfeq:done_loss_notation2}
    \\
    T^n_{j,\tau_{1:4}}&=
    \sum_{i\in\calAj}
    \frac{1}{\tau !}
    (\Delta A_{ij}^n)^{\tau_1}
    (\Delta b_{ij}^n)^{\tau_2}
    (\Delta c_{ij}^n)^{\tau_3}
    (\Delta \eta_{ij}^n)^{\tau_4},~
    j\in\{\bn+1,\cdots,\ns\}, 
    \tau\in\Real^{d\times d}\times\Real^d\times\Real\times\Real^q
    \label{pfeq:done_loss_notation3}
\end{align}
where $1\leq\sum_{i=1}^3|\gamma_i|\leq2$ and 
$\sum_{i=1}^4|\tau_i|=1$. 
Also recall that for the last term, $i$ was settled directly by $j$ since all the parameters are exact-fitted when $\bn+1\leq j\leq\ns$, i.e. $|\calAj|=1$.

Using these notations \eqref{pfeq:done_loss_notation1} - \eqref{pfeq:done_loss_notation3} we can now rewrite the difference
$f_{G_n}(x)-f_{\Gs}(x)$ as
\begin{align}
\label{pfeq:done_decomposition}
    f_{G_n}(x)-f_{\Gs}(x)
    &=
    \sum_{j=1}^{\bn}
    \sum_{i\in\calAj}
    \sum_{|\gamma|=1}^2
    K^n_{j,i,\gamma_{1:3}}
    \frac{\partial^{|\gamma_1|+|\gamma_2|}\sigma}{\partial A^{\gamma_1}\partial b^{\gamma_2}}
    (x,0_{d\times d},0_d,c_i^n) 
    \frac{\partial^{|\gamma_3|}\cale}{\partial\eta^{\gamma_3}}
    (x,\eta^*_j)
    +\sum_{j=1}^{\bn}
    L^n_j
    \cdot
    \cale(x,\ej)\nonumber
    \\&
    +\sum_{j=\bn+1}^{\ns}
    \sum_{|\tau|=1}
    T^n_{j,\tau_{1:4}}
    \frac{\partial^{|\tau_1|+|\tau_2|+|\tau_3|}\sigma}{\partial A^{\tau_1} \partial b^{\tau_2} \partial c^{\tau_3}}(x,A^*_j,b^*_j,c^*_j)
    \frac{\partial^{|\tau_4|} \cale}{\partial \eta^{\tau_4}}(x,\ej)
    +R_1(x)
    +R_2(x).    
\end{align}

\textbf{Step 2 - Non-vanishing coefficients:}
Now we claim that at least one in the set 
$$
\mathcal{S}=
\left\{ 
\frac{K^n_{j,i,\gamma_{1:3}}}{\call_{1n}},
\frac{L^n_{j}}{\call_{1n}},
\frac{T^n_{j,\tau_{1:4}}}{\call_{1n}}
\right\}$$
will not vanish as n goes to infinity.
We prove by contradiction that all of them converge to zero when $n\to 0$:
\begin{align*}
    \frac{K^n_{j,i,\gamma_{1:3}}}{\call_{1n}}\to0,~
\frac{L^n_{j}}{\call_{1n}}\to0,~
\frac{T^n_{j,\tau_{1:4}}}{\call_{1n}}\to0.
\end{align*}
Then follows directly from $L^n_j/\call_{1n}\to 0$ we could conclude that
\begin{align}
\label{pfeq:done_loss_coefficient1}
    \frac{1}{\call_{1n}}
    \sum_{j=1}^{\bn}
    \left|
    \sum_{i\in\calAj} \frac{1}{1+\exp(-c^n_i)}-\frac{1}{1+\exp(-c_j^*)}
    \right|
    =
    \frac{1}{\call_{1n}}
    \sum_{j=1}^{\bn}
    \left|
    L^n_j
    \right|
    \to0.
\end{align}
Before consider other coefficients, for simplicity, we denote
 $e_{d,u}=\overbrace{(0,\ldots,0,\underbrace{1}_{u\text{-th}},0,\ldots,0)}^{d\text{-tuple}}\in\Real^d$
as a $d$-tuple with all components equal to 0, except the $u$-th, which is $1$;
and $e_{d\times d,uv}$ as a $d\times d$ matrix with all components equal to 0, except the element in the $u$-th row and $v$-th column , which is $1$, i.e. $e_{d\times d,uv}=\overbrace{(0_d^{\top},\ldots,0_d^{\top},\underbrace{e^{\top}_{d,u}}_{v\text{-th}},0_d^{\top},\ldots,0_d^{\top})}^{d\text{-column}}\in\Real^{d\times d}$.

Now consider for arbitrary
$ u,v \in[d]$,
let $\gamma_1=2e_{d\times d,uv}, \gamma_2=0_{d}$ and $\gamma_3=0_{q}$,
we will have 
\begin{align*}
\frac{1}{\call_{1n}}
\left|
\Delta (A^n_{ij})^{(uv)}
\right|^2
=
\frac{1}{2 \call_{1n}}
\left| 
K^n_{j,i,2e_{d\times d,uv},0_d,0_q}
\right|
\to 0,~ n\to \infty.
\end{align*}
Then by taking the summation of the term with $u,v\in[d]$, we will have
\begin{align}
\label{pfeq:done_loss_coefficient2}
    \frac{1}{\call_{1n}}
    \sum_{j=1}^{\bn}
    \sum_{i\in\calAj}
    \|\Delta A^n_{ij} \|^2
    =
    \frac{1}{2\call_{1n}}
    \sum_{j=1}^{\bn}
    \sum_{i\in\calAj}
    \sum_{u=1}^d
    \sum_{v=1}^d
    | K^n_{j,i,2e_{d\times d,uv},0_d,0_q}|\to0.
\end{align}
For arbitrary
$ u \in[d]$,
let $\gamma_1=0_{d\times d}, \gamma_2=2e_{d,u}$ and $\gamma_3=0_{q}$,
we will have 
\begin{align*}
\frac{1}{\call_{1n}}
\left|
\Delta (b^n_{ij})^{(u)}
\right|^2
=
\frac{1}{2\call_{1n}}
\left| 
K^n_{j,i,0_{d\times d},2e_{d,u},0_q}
\right|
\to 0,~ n\to \infty,
\end{align*}
Then by taking the summation of the previous term with $u\in[d]$, we will have
\begin{align}
\label{pfeq:done_loss_coefficient3}
    \frac{1}{\call_{1n}}
    \sum_{j=1}^{\bn}
    \sum_{i\in\calAj}
    \|\Delta b^n_{ij} \|^2
    =
    \frac{1}{2\call_{1n}}
    \sum_{j=1}^{\bn}
    \sum_{i\in\calAj}
    \sum_{u=1}^d
    | K^n_{j,i,0_{d\times d},2e_{d,u},0_q}|\to0.
\end{align}
Similarly we will have 
\begin{align}
\label{pfeq:done_loss_coefficient4}
    \frac{1}{\call_{1n}}
    \sum_{j=1}^{\bn}
    \sum_{i\in\calAj}
    \|\Delta \eta^n_{ij} \|^2
    =
    \frac{1}{2\call_{1n}}
    \sum_{j=1}^{\bn}
    \sum_{i\in\calAj}
    \sum_{w=1}^q
    | K^n_{j,i,0_{d\times d},0_{d},2e_{q,w}}|\to0.
\end{align}
Now, consider 
$\bn+1\leq j\leq\ns$,
such that its corresponding Voronoi cell has only one element, i.e.
$|\calAj|=1$. 
For arbitrary
$ u,v \in[d]$,
let $\tau_1=e_{d\times d,uv}, \tau_2=0_{d}, \tau_3=0$ and $\tau_4=0_{q}$,
we will have 
\begin{align*}
\frac{1}{\call_{1n}}
\sum_{i\in\calAj}
\left|
\Delta (A^n_{ij})^{(uv)}
\right|
=
\frac{1}{\call_{1n}}
\left| 
T^n_{j,e_{d\times d,uv},0_d,0,0_q}
\right|
\to 0,~ n\to \infty.
\end{align*}
Then by taking the summation of the term with $u,v\in[d]$, we will have
\begin{align*}
    \frac{1}{\call_{1n}}
    \sum_{j=\bn+1}^{\ns}
    \sum_{i\in\calAj}
    \|\Delta A^n_{ij} \|_1
    =
    \frac{1}{\call_{1n}}
    \sum_{j=\bn+1}^{\ns}
    \sum_{u=1}^d
    \sum_{v=1}^d
    | T^n_{j,e_{d\times d,uv},0_d,0,0_q}|\to0.
\end{align*}
Recall the topological equivalence between $L_1$-norm and $L_2$-norm on finite-dimensional vector space over $\Real$, we will have
\begin{align}
\label{pfeq:done_loss_coefficient5}
    \frac{1}{\call_{1n}}
    \sum_{j=\bn+1}^{\ns}
    \sum_{i\in\calAj}
    \|\Delta A^n_{ij} \|
    \to0.
\end{align}
Following a similar argument, 
since
\begin{align*}
    \frac{1}{\call_{1n}}
    \sum_{j=\bn+1}^{\ns}
    \sum_{i\in\calAj}
    \|\Delta b^n_{ij} \|_1
    &=
    \frac{1}{\call_{1n}}
    \sum_{j=\bn+1}^{\ns}
    \sum_{u=1}^d
    | T^n_{j,0_{d\times d},e_{d,u},0,0_q}|\to0,
    \\
    \frac{1}{\call_{1n}}
    \sum_{j=\bn+1}^{\ns}
    \sum_{i\in\calAj}
    |\Delta c^n_{ij} |
    &=
    \frac{1}{\call_{1n}}
    \sum_{j=\bn+1}^{\ns}
    | T^n_{j,0_{d\times d},0_{d},1,0_q}|\to0,
    \\
    \frac{1}{\call_{1n}}
    \sum_{j=\bn+1}^{\ns}
    \sum_{i\in\calAj}
    \|\Delta \eta^n_{ij} \|_1
    &=
    \frac{1}{\call_{1n}}
    \sum_{j=\bn+1}^{\ns}
    \sum_{w=1}^q
    | T^n_{j,0_{d\times d},0_{d},0,e_{q,w}}|\to0,
\end{align*}
we obtain that 
\begin{align}
\label{pfeq:done_loss_coefficient6}
    \frac{1}{\call_{1n}}
    \sum_{j=\bn+1}^{\ns}
    \sum_{i\in\calAj}
    \|\Delta b^n_{ij} \|\to0,~
    \frac{1}{\call_{1n}}
    \sum_{j=\bn+1}^{\ns}
    \sum_{i\in\calAj}
    |\Delta c^n_{ij} |\to0,~
    \frac{1}{\call_{1n}}
    \sum_{j=\bn+1}^{\ns}
    \sum_{i\in\calAj}
    \|\Delta \eta^n_{ij} \|\to0.
\end{align}
Now taking
the summation of limits in equations
\eqref{pfeq:done_loss_coefficient1} - 
\eqref{pfeq:done_loss_coefficient6},
we could deduce that
    \begin{align*}
    1=\frac{\call_{1n}}{\call_{1n}}
    &
    =
    \frac{1}{\call_{1n}}\sum_{j=1}^{\bn}
    \left|
    \sum_{i\in\calAj} \frac{1}{1+\exp(-c^n_i)}-\frac{1}{1+\exp(-c_j^*)}
    \right|
    +
    \frac{1}{\call_{1n}}
    \sum_{j=1}^{\bn}
    \sum_{i\in\calAj}
    \left[
    \|\Delta A^n_{ij} \|^2
    +\|\Delta b^n_{ij} \|^2
    +\|\Delta \eta^n_{ij} \|^2
    \right]
    \\
    &+
    \frac{1}{\call_{1n}}
    \sum_{j=\bn+1}^{\ns}
    \sum_{i\in\calAj}
    \left[
    \|\Delta A^n_{ij} \|
    +\|\Delta b^n_{ij} \|
    +|\Delta c^n_{ij} |
    +\|\Delta \eta^n_{ij} \|
    \right]\to0,
\end{align*}
as $n\to\infty$,
which is a contradiction.
Thus, not all the coefficients of elements in the set
\begin{align*}
\mathcal{S}=
\left\{ 
\frac{K^n_{j,i,\gamma_{1:3}}}{\call_{1n}},
\frac{L^n_{j}}{\call_{1n}},
\frac{T^n_{j,\tau_{1:4}}}{\call_{1n}}
\right\}
\end{align*}
tend to 0 as $n\to\infty$. 
Let us denote by $m_n$ the maximum of the absolute values of those elements.
It follows from the previous result that $1/m_n\not\to \infty$ as $n\to\infty$.

\textbf{Step 3 - Application of Fatou’s lemma:} 
In this step, we apply Fatou's lemma to obtain the desired inequality in equation \eqref{pfeq:done_local}.
Recall in the beginning we have assumed that
$
        \|f_{G_n}-f_{\Gs} \|_{L^2(\mu)}/\call_{1n}\to0
$
and the topological equivalence between $L_1$-norm and $L_2$-norm on finite-dimensional vector space over $\Real$,
we obtain $
        \|f_{G_n}-f_{\Gs} \|_{L^1(\mu)}/\call_{1n}\to0
$. 
By applying the Fatou’s lemma, we get
\begin{align*}
    0=\lim_{n\to\infty}\frac{\|f_{G_n}-f_{\Gs} \|_{L^1(\mu)}}{m_n \call_{1n}}
    \geq
    \int\liminf_{n\to\infty}
    \frac{|f_{G_n}(x)-f_{\Gs}(x)|}{m_n \call_{1n}}d\mu(x)
    \geq 0.
\end{align*}
This result suggests that for almost every $x$, 
\begin{align}
\label{pfeq:done_fatou}
    \frac{f_{G_n}(x)-f_{\Gs}(x)}{m_n \call_{1n}}\to 0,
\end{align}
recall equation \eqref{pfeq:done_decomposition}, we deduce that
\begin{align*}
    \frac{f_{G_n}(x)-f_{\Gs}(x)}{m_n \call_{1n}}
    &=
    \sum_{j=1}^{\bn}
    \sum_{i\in\calAj}
    \sum_{|\gamma|=1}^2
    \frac{K^n_{j,i,\gamma_{1:3}}}{m_n \call_{1n}}
    \frac{\partial^{|\gamma_1|+|\gamma_2|}\sigma}{\partial A^{\gamma_1}\partial b^{\gamma_2}}
    (x,0_{d\times d},0_d,c_i^n) 
    \frac{\partial^{|\gamma_3|}\cale}{\partial\eta^{\gamma_3}}
    (x,\eta^*_j)
    +\sum_{j=1}^{\bn}
    \frac{L^n_j}{m_n \call_{1n}}
    \cdot
    \cale(x,\ej)\nonumber
    \\&
    +\sum_{j=\bn+1}^{\ns}
    \sum_{|\tau|=1}
    \frac{T^n_{j,\tau_{1:4}}}{m_n \call_{1n}}
    \frac{\partial^{|\tau_1|+|\tau_2|+|\tau_3|}\sigma}{\partial A^{\tau_1} \partial b^{\tau_2} \partial c^{\tau_3}}(x,A^*_j,b^*_j,c^*_j)
    \frac{\partial^{|\tau_4|} \cale}{\partial \eta^{\tau_4}}(x,\ej)
    +\frac{R_1(x)}{m_n \call_{1n}}
    +\frac{R_2(x)}{m_n \call_{1n}}.    
\end{align*}
Let us denote 
\begin{align*}
    \frac{K^n_{j,i,\gamma_{1:3}}}{m_n \call_{1n}}\to k_{j,i,\gamma_{1:3}},~~
    \frac{L^n_j}{m_n \call_{1n}}\to l_j,~~
    \frac{T^n_{j,\tau_{1:4}}}{m_n \call_{1n}}\to t_{j,\tau_{1:4}},
\end{align*}
then from equation \eqref{pfeq:done_fatou},
we will have
\begin{align*}
    &
    \sum_{j=1}^{\bn}
    \sum_{i\in\calAj}
    \sum_{|\gamma|=1}^2
    k_{j,i,\gamma_{1:3}}
    \frac{\partial^{|\gamma_1|+|\gamma_2|}\sigma}{\partial A^{\gamma_1}\partial b^{\gamma_2}}
    (x,0_{d\times d},0_d,c_i^n) 
    \frac{\partial^{|\gamma_3|}\cale}{\partial\eta^{\gamma_3}}
    (x,\eta^*_j)
    +\sum_{j=1}^{\bn}
    l^n_j
    \cdot
    \cale(x,\ej)\nonumber
    \\
    &
    +\sum_{j=\bn+1}^{\ns}
    \sum_{|\tau|=1}
    t^n_{j,\tau_{1:4}}
    \frac{\partial^{|\tau_1|+|\tau_2|+|\tau_3|}\sigma}{\partial A^{\tau_1} \partial b^{\tau_2} \partial c^{\tau_3}}(x,A^*_j,b^*_j,c^*_j)
    \frac{\partial^{|\tau_4|} \cale}{\partial \eta^{\tau_4}}(x,\ej)=0,    
\end{align*}
for almost every $x$.
Note that the expert function $\cale(\cdot,\eta)$ is strongly identifiable, then the above equation implies that
\begin{align*}
    k_{j,i,\gamma_{1:3}}= l_j= t_{j,\tau_{1:4}}=0,
\end{align*}
for any $j\in[\ns]$,  $(\gamma_1,\gamma_2,\gamma_3)\in\bbN^{d\times d}\times\bbN^d\times\bbN^q$ and 
    $(\tau_1,\tau_2,\tau_3,\tau_4)\in\bbN^{d\times d}\times\bbN^d\times\bbN\times\bbN^q$
such that 
$1\leq \sum_{i=1}^3|\gamma_i| \leq 2$ and
$\sum_{i=1}^4|\tau_i|=1 $.
This violates that at least one among the limits in the set $\{k_{j,i,\gamma_{1:3}}, l_j,  t_{j,\tau_{1:4}} \}$ is different from zero.

Thus, we obtain the local inequality in equation \eqref{pfeq:done_local}. Consequently, there exists some $\varepsilon' > 0$ such that
\begin{align*}
    \inf_{G\in\calm_N(\Theta):\lone\leq\varepsilon'}
    \frac{\Vert f_{G}-f_{\Gs}\Vert_{L^2(\mu)}}{\lone}
    >0.
\end{align*}

\textbf{Global part:}
We now proceed to demonstrate equation \eqref{pfeq:done_aim} for the case where the denominator does not vanish, i.e.
\begin{align}
\label{pfeq:done_global}
    \inf_{G\in\calm_N(\Theta):\lone>\varepsilon'}
    \frac{\Vert f_{G}-f_{\Gs}\Vert_{L^2(\mu)}}{\lone}
    >0.
\end{align}
Suppose, for contradiction, that inequality \eqref{pfeq:done_global} does not hold. Then there exists a sequence of mixing measures $G'_n \in \mathcal{M}_N(\Theta)$ such that $\call_1(G'_n, \Gs) > \varepsilon'$ and
\begin{align*}
    \lim_{n\to\infty}
    \frac{\Vert f_{G'_n}-f_{\Gs}\Vert_{L^2(\mu)}}{\call_1(G'_n,\Gs)}
    =0.
\end{align*}
Under this situation, we could deduce that $\Vert f_{G'_n}-f_{\Gs}\Vert_{L^2(\mu)}\to0$ as $n\to\infty$.
Since $\Theta$ is a compact set, we can replace the sequence $G'_n$ with a convergent subsequence, which approaches a mixing measure $G' \in \mathcal{M}_N(\Theta)$. Given that $\call_1(G'_n, \Gs) > \varepsilon'$, we conclude that $\call_1(G', \Gs) > \varepsilon'$. Then, using Fatou's lemma, we deduce:
\begin{align*}
    0=\lim_{n\to\infty}
    {\|f_{G'_n}-f_{\Gs} \|^2_{L^2(\mu)}}
    \geq
    \int\liminf_{n\to\infty}
    {\left|f_{G'_n}(x)-f_{\Gs}(x)\right|^2}d\mu(x),
\end{align*}
which indicates that $f_{G'}(x)=f_{\Gs}(x)$ for almost every $x$. 
Based on the Proposition \ref{prop:identifiability} followed, we will have that 
$G'\equiv\Gs$.

\end{proof}

\begin{proposition}
\label{prop:identifiability}
    If the equation $f_G (x) = f_{\Gs} (x)$ holds true for almost every x, then it follows that $G \equiv \Gs $.
\end{proposition}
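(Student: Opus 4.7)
The plan is to argue by contradiction. Suppose $f_G(x) = f_{\Gs}(x)$ for $\mu$-almost every $x$, yet $G \not\equiv \Gs$. Writing the two mixing measures as $G = \sum_{i=1}^{N'} \sigma(c_i)\,\delta_{(A_i,b_i,\eta_i)}$ and $\Gs = \sum_{j=1}^{\ns} \sigma(c_j^*)\,\delta_{(\Asi,\bsi,\etasi)}$, I first cancel atoms whose quadruples $(A,b,c,\eta)$ appear on both sides. After this reduction, $G \not\equiv \Gs$ yields a nontrivial identity of the form
\begin{align*}
\sum_{k=1}^{K} \varepsilon_k \, \sigma(x^{\top} A_k x + b_k^{\top} x + c_k)\, \cale(x, \eta_k) \;=\; 0 \qquad \text{for } \mu\text{-a.e.\ } x,
\end{align*}
where the tuples $(A_k, b_k, c_k, \eta_k)$ are pairwise distinct and each $\varepsilon_k \in \{-1,+1\}$ (with the sign recording whether the atom came from $G$ or $\Gs$). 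Proving the proposition thus reduces to establishing linear independence of the family $\{x \mapsto \sigma(x^{\top} A x + b^{\top} x + c)\,\cale(x,\eta)\}$ over distinct tuples $(A,b,c,\eta)$.

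To establish this linear independence, I would exploit the asymptotic behavior of the sigmoid along rays. For a direction $v \in \Real^d$, as $x = tv$ with $t\to+\infty$, the quadratic $q_k(tv) = t^2 v^{\top} A_k v + t\, b_k^{\top} v + c_k$ dominates, so $\sigma(q_k(tv)) \to 1$ when $v^{\top} A_k v > 0$ and $\sigma(q_k(tv)) \to 0$ when $v^{\top} A_k v < 0$. Since the set of $v$ with $v^{\top}(A_k - A_{k'}) v = 0$ for some $k\neq k'$, or $v^{\top} A_k v = 0$, is a finite union of algebraic subvarieties of positive codimension, I may select a generic $v$ that strictly separates the atoms according to the sign (and magnitude) of $v^{\top} A_k v$. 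Extracting the $t\to\infty$ limit (after normalizing by the growth of $\cale(tv,\eta_k)$, which is at most polynomial for the expert classes considered in the paper) peels off the group of atoms with strictly positive leading quadratic, which must sum to zero on its own; repeating with $t\to-\infty$ and with rotations of $v$ iteratively isolates atoms according to $A_k$, then according to $b_k$ (for pairs sharing $A_k$), and then according to $c_k$ (for pairs sharing $(A_k,b_k)$, where the residual identity becomes an affine combination of constant sigmoid weights).

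After this asymptotic peeling, the only remaining identities involve subsums with a common triple $(A,b,c)$, in which case the factor $\sigma(q(x))$ is strictly positive and can be divided out, leaving $\sum_\ell \varepsilon_\ell \,\cale(x, \eta_\ell) = 0$ for distinct $\eta_\ell$. This final step is exactly the identifiability of the expert function in $\eta$, which is guaranteed by the (weak) identifiability property of $\cale$ applied at $\ell=1$; it forces $K=0$, contradicting $G \not\equiv \Gs$. The main obstacle I expect is the combinatorial case analysis in the peeling step, particularly the handling of degenerate directions $v$ where two atoms cannot be separated by a single limit; this will require either using several directions jointly (a rank argument on the collection of quadratic forms $\{A_k\}$) or invoking real-analyticity of $\sigma$ to upgrade the a.e.\ identity to one that holds on all of $\Real^d$, after which the peeling becomes a straightforward induction on $K$.
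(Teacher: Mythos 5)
Your route is genuinely different from the paper's. The paper first invokes the identifiability (linear independence) of the expert functions $\{\cale(\cdot,\eta_i)\}$ to match experts across the two mixtures, then matches the sigmoid gating functions as sets and uses the injectivity of $z\mapsto(1+e^{-z})^{-1}$ to equate the quadratic scores $x^{\top}A_ix+b_i^{\top}x+c_i$ pointwise, which pins down $(A_i,b_i,c_i)$ and finally the $\eta_i$ within each group. You instead try to separate the gating components first via the asymptotics of the sigmoid along rays. The direction is plausible, but the peeling step as written has a real gap: fixing a generic $v$ and letting $t\to\infty$ in $\sum_k\varepsilon_k\,\sigma(q_k(tv))\,\cale(tv,\eta_k)=0$, after normalizing by the polynomial growth of the experts, yields only a single scalar relation among the leading coefficients of $\cale(tv,\eta_k)$ over the set $\{k: v^{\top}A_kv>0\}$. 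It does not show that the subsum over that group vanishes \emph{as a function of $x$}, which is what you need in order to iterate the peeling on the remaining atoms. Moreover, the group $\{k: v^{\top}A_kv>0\}$ itself changes with $v$, so the scalar relations you collect as $v$ varies are not relations for one fixed linear combination; the deferred ``rank argument on the collection of quadratic forms'' is precisely where the proof has to happen and is not supplied. The fallback via real-analyticity also fails for ReLU experts, one of the paper's two main expert classes.

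A second, smaller issue is the terminal step. Weak identifiability (Definition~\ref{def:weak_identifiability}) asserts linear independence of first-order partial derivatives of $F$ at distinct parameter tuples; it is not the statement that $\{\cale(\cdot,\eta_\ell)\}$ are linearly independent for distinct $\eta_\ell$, which is what you need after dividing out the common positive factor $\sigma(q(x))$. That linear independence is the separate identifiability hypothesis on the expert class that the paper invokes directly at the start of its proof. Note that once you grant that hypothesis, the asymptotic peeling is unnecessary: matching the coefficients of each distinct expert and then using that the sigmoid is strictly increasing gives $x^{\top}A_ix+b_i^{\top}x+c_i=x^{\top}A_j^*x+(b_j^*)^{\top}x+c_j^*$ for $\mu$-a.e.\ $x$, hence $A_i=A_j^*$, $b_i=b_j^*$, $c_i=c_j^*$ by comparing polynomial coefficients, which is the paper's (much shorter) path and avoids the combinatorial case analysis you identify as the main obstacle.
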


\begin{proof}[Proof of Proposition \ref{prop:identifiability}]
Assume that $ f_G(x) = f_{\Gs}(x) $ for almost every $x$, which gives:
\begin{equation}
\sum_{i=1}^{N} \frac{1}{1 + \exp(-x^{\top}A_ix - b_i^{\top}x - c_i)} \cale(x, \eta_i) = 
\sum_{i=1}^{N_*} \frac{1}{1 + \exp(-x^{\top}\Asi x - (\bsi)^{\top}x - \csi)} \cale(x, \eta_i^*). 
\label{eq:matching}
\end{equation}

Since $ \cale(x, \eta) $ is identifiable, the set $ \{\cale(x, \eta_i), i \in [N]\} $ is linearly independent, and $ \{\cale(x, \eta_i^*), i \in [N_*]\} $ is distinct for some $ N_* \in \mathbb{N} $. Therefore, if $N \neq N_*$, there must exist some index $i \in [N]$ such that $ \eta_i \neq \eta_j^* $ for any $j \in [N_*]$. This leads to a contradiction as the coefficients cannot simultaneously satisfy equation~\eqref{eq:matching}. Consequently, we must have $N = N_*$.

Thus, the sets of weights and gating functions on both sides of \eqref{eq:matching} are equivalent:
\begin{align*}
\left\{ \frac{1}{1 + \exp(-x^{\top}A_ix - b_i^{\top}x - c_i)},    i \in [N] \right\} = 
\left\{ \frac{1}{1 + \exp(-x^{\top}\Asi x - (\bsi)^{\top}x - \csi)},    i \in [N_*] \right\}.
\end{align*}

Without loss of generality, we can assume that the correspondence is such that
\begin{align*}
\frac{1}{1 + \exp(-x^{\top}A_ix - b_i^{\top}x - c_i)} = 
\frac{1}{1 + \exp(-x^{\top}\Asi x - (\bsi)^{\top}x - \csi)},
\end{align*}
for all $ i \in [N] $ and for almost every $x$. This implies the sigmoid function's invariance to translations, leading to:
\begin{align*}
A_i = A_i^* + v_2, \quad b_i = b_i^* + v_1, \quad c_i = c_i^* + v_0,
\end{align*}
for some $v_2 \in \mathbb{R}^{d\times d}$, $v_1 \in \mathbb{R}^d$ and $v_0 \in \mathbb{R}$. However, due to the assumption $ A_k = A_k^* = 0_{d\times d} $, $ b_k = b_k^* = 0_{d} $ and $ c_k = c_k^* = 0 $, it follows that $v_2 = 0_{d\times d}$, $v_1 = 0_{d}$ and $v_0 = 0$, leading to:
\begin{align*}
A_i = A_i^*, \quad b_i = b_i^*, \quad c_i = c_i^*, \quad \text{for any } i \in [N].
\end{align*}

Substituting this back into equation~\eqref{eq:matching}, we have:
\begin{equation}
\sum_{i=1}^{N} \frac{1}{1 + \exp(-x^{\top}A_ix - b_i^{\top}x - c_i)} \cale(x, \eta_i) = 
\sum_{i=1}^{N} \frac{1}{1 + \exp(-x^{\top}A_ix - b_i^{\top}x - c_i)} \cale(x, \eta_i^*).
\end{equation}

Next, let us partition the index set $[N]$ into subsets $J_1, J_2, \ldots, J_m$, where $m \leq k$, such that $ (A_{i}, b_{i}, c_{i}) = (A^*_{i'}, b^*_{i'}, c^*_{i'}) $ for any $i, i' \in J_j$ and $j \in [m]$. For indices $i$ and $i'$ belonging to distinct subsets, it holds that $ (A_{i}, b_{i}, c_{i}) \neq (A_{i'}, b_{i'}, c_{i'}) $.

Using this partition, the equation can be reorganized as:
\begin{equation}
\sum_{j=1}^m \sum_{i \in J_j} \frac{1}{1 + \exp(-x^{\top}A_ix - b_i^{\top}x - c_i)} \cale(x, \eta_i) =
\sum_{j=1}^m \sum_{i \in J_j} \frac{1}{1 + \exp(-x^{\top}A^*_i x - (b^*_i)^{\top}x - c^*_i)} \cale(x, \eta_i^*),
\end{equation}
for almost every $x$.

Recall that $A_i = A_i^*$, $b_i = b_i^*$ and $c_i = c_i^*$ for all $i \in [\ns]$. This implies that for any $j \in [m]$, we can identify the sets:
\begin{align*}
\{\eta_i : i \in J_j\} \equiv \{\eta_i^* : i \in J_j\}.
\end{align*}

Consequently, we obtain:
\begin{align*}
G = \sum_{j=1}^m \sum_{i \in J_j} \frac{1}{1 + \exp(-c_i  )} \delta(A_i, b_i, \eta_i) = 
\sum_{j=1}^m \sum_{i \in J_j} \frac{1}{1 + \exp(-c_i^*)} \delta(A_i^*, b_i^*, \eta_i^*)=\Gs,
\end{align*}
showing $ G \equiv \Gs $. Thus, the proof is complete.
\end{proof}

\subsection{Proof of Theorem \ref{thm:dtwor_loss_linear}}
\label{proof:dtwor_loss_linear}

To prove the result, we first examine the Voronoi loss:
\begin{align*}
    \ltwor:=
    &
    \sum_{j=1}^{\bn}
    \left|
    \sum_{i\in\calAj} \frac{1}{1+\exp(-c_i)}-\frac{1}{1+\exp(-c_j^*)}
    \right|
    \nonumber
    \\
    &
    +\sum_{j=1}^{\bn}
    \sum_{i\in\calAj}
    \left[
    \|\Delta A_{ij} \|^r
    +\|\Delta b_{ij} \|^r
    +\|\Delta \alpha_{ij} \|^r
    +|\Delta \beta_{ij} |^r
    \right]
    \nonumber
    \\
    &
    +\sum_{j=\bn+1}^{\ns}
    \sum_{i\in\calAj}
    \left[
    \|\Delta A_{ij} \|^r
    +\|\Delta b_{ij} \|^r
    +|\Delta c_{ij} |^r
    +\|\Delta \alpha_{ij} \|^r
    +|\Delta \beta_{ij} |^r
    \right]
\end{align*}
where the difference terms $\Delta A_{ij}:=A_i-A^*_j$,  $\Delta b_{ij}:=b_i-b^*_j$, 
$\Delta c_{ij}:=c_i-c^*_j$, 
$\Delta \alpha_{ij}:=\alpha_i-\alpha^*_j$, 
$\Delta \beta_{ij}:=\beta_i-\beta^*_j$.
These terms are asymmetric by definition because $\Delta A_{ij} \neq \Delta A_{ji}$ in general (i.e., $ A_i - A_j^* \neq A_j - A^*_i$ where $i\in\calAj$).
So $\call_{2,r}(G,\Gs)$ is not symmetric, 
but it still satisfies a weak triangle inequality.
For experts meeting all the assumptions in Theorem \ref{thm:dtwor_loss_linear}, we derive the following results using Taylor expansion:
\begin{lemma}
\label{lemma:dtwor_loss}
    Given experts in Theorem \ref{thm:dtwor_loss_linear}, we achieve for any $r\geq 1$ that
    \begin{align*}
        \lim_{\epsilon\to0}
        \inf_{G\in\calm_N(\Theta)}
        \left\{
        \frac{\| f_G-f_{\Gs} \|_{L^2(\mu)}}{\call_{2,r}(G,\Gs)}:\call_{2,r}(G,\Gs)\leq\epsilon
        \right\}=0.
    \end{align*}
\end{lemma}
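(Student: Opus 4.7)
The strategy is to construct, for each $r \geq 1$, a sequence $(G_n)_{n\geq 1} \subset \calm_N(\Theta)$ approaching $\Gs$ in such a way that $\call_{2,r}(G_n, \Gs) \to 0$ while $\|f_{G_n} - f_{\Gs}\|_{L^2(\mu)} / \call_{2,r}(G_n, \Gs) \to 0$, thereby refuting any positive lower bound on the infimum. This parallels the non-identifiability construction used in the analogous softmax-gating setting by \cite{akbarian2024quadratic, nguyen2024squares}.

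First, I would assume without loss of generality that $\bn = 1$ and the first atom $(\Asone, \bsone, \csone, \alpha_1^*, \beta_1^*)$ of $\Gs$ is the unique over-specified one. I would then set
\begin{align*}
G_n := \sum_{i=1}^{2} \frac{1}{1+\exp(-c_i^n)} \delta_{(A_i^n, b_i^n, \alpha_i^n, \beta_i^n)} + \sum_{j=2}^{\ns} \frac{1}{1+\exp(-c_j^*)} \delta_{(\Asi, \bsi, \alpha_j^*, \beta_j^*)},
\end{align*}
with $(A_i^n, b_i^n, c_i^n, \alpha_i^n, \beta_i^n) \to (\Asone, \bsone, \csone, \alpha_1^*, \beta_1^*)$ for $i = 1, 2$ and the weights tuned so that $\sum_{i=1}^2 (1+\exp(-c_i^n))^{-1}$ matches $(1+\exp(-\csone))^{-1}$ asymptotically. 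Introducing a small scalar $\epsilon_n \to 0^+$ and fixed direction vectors $U_A^i, U_b^i, U_c^i, U_\alpha^i, U_\beta^i$, I would parameterize $A_i^n - \Asone = \epsilon_n^{p_A} U_A^i$ and analogously for the other coordinates, where the exponents $p_\bullet \geq 1$ are to be tuned.

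Second, I would Taylor-expand the function $F(x; A, b, c, \alpha, \beta) := \sigma(x^\top A x + b^\top x + c)(\alpha^\top x + \beta)^p$ about $(\Asone, \bsone, \csone, \alpha_1^*, \beta_1^*)$, so that $f_{G_n}(x) - f_{\Gs}(x)$ becomes a polynomial in $\epsilon_n$ whose coefficients are linear combinations of partial derivatives of $F$ weighted by products of the $U_\bullet^i$. Crucially, the identities in \eqref{eq:PDE} collapse several pairs of second-order partials into single basis functions, producing nontrivial kernel directions in the map from $(U_\bullet^i)$ to the effective Taylor coefficients. I would choose the $U_\bullet^i$ antisymmetrically in $i$ to kill the odd-order terms and use the PDE-induced degeneracies to kill the even-order terms at each order up to what is needed.

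Third, I would tune the exponents $p_\bullet$ so that $\call_{2,r}(G_n, \Gs)$ is dominated by a single scale of order $\epsilon_n$, while the lowest-order surviving Taylor term in $f_{G_n} - f_{\Gs}$ is of strictly higher order in $\epsilon_n$. Bounding the remainder via Taylor's theorem, one then has $\|f_{G_n} - f_{\Gs}\|_{L^2(\mu)} = o(\call_{2,r}(G_n, \Gs))$, which gives the claim.

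The main obstacle is the algebraic step of solving the cancellation system for arbitrary $r \geq 1$: one must verify that the PDE degeneracies from \eqref{eq:PDE}, combined with the polynomial structure of the expert $(\alpha^\top x + \beta)^p$, furnish enough kernel directions to eliminate all Taylor coefficients below the required order, and one must exhibit compatible scales $p_\bullet$. In the $p = 1$ regime the three identities in \eqref{eq:PDE} already suffice for moderate $r$; for larger $r$ and for $p \geq 2$ one exploits higher-order analogues of these identities produced by repeated differentiation, with the combinatorics being the technical heart of the argument. Once the cancellation is in place, reading off the asymptotics of $\call_{2,r}(G_n, \Gs)$ from the definition~\eqref{eq:dtwor} and controlling the Taylor remainder are routine.
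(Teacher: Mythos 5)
Your overall template (exhibit an explicit sequence $G_n$ with $\call_{2,r}(G_n,\Gs)\to 0$ and $\|f_{G_n}-f_{\Gs}\|_{L^2(\mu)}/\call_{2,r}(G_n,\Gs)\to 0$) is the right one and matches the paper. However, the decisive step — making the cancellation work for \emph{every} $r\geq 1$ — is exactly the part you leave open, and the mechanism you propose for it does not obviously close the gap. Order-by-order Taylor cancellation requires killing all coefficients up to order roughly $r$, and with a bounded number of split atoms (here only two, since $N>\ns$ is fixed) and only the finitely many degeneracies in equation~\eqref{eq:PDE} (which are identities among \emph{second}-order partials, not exact invariances of $F$), you cannot manufacture arbitrarily many kernel directions as $r$ grows. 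Your own caveat that "the combinatorics [are] the technical heart" is precisely the content of the lemma, and it is not carried out.

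The paper avoids this entirely by achieving \emph{exact} cancellation rather than asymptotic order-by-order cancellation. It takes the over-specified atom to have $\alpha_1^*=0_d$ (a choice one is free to make, since the lemma feeds a minimax bound where the ground truth ranges over a class), splits it into two atoms with \emph{identical} gating parameters $A_i^n=0_{d\times d}$, $b_i^n=0_d$, $c_1^n=c_2^n$, perturbs only $\beta$ antisymmetrically by $\pm 1/n$, and perturbs the total sigmoid weight by $n^{-(r+1)}$. Because the two split components share the same gate, the $\pm 1/n$ expert perturbations cancel exactly in $f_{G_n}-f_{\Gs}$, leaving only the weight-mismatch term $\beta_1^*/n^{r+1}$; meanwhile $\call_{2,r}(G_n,\Gs)\asymp n^{-r}$ because the loss raises the $1/n$ parameter discrepancy to the power $r$. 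The ratio is then $\mathcal{O}(n^{-1})$ uniformly in how large $r$ is. Note that the $r$-dependence enters only through the loss exponent and the weight perturbation, not through any deeper Taylor cancellation — this is the idea your proposal is missing. (You should also be aware that even the paper's displayed computation implicitly uses $p=1$; for $p\geq 2$ the antisymmetric $\beta$-perturbation leaves an $\mathcal{O}(n^{-2})$ residual and the weights must instead be chosen so that $\sum_i w_i[(\beta_i^n)^p-(\beta_1^*)^p]$ vanishes exactly, which is a one-dimensional constraint and easily arranged.)
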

We will prove this lemma later.

\begin{proof}[Main Proof]
Recall that 
$(X_1,Y_1),(X_2,Y_2),\cdots,(X_n,Y_n)\in\Real^d\times\Real $ follow a
standard regression model
\begin{align*}
    Y_i=f_{\Gs}(X_i)+\varepsilon_i,~i=1,\cdots,n,
\end{align*}
where $X_1,X_2,\cdots,X_n$ are i.i.d. samples from a probability distribution $\mu$ on $\Real^d$, and 
$\varepsilon_i$ are i.i.d. Gaussian noise variables with $\bbE[\varepsilon_i|X_i]=0$ and $\text{Var}[\varepsilon_i|X_i]=\nu$, $i\in[n]$.
Under the Gaussian assumption on the noise variables   $\varepsilon_i$, we have that 
$Y_i|X_i\sim\caln(f_{\Gs}(X_i),\nu), i=1,\ldots,n$. 
Given Lemma \ref{lemma:dtwor_loss},
there exists a sufficiently small $\epsilon>0$ and a mixing measure $\Gs^{\prime}\in\calm_N(\Theta)$
such that $\call_{2,r}(\Gs^{\prime},\Gs)=2\epsilon$ and $\|f_{\Gs^{\prime}}-f_{\Gs} \|\leq \sqrt{C_1}\epsilon$ for a fixed constant $C_1$.
Thus, applying Le Cam's lemma 
to address the $\call_{2,r}$ loss, which satisfies the weak triangle inequality, we obtain:
\begin{align*}
    &\inf_{\tGn\in\calm_N(\Theta)}
    \sup_{G\in\calm_N(\Theta)\setminus\calm_{\ns-1}(\Theta)}
    \bbE_{f_G}[\call_{2,r}(\tGn,G)]\\
    &\qquad \gtrsim
    \frac{\call_{2,r}(\Gsp,\Gs)}{8}
    \exp
    \left\{
    -n\bbE_{X\sim\mu}
    \left[
    \kl
    \left(
    \caln(f_{\Gsp}(X),\nu), \caln(f_{\Gs}(X),\nu)
    \right)
    \right]
    \right\}
\end{align*}
Recall that the KL divergence between two Gaussian distributions with the same variance is
\begin{align*}
    \kl\left(
    \caln(f_{\Gsp}(X),\nu), \caln(f_{\Gs}(X),\nu)
    \right)
    =
    \frac{1}{2\nu}
    \left(
    f_{\Gsp}(X)-f_{\Gs}(X)
    \right)^2.
\end{align*}
Hence, we can deduce that:
\begin{align*}
    \inf_{\tGn\in\calm_N(\Theta)}
    \sup_{G\in\calm_N(\Theta)\setminus\calm_{\ns-1}(\Theta)}
    \bbE_{f_G}[\call_{2,r}(\tGn,G)]
    &\gtrsim
    \epsilon
    \exp
    \left(
    -n\|f_{\Gsp}-f_{\Gs} \|^2_{L^2(\mu)}
    \right)\\
    &\gtrsim
    \epsilon
    \exp(-nC_1\epsilon^2).
\end{align*}
By setting $\epsilon = n^{-1/2}$, we obtain $\epsilon\exp(- n C_1 \varepsilon^2) = n^{-1/2} \exp(-C_1)$, which means that
\begin{align*}
    \inf_{\tGn\in\calm_N(\Theta)}
    \sup_{G\in\calm_N(\Theta)\setminus\calm_{\ns-1}(\Theta)}
    \bbE_{f_G}[\call_{2,r}(\tGn,G)]
    \gtrsim
    n^{-\frac{1}{2}},
\end{align*}
Consequently, we establish the result for Theorem \ref{thm:dtwor_loss_linear}.
\end{proof}

\begin{proof}[Proof of Lemma \ref{lemma:dtwor_loss}]
Now we want to
demonstrate that the following limit holds true for any $ r \geq 1 $:
\begin{align}
\label{pfeq:dtwor_loss_linear_aim}
    \lim_{\epsilon\to0}
    \inf_{G\in\calm_N(\Theta):\call_{2,r}(G,\Gs)\leq\epsilon}
    \frac{\|f_G-f_{\Gs} \|_{L^2(\mu)}}{\call_{2,r}(G,\Gs)}=0.
\end{align}
To achieve this, we need to construct a sequence of mixing measures $ (G_n) $ that satisfies
$\call_{2,r}(G_n,\Gs)\to0$ and 
\begin{align*}
    \frac{\|f_G-f_{\Gs} \|_{L^2(\mu)}}{\call_{2,r}(G,\Gs)}\to0,
\end{align*}
as $n\to\infty$. 
Recall we consider that at least one expert parameter $\alpha_i^*$,
where $i\in\calAj,j\in[\ns]$ in the over-specified gating parameters Voronoi cell 
equals $0_d$.
WLOG, assume $\alpha^*_1=0_d$.
Next, we consider the sequence $(G_n)$ with $k^* + 1$ atoms, in which
for over-specified parameters $i=1,2$: 
$A_i^n= A^*_1=0_{d\times d}$,
$b_i^n= b^*_1=0_{d}$,
$c_1^n=c_2^n$ such that 
\begin{align*}
    \sum_{i=1}^2\frac{1}{1+\exp(-c^n_i)}=\frac{1}{1+\exp(-c^*_1)}+\frac{1}{n^{r+1}},
\end{align*}
$\alpha_i^n=\alpha^*_1=0_{d}$,
$\beta_1^n=\beta_1^*+1/n$ and
$\beta_2^n=\beta_1^*-1/n$.
And for exactly-specified parameters $i=3,\cdots,\ns+1$: 
$A_i^n= A^*_{i-1}$,
$b_i^n= b^*_{i-1}$,
$c_i^n= c^*_{i-1}$,
$\alpha_i^n= \alpha^*_{i-1}$,
$\beta_i^n= \beta^*_{i-1}$.
Recall the construction of $\call_{2,r}$ loss, we will have
\begin{align*}
    \call_{2,r}{(G_n,\Gs)}=
    \frac{1}{n^{r+1}}
    +\frac{2}{n^r}
    =\mathcal{O}(n^{-r}).
\end{align*}
Now recall the Taylor expansion
in equation \eqref{eq:done_taylor_expension} for $f_{G_n}(x)-f_{\Gs}(x)$:
\begin{align*}
    &f_{G_n}(x)-f_{\Gs}(x)=
    \sum_{i=1}^2
    \left[
    \frac{(\alpha_i^n)^{\top}x+\beta_i^n}{1+
    \exp(-x^{\top}A^n_ix-(b^n_i)^{\top}x-c_i^n)
    }
    -\frac{(\alpha_1^*)^{\top}x+\beta_1^*}{1+
    \exp(-c_i^n)
    }
    \right]
    :=\Ione_n
    \\&
    +
    \left[
    \sum_{i=1}^2
    \frac{1}{1+
    \exp(-c_i^n)
    }
    -
    \frac{1}{1+
    \exp(-c_1^*)
    }
    \right]
    \cdot
    \left[(\alpha_1^*)^{\top}x+\beta_1^*\right]
    :=\Itwo_n\nonumber
    \\&
    +
    \sum_{i=3}^{\ns+1}
    \left[
    \frac{(\alpha_i^n)^{\top}x+\beta_i^n}{1+
    \exp(-x^{\top}A^n_ix-(b^n_i)^{\top}x-c_i^n)
    }
    -
    \frac{(\alpha_{i-1}^*)^{\top}x+\beta_{i-1}^*}{1+
    \exp(-x^{\top}A^*_{i-1}x-(b^*_{i-1})^{\top}x-c_{i-1}^*)
    }
    \right]
    :=\Ithree_n.
\end{align*}
From our construction for the sequence $(A_i^n,b_i^n,c_i^n,\alpha_i^n,\beta_i^n)_{i=1}^{\ns+1}$, its easy to verify that 
\begin{align*}
    f_{G_n}(x)-f_{\Gs}(x)= 
    \sum_i^2
    \left[ 
    \frac{\beta_i^n}{1+\exp(-c^n_i)}
    -
    \frac{\beta_1^*}{1+\exp(-c^n_i)}
    \right]
    +
    \frac{1}{n^{r+1}}\left[(\alpha_1^*)^{\top}x+\beta_1^*\right]
    =
    \frac{\beta_1^*}{n^{r+1}}.
\end{align*}
Based on the above result, we conclude that $ {[f_{G_n}(x) - f_{\Gs}(x)]}/{\call_{2,r}(G_n, \Gs)} \to 0 $ for almost every $ x $. As a result, $ {\|f_{G_n} - f_{\Gs}\|_{L^2(\mu)}}{\call_{2,r}(G_n, \Gs)} \to 0 $ as $ n \to \infty $. This establishes the claim stated in equation \eqref{pfeq:dtwor_loss_linear_aim}.

\end{proof}

\subsection{Proof of Theorem \ref{thm:dthree_loss}}
\label{proof:dthree_loss}

\begin{proof}
Following from the result of 
Theorem \ref{thm:function-convergence-misspecified}, it is sufficient to show that the following inequality holds true:
    \begin{align}
    \label{pfeq:dthree_loss_aim}
        \inf_{\G\in\calm_N(\Theta)}
        \frac{\Vert f_{G}-f_{\lG}\Vert_{L^2(\mu)}}
        {\lthree}
        >0,
    \end{align}
for any mixing measure ${\lG\in\lcalm_N(\Theta)}$.
To prove the above inequality, we follow a similar approach to the proof in Appendix \ref{proof:done_loss}, dividing the analysis into a local part and a global part. However, since the arguments for the global part remain the same (up to some notational changes) in the over-specified setting, they are omitted.

Therefore, for an arbitrary mixing measure $ \lG := \sum_{i=1}^k\frac{1}{1+\exp(-\bci)}\delta_{(\bai, \bbi,\betai)}\in\lcalm_N(\Theta) $, we focus exclusively on demonstrating that:
\begin{align}
\label{pfeq:dthree_local}
    \lim_{\varepsilon\to0}
    \inf_{G\in\calm_N(\Theta):\lthree\leq\varepsilon}
    \frac{\Vert f_{G}-f_{\lG}\Vert_{L^2(\mu)}}{\lthree}
    >0.
\end{align}
Assume, for contradiction, that the above claim does not hold. Then there exists a sequence of mixing measures
$G_n=\sum_{i=1}^{k}\frac{1}{1+\exp(-c_i^n)}\delta_{(A_i^n,b_i^n,\eta_i^n)}$ in
$\calm_N(\Theta)$ such that as $n\to\infty$, we get
\begin{align}
    \begin{cases}
        \call_{3n}:=\call_3(G_n,\lG)\to 0,\\
        \|f_{G_n}-f_{\lG} \|_{L^2(\mu)}/\call_{3n}\to0.
    \end{cases}
\end{align}
Let us denote by $\calA_i^n:=\calA_i(G_n)$ a Voronoi cell of $G_n$ generated by the $j$-th components of $\lG$. 
Since our analysis is asymptotic, we can assume that the Voronoi cells are independent of the sample size, i.e. $\calAj=\calA_i^n$.

In Dense Regime, since $G_n$ and $\lG$ have the same number of atoms $N$, and $\call_{3n} \to 0$, it follows that each Voronoi cell $\calA_i$ contains precisely one element for all $i \in [N]$. Without loss of generality, we assume $\calA_i = \{i\}$ for all $i \in [N]$. This ensures that $(A_{i}^n, b_{i}^n, c_{i}^n,\eta_i^n) \to (\bai, \bbi, \bci,\betai)$ as $n \to \infty$ for every $i \in [N]$.

Consequently, the Voronoi loss $\call_{3n}$ can be expressed as:  
\begin{align}
\call_{3n} := \sum_{i=1}^N 
\left( 
\|\Delta \bai^n\| + 
\|\Delta \bbi^n\| + 
|\Delta \bci^n | + 
\|\betai^n\| 
\right),
\end{align}
where the increments are given by:  
$
\Delta \bai^n = A_{i}^n - \bai, 
\Delta \bbi^n = b_{i}^n - \bbi, 
\Delta \bci^n = c_{i}^n - \bci, 
\Delta \betai^n = \eta_i^n - \betai.
$

    

We now break the proof of the local part into the following three steps:

\textbf{Step 1 - Taylor expansion:}
In this step, we decompose the term $f_{G_n}(x) - f_{\lG}(x)$ using a Taylor expansion. First, let us denote
$\sigma(x,A,b,c):=\frac{1}{1+\exp(-x^{\top}Ax-b^{\top}x-c)}$,
then we have that
\begin{align}
\label{eq:dthree_taylor_expension}
    &f_{G_n}(x)-f_{\lG}(x)\nonumber\\
    &=\sum_{i=1}^{N}
    \left[
    \frac{1}{1+
    \exp(-x^{\top}A^n_ix-(b^n_i)^{\top}x-c_i^n)
    }\cdot
    \cale(x,\eta^n_i)
    -
    \frac{1}{1+
    \exp(-x^{\top}\bai x-(\bbi)^{\top}x-\bci)
    }\cdot
    \cale(x,\betai)
    \right]
    \nonumber
    \\&
    =
    \sum_{i=1}^N
    \sum_{|\alpha|=1}
    \frac{1}{\alpha!}
    (\Delta\bai^n)^{\alpha_1}
    (\Delta\bbi^n)^{\alpha_2}
    (\Delta\bci^n)^{\alpha_3}
    (\Delta\betai^n)^{\alpha_4}
    \frac{\partial^{|\alpha_1|+|\alpha_2|+|\alpha_3|}\sigma}{\partial A^{\alpha_1} \partial b^{\alpha_2} \partial c^{\alpha_3}}(x,\bai,\bbi,\bci)
    \frac{\partial^{|\alpha_4|} \cale}{\partial \eta^{\alpha_4}}(x,\betai)
    +R_1(x)    
    \nonumber
    \\&
    =
    \sum_{i=1}^N
    \sum_{|\alpha|=1}
    S^n_{i,\alpha_1,\alpha_2,\alpha_3,\alpha_4}
    \frac{\partial^{|\alpha_1|+|\alpha_2|+|\alpha_3|}\sigma}{\partial A^{\alpha_1} \partial b^{\alpha_2} \partial c^{\alpha_3}}(x,\bai,\bbi,\bci)
    \frac{\partial^{|\alpha_4|} \cale}{\partial \eta^{\alpha_4}}(x,\betai)
    +R_1(x),    
\end{align}
where $R_1(x)$ is a Taylor remainder such that $R_1(x)/\call_{3n}\to0$ as $n\to\infty$.
Now we could denote 
\begin{align}
    S^n_{i,\alpha_{1:4}}&=
    \frac{1}{\alpha!}
    (\Delta\bai^n)^{\alpha_1}
    (\Delta\bbi^n)^{\alpha_2}
    (\Delta\bci^n)^{\alpha_3}
    (\Delta\betai^n)^{\alpha_4},
    ~
    i\in[N], 
    \alpha\in\Real^{d\times d}\times\Real^d\times\Real\times\Real^q
    \label{pfeq:dthree_loss_notation}
\end{align}
where
$\sum_{i=1}^4|\alpha_i|=1$.

\textbf{Step 2 - Non-vanishing coefficients:}
Now we claim that at least one among the ratios
$S^n_{i,\alpha_{1:4}}/\call_{3n}$
will not vanish as n goes to infinity.
We prove by contradiction that all of them converge to zero when $n\to 0$:
\begin{align*}
\frac{S^n_{i,\alpha_{1:4}}}{\call_{3n}}\to0.
\end{align*}
for any $i\in[N], 
    \alpha\in\Real^{d\times d}\times\Real^d\times\Real\times\Real^q$ 
such that
$\sum_{i=1}^4|\alpha_i|=1$. 

Now, consider 
$1\leq i\leq N$,
for arbitrary
$ u,v \in[d]$,
let $\alpha_1=e_{d\times d,uv}, \alpha_2=0_{d}, \alpha_3=0$ and $\alpha_4=0_{q}$,
we will have 
\begin{align*}
\frac{1}{\call_{3n}}
\sum_{i=1}^N
\left|
\Delta (\bai^n)^{(uv)}
\right|
=
\frac{1}{\call_{3n}}
\left| 
S^n_{i,e_{d\times d,uv},0_d,0,0_q}
\right|
\to 0,~ n\to \infty.
\end{align*}
Then by taking the summation of the term with $u,v\in[d]$, we will have
\begin{align*}
    \frac{1}{\call_{3n}}
    \sum_{i=1}^N
    \|\Delta \bai^n \|_1
    =
    \frac{1}{\call_{3n}}
    \sum_{i=1}^{N}
    \sum_{u=1}^d
    \sum_{v=1}^d
    | S^n_{i,e_{d\times d,uv},0_d,0,0_q}|\to0.
\end{align*}
Recall the topological equivalence between $L_1$-norm and $L_2$-norm on finite-dimensional vector space over $\Real$, we will have
\begin{align}
\label{pfeq:dthree_loss_coefficient1}
    \frac{1}{\call_{3n}}
    \sum_{i=1}^N
    \|\Delta \bai^n \|
    \to0.
\end{align}
Following a similar argument, 
since
\begin{align*}
    \frac{1}{\call_{3n}}
    \sum_{i=1}^N
    \|\Delta \bbi^n \|_1
    &=
    \frac{1}{\call_{3n}}
    \sum_{i=1}^N
    \sum_{u=1}^d
    | S^n_{i,0_{d\times d},e_{d,u},0,0_q}|\to0,
    \\
    \frac{1}{\call_{3n}}
    \sum_{i=1}^N
    |\Delta \bci^n |
    &=
    \frac{1}{\call_{3n}}
    \sum_{i=1}^N
    | S^n_{i,0_{d\times d},0_{d},1,0_q}|\to0,
    \\
    \frac{1}{\call_{3n}}
    \sum_{i=1}^N
    \|\Delta \betai^n \|_1
    &=
    \frac{1}{\call_{3n}}
    \sum_{i=1}^N
    \sum_{w=1}^q
    | S^n_{i,0_{d\times d},0_{d},0,e_{q,w}}|\to0,
\end{align*}
we obtain that 
\begin{align}
\label{pfeq:dthree_loss_coefficient2}
    \frac{1}{\call_{3n}}
\sum_{i=1}^N
    \|\Delta \bbi^n \|\to0,~
    \frac{1}{\call_{3n}}
\sum_{i=1}^N
    |\Delta \bci^n |\to0,~
    \frac{1}{\call_{3n}}
\sum_{i=1}^N
    \|\Delta \betai^n \|\to0.
\end{align}
Now taking
the summation of limits in equations
\eqref{pfeq:dthree_loss_coefficient1} - 
\eqref{pfeq:dthree_loss_coefficient2},
we could deduce that
    \begin{align*}
    1=\frac{\call_{3n}}{\call_{3n}}
    =
    \frac{1}{\call_{3n}}\cdot
    \sum_{i=1}^N
    \left(
    \|\Delta \bai^n \|+
    \|\Delta \bbi^n \|+
    |\Delta \bci^n |+
    \|\Delta \betai^n \|
    \right)
    \to 0,
\end{align*}
as $n\to\infty$,
which is a contradiction.
Thus, at least one among the ratios
$S^n_{i,\alpha_{1:4}}/\call_{3n}$
must not approach zero as $n\to\infty$.
Let us denote by $m_n$ the maximum of the absolute values of those elements.
It follows from the previous result that $1/m_n\not\to \infty$ as $n\to\infty$.

\textbf{Step 3 - Application of Fatou’s lemma:} 
In this step, we apply Fatou's lemma to obtain the desired inequality in equation \eqref{pfeq:dthree_local}.
Recall in the beginning we have assumed that
$
        \|f_{G_n}-f_{\lG} \|_{L^2(\mu)}/\call_{3n}\to0
$
and the topological equivalence between $L_1$-norm and $L_2$-norm on finite-dimensional vector space over $\Real$,
we obtain $
        \|f_{G_n}-f_{\lG} \|_{L^1(\mu)}/\call_{3n}\to0
$. 
By applying the Fatou’s lemma, we get
\begin{align*}
    0=\lim_{n\to\infty}\frac{\|f_{G_n}-f_{\lG} \|_{L^1(\mu)}}{m_n \call_{3n}}
    \geq
    \int\liminf_{n\to\infty}
    \frac{|f_{G_n}(x)-f_{\lG}(x)|}{m_n \call_{3n}}d\mu(x)
    \geq 0.
\end{align*}
This result suggests that for almost every $x$, 
\begin{align}
\label{pfeq:dthree_fatou}
    \frac{f_{G_n}(x)-f_{\lG}(x)}{m_n \call_{3n}}\to 0.
\end{align}
Let us denote 
\begin{align*}
    \frac{S^n_{i,\alpha_{1:4}}}{m_n \call_{3n}}\to s_{i,\alpha_{1:4}},
\end{align*}
as $n\to\infty$ with a note that at least one among the limits $s_{i,\alpha_{1:4}}$
is non-zero.
Then from equation \eqref{pfeq:dthree_fatou},
we will have
\begin{align*}
    \sum_{i=1}^N
    \sum_{|\alpha|=1}
    s^n_{i,\alpha_1,\alpha_2,\alpha_3,\alpha_4}
    \frac{\partial^{|\alpha_1|+|\alpha_2|+|\alpha_3|}\sigma}{\partial A^{\alpha_1} \partial b^{\alpha_2} \partial c^{\alpha_3}}(x,\bai,\bbi,\bci)
    \frac{\partial^{|\alpha_4|} \cale}{\partial \eta^{\alpha_4}}(x,\betai)
    =0,   
\end{align*}
for almost every $x$.
Note that the expert function $\cale(\cdot,\eta)$ is weakly identifiable, then the above equation implies that
\begin{align*}
     s_{i,\alpha_{1:4}}=0,
\end{align*}
for any $i\in[N]$,   
    $(\alpha_1,\alpha_2,\alpha_3,\alpha_4)\in\bbN^{d\times d}\times\bbN^d\times\bbN\times\bbN^q$
such that 
$\sum_{i=1}^4|\alpha_i|=1 $.
This violates that at least one among the limits in the set $\{s_{i,\alpha_{1:4}} \}$ is different from zero.

Thus, we obtain the local inequality in equation \eqref{pfeq:dthree_local}. 
\end{proof}

\subsection{Convergence to a Single Sigmoid under Vanishing Gating}
\label{appendix:cov_single_sigmoid}

In this subsection, we formally justify the statement that the limit of a sum of two sigmoid functions can reduce to a single sigmoid function only when one of the experts effectively vanishes. This result is used in Section~\ref{sec:problem-setup} to explain why the regression function learned by the over-specified model converges to a single-expert form. The key condition for this collapse is that the gating parameters of the redundant expert must vanish asymptotically. We state and prove this result below.

\begin{proposition}
\label{lem:single_sigmoid_limit}
Let $(\widehat{A}_i^n, \widehat{b}_i^n, \widehat{c}_i^n) \to (A_i^*, b_i^*, c_i^*)$ in probability for $i=1,2$. Suppose that for $\mu$-almost every $x \in \mathbb{R}^d$,
\begin{align*}
    \sum_{i=1}^2
    \frac{1}{1+\exp\left(-x^\top \widehat{A}_i^n x - (\widehat{b}_i^n)^\top x - \widehat{c}_i^n\right)}
    \xrightarrow{P}
    \frac{1}{1+\exp\left(-x^\top A_1^* x - (b_1^*)^\top x - c_1^*\right)}.
\end{align*}
Then it must hold that $A_2^* = 0_{d \times d}$, $b_2^* = 0_d$, and $c_2^* = +\infty$ (or tends to $+\infty$ in probability), i.e., the second sigmoid term vanishes in the limit.
\end{proposition}

\begin{proof}
Let us denote the sigmoid function as $\sigma(z) = 1 / (1 + \exp(-z))$, and consider the limiting expressions:
\begin{align*}
    \sigma_1(x) &= \frac{1}{1 + \exp(-x^\top A_1^* x - (b_1^*)^\top x - c_1^*)}, \\
    \sigma_2(x) &= \frac{1}{1 + \exp(-x^\top A_2^* x - (b_2^*)^\top x - c_2^*)}.
\end{align*}
Suppose toward a contradiction that $(A_2^*, b_2^*) \neq (0, 0)$ or $c_2^*$ is finite. Then there exists a set of positive $\mu$-measure where $\sigma_2(x)$ is bounded away from $0$. In that case, the sum $\sigma_1(x) + \sigma_2(x)$ must strictly exceed $\sigma_1(x)$ for such $x$, and hence cannot equal $\sigma_1(x)$.

But the convergence in the hypothesis implies that for $\mu$-almost every $x$,
\begin{align*}
    \sigma_1^n(x) + \sigma_2^n(x) \xrightarrow{P} \sigma_1(x),
\end{align*}
which can only hold if $\sigma_2^n(x) \to 0$ in probability. This, in turn, requires that the argument of the exponential in $\sigma_2^n(x)$ tends to $+\infty$ in probability for almost every $x$, i.e.,
\[
x^\top A_2^* x + (b_2^*)^\top x + c_2^* \to +\infty.
\]
This is only possible if $A_2^* = 0$, $b_2^* = 0$, and $c_2^* \to +\infty$. Hence, the second sigmoid term vanishes in the limit.
\end{proof}

\section{Additional Results}
\label{appsec:additional-results}

\subsection{Sample Complexity under the Partially Quadratic Score Function}
\label{appendix:analysis_quadratic_mono}



In this section, we proceed the analysis of the sigmoid gating MoE with partially quadratic affinity score function based on the regression frame work in equation \eqref{eq:quadratic_MoE}, the corresponding regression function is redefined as follows:
\begin{align}
    {\tilde{f}}_{\Gs}(x):=\sum_{i=1}^{\ns}\frac{1}{1+
    \exp(-x^{\top}A^*_ix-c_i^*)
    }\cdot
    \cale(x,\ei)
\end{align}
where $\{(A_i, c_i, \eta_i)\in\tTheta,i=1,\cdots,N \}$ is the set of learnable parameters,
$\tTheta\subseteq\mathbb{R}^{d\times d}\times\Real\times\Real^{q}$ stands for the parameter space
and 
$\tcalm_N(\tTheta):=\{ \tG=\sum_{i=1}^{N^\prime}\frac{1}{1+\exp(-c_i)}\delta_{(A_i,\eta_i)}:1\leq N^\prime \leq N, (A_i,c_i,\eta_i)\in\tTheta \}$
is the set of all mixing measures with at most $N$ atoms.
We assume $N>\ns$.
And for simplicity, we still denote $\tTheta$ as $\Theta$ , $\tcalm_N(\tTheta)$ as $\calm_N(\Theta)$ and $\tG$ as $G$ in the following.

In contrast to the fully quadratic score function, the first-degree monomial term $ b^\top x $ has been omitted from the scoring function. Consequently, the least squares estimator in this context is modified as follows:
\begin{align}
\label{eq:lse-mono}
    \tGn:=\argmin_{G\in\calm_N(\Theta)}
    \sum_{i=1}^n
    \left(
    Y_i-\tilde{f}_{G}(X_i)
    \right)^2.
\end{align}

\subsubsection{Convergence of the Regression Function Estimator}

Similar with the polynomial case, we fit the ground-truth MoE model with a mixture of $N>\ns$, 
there must be some true atoms $(A_i^*,\eta_i^*)$ fitted by more than one component. We over-specify the true MoE model by a mixture of $N$ experts where $N>\ns$.
There exist some atoms $(\Asi,\etasi)$ approximated by at least two fitted components, the over-specified atoms.
We assume that $(\hAin,\hetain)\rightarrow(\Asone,\etasone)$ for $i=1,2$, in probability.
Then, the term $\| \tf_{\tGn}-\tf_{\Gs} \|_{L^2(\mu)}\rightarrow 0$ only when
\begin{align*}
    \sum_{i=1}^2
    \frac{1}{1+\exp
    \left(
    -x^{\top}\hAin x-\hcin
    \right)
    }
    \rightarrow
    \frac{1}{1+\exp
    \left(
    -x^{\top}\Asone x-\csone
    \right)
    },
\end{align*}
as $n\rightarrow\infty$ for $\mu$-almost every $x$, which occurs only when $\Asone=0_d$.
Following the approach outlined in Section \ref{sec:problem-setup}, we will partition our analysis into two complementary regimes of the gating parameters:
\begin{enumerate}
    \item[(i)] \emph{Sparse Regime}: all the over-specified gating parameters are zero: $\Asi=0_{d\times d}$ ;
    \item[(ii)] \emph{Dense Regime}: at least one among the over-specified gating parameters is non-zero: $\Asi\neq0_{d\times d}$.
\end{enumerate}
We now present the convergence behavior of the regression function estimator under each of the two regimes, respectively.








\begin{proposition}
\label{thm:function-convergence-specified-mono}
Under the sparse regime and with the least squares estimator 
$\tGn$ defined in equation \eqref{eq:lse-mono}, 
the regression estimator $\tf_{\tGn}$ admits the following rate of convergence to $\tf_{\Gs}$:
    \begin{align}
    \label{eq:function-convergence-specified-mono}
        \Vert \tf_{\tGn}-\tf_{\Gs}\Vert_{L^2(\mu)}=
        \mathcal{O}_P\left(
        \sqrt{
        {\log(n)}/{n} 
        }
        \right).
    \end{align}
\end{proposition}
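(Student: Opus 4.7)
The plan is to follow exactly the strategy used for Proposition~\ref{thm:function-convergence-specified} in Appendix~\ref{appendix:proof_function_convergence}, with the parameter space trimmed from $(A,b,c,\eta)$ to $(A,c,\eta)$. The master tool is Lemma~\ref{app_lemma:prop_bound} applied to the class
\begin{align*}
    \widetilde{\calf}_N(\Theta) := \{\tf_G : G \in \calm_N(\Theta)\},
\end{align*}
together with the localized version $\widetilde{\calf}_N(\Theta,\delta)$. It suffices to prove the bracketing entropy bound
\begin{align*}
    H_B(\epsilon, \widetilde{\calf}_N(\Theta), \|\cdot\|_{L^2(\mu)}) \lesssim \log(1/\epsilon), \quad \forall\, \epsilon \in (0,1/2],
\end{align*}
since from this estimate the bracketing entropy integral is bounded by $\Psi(\delta) = \delta\sqrt{\log(1/\delta)}$ (which satisfies the required monotonicity of $\Psi(\delta)/\delta^2$), and choosing $\delta_n = \sqrt{\log(n)/n}$ yields the desired rate in equation~\eqref{eq:function-convergence-specified-mono}.

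First I would establish the entropy bound via a direct covering argument. Because $\Theta$ is compact, decompose it as $\widetilde{\Delta} = \{(A,c):(A,c,\eta)\in\Theta\}$ and $\Omega = \{\eta:(A,c,\eta)\in\Theta\}$, and construct $\tau$-covers $\widetilde{\Delta}_\tau$ and $\Omega_\tau$ with cardinalities $\mathcal{O}(\tau^{-(d^2+1)N})$ and $\mathcal{O}(\tau^{-qN})$ respectively. For an arbitrary $G = \sum_i \frac{1}{1+\exp(-c_i)}\delta_{(A_i,\eta_i)} \in \calm_N(\Theta)$, choose nearest neighbors $(\check{A}_i,\check{c}_i) \in \widetilde{\Delta}_\tau$ and $\check{\eta}_i \in \Omega_\tau$, and form the intermediate mixing measure $\check{G}$. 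Two triangle-inequality steps, one replacing $\eta_i$ by $\check{\eta}_i$ (using Lipschitz continuity of $\cale(x,\cdot)$) and one replacing $(A_i,c_i)$ by $(\check{A}_i,\check{c}_i)$ (using Lipschitz continuity of $\sigma$ together with boundedness of $\cale$ and of $\|x\|$), then give $\|\tf_G - \tf_{\check{G}}\|_{L^\infty} \lesssim \tau$. Hence the $L^\infty$ covering number of $\widetilde{\calf}_N(\Theta)$ is polynomial in $1/\tau$, so passing from covers to brackets of size $2\tau$ in $L^2(\mu)$ gives $H_B(2\tau,\widetilde{\calf}_N(\Theta),\|\cdot\|_{L^2(\mu)}) \lesssim \log(1/\tau)$, which is exactly the claim.

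The core of the argument is nearly identical to Appendix~\ref{appendix:proof_function_convergence}, so the main obstacle is cosmetic rather than conceptual: one needs to verify that dropping the linear term $b^{\top}x$ from the score does not alter the Lipschitz constants used in the sigmoid approximation step. Since the sigmoid function remains $1$-Lipschitz and the input space is bounded, the required bound
\begin{align*}
    \left|\sigma(x,A_i,c_i) - \sigma(x,\check{A}_i,\check{c}_i)\right| \lesssim \|A_i-\check{A}_i\|\cdot\|x\|^2 + |c_i - \check{c}_i| \lesssim \tau
\end{align*}
follows verbatim, and no new technical ingredient is needed. Everything else, including the final application of Lemma~\ref{app_lemma:prop_bound}, transfers without modification.
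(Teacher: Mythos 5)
Your proposal is correct and follows exactly the route the paper intends: the paper's own "proof" of Proposition~\ref{thm:function-convergence-specified-mono} simply defers to the argument for Proposition~\ref{thm:function-convergence-specified}, and your sketch carries out that transfer faithfully — applying Lemma~\ref{app_lemma:prop_bound}, proving the logarithmic bracketing-entropy bound via covers of the trimmed parameter space $(A,c,\eta)$, and correctly observing that removing the linear term $b^{\top}x$ changes nothing substantive in the Lipschitz estimates. No gaps.
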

The proof of Proposition \ref{thm:function-convergence-specified-mono} can be found in Appendix \ref{proof:function-convergence-specified-mono}.
The bound~\eqref{eq:function-convergence-specified-mono} reveals that the convergence rate of the regression function estimator $\tf_{\tGn}$ to the ground-truth regression function is of parametric order $\mathcal{O}_P(\sqrt{\log(n)/n})$ under the sparse regime. 
However, under the dense regime, such convergence does not happen as previously mentioned.
Instead, the regression function estimator $\tf_{\tGn}$ converges to the closest MoE with more than $\ns$ experts to $\tf_{\Gs}$ , that is, $\tf_{\llG}$ where 
$\llG\in\llcalm_N(\Theta):=\argmin_{\G\in\calm_N(\Theta)\setminus\calm_{N^*}(\Theta)}\| \tf_{\G}-\tf_{\Gs} \|_{L_2{(\mu)}}$.
Using similar arguments to Proposition \ref{thm:function-convergence-specified-mono}, we
can also determine the convergence behavior of the regression function estimator in the following corollary:
\begin{corollary}
\label{crl:function-convergence-misspecified-mono}
Under the dense regime and with the least squares estimator 
$\tGn$ defined in equation \eqref{eq:lse-mono}, 
the regression estimator $\tf_{\tGn}$ admits the following rate of convergence to $\tf_{\llG}$:
    \begin{align*}
    \inf_{\llG\in\llcalm_N(\Theta)}
        \Vert \tf_{\tGn}-\tf_{\llG}\Vert_{L^2(\mu)}=
        \mathcal{O}_P\left(
        \sqrt{
        {\log(n)}/{n} 
        }
        \right).
    \end{align*}
\end{corollary}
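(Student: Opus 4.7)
The plan is to adapt the bracketing-entropy argument used for Proposition~\ref{thm:function-convergence-specified-mono} (which itself parallels the proof of Proposition~\ref{thm:function-convergence-specified} in Appendix~\ref{appendix:proof_function_convergence}), since the $\mathcal{O}_P(\sqrt{\log(n)/n})$ rate is driven by the complexity of the fitted function class and not by the particular regime of the gating parameters. The novelty here is only bookkeeping: the LSE $\tGn$ lives in the same class $\calm_N(\Theta)$ as in the sparse case, but the target of convergence is now an element of the restricted class $\llcalm_N(\Theta)$ rather than $\Gs$ itself.

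First, I would establish the bracketing-entropy bound $H_B(\epsilon,\{\tf_G:G\in\calm_N(\Theta)\},\|\cdot\|_{L^2(\mu)})\lesssim\log(1/\epsilon)$ by covering the compact parameter space $\Theta$ with $\tau$-nets and propagating the parameter cover to an $L^{\infty}$-cover of the function class, using the Lipschitz continuity of $x\mapsto\cale(x,\cdot)$ in $\eta$ and of the sigmoid gate in $(A,c)$ together with the boundedness of the input. Dropping the $b^{\top}x$ term in the partially quadratic score only shrinks the parameter space, so Step~1--Step~6 of the proof of Proposition~\ref{thm:function-convergence-specified} transfer verbatim. Plugging this entropy estimate into Lemma~\ref{app_lemma:prop_bound} with $\Psi(\delta):=\delta\sqrt{\log(1/\delta)}$ and $\delta_n:=\sqrt{\log(n)/n}$, and using that $\Gs\in\calm_N(\Theta)$ in both regimes, yields $\Vert\tf_{\tGn}-\tf_{\Gs}\Vert_{L^2(\mu)}=\mathcal{O}_P(\sqrt{\log(n)/n})$.

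To convert this into the claimed infimum bound, I would apply the triangle inequality
\begin{align*}
    \inf_{\llG\in\llcalm_N(\Theta)}\Vert\tf_{\tGn}-\tf_{\llG}\Vert_{L^2(\mu)}\leq\Vert\tf_{\tGn}-\tf_{\Gs}\Vert_{L^2(\mu)}+\inf_{G\in\calm_N(\Theta)\setminus\calm_{\ns}(\Theta)}\Vert\tf_G-\tf_{\Gs}\Vert_{L^2(\mu)},
\end{align*}
and argue that the second summand vanishes by constructing an explicit approximating sequence in $\calm_N(\Theta)\setminus\calm_{\ns}(\Theta)$: taking one atom of $\Gs$ and splitting it into two atoms whose gating biases $c$ differ slightly but whose sigmoid weights sum to the original weight, while keeping $(A,\eta)$ fixed at the original values, produces mixing measures with strictly more than $\ns$ atoms whose regression functions coincide with $\tf_{\Gs}$.

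The main obstacle I anticipate is precisely this last step, because the mixing-measure formulation places the Dirac at $(A_i,\eta_i)$ with $c_i$ absorbed into the weight, so naive duplication does not increase the atom count in the sense used by $\calm_{\ns}(\Theta)$. A robust workaround is to introduce an auxiliary atom at a perturbed $(A,\eta)\neq(A_j^*,\eta_j^*)$ with a very negative $c$ and then shrink the perturbation and drive $c$ toward the lower endpoint of the compact interval in $\Theta$, using continuity of $\tf_G$ in $G$ under bounded parameters to make the extra contribution arbitrarily small in $L^2(\mu)$. Should even this route be compromised by the compactness of $\Theta$, a fallback is the contradiction-plus-Fatou strategy of Appendix~\ref{proof:done_loss}: combine the global part of that argument with the identifiability of $\tf_G$ on $\llcalm_N(\Theta)$ to conclude that any limit point of $\tGn$ (in the mixing-measure topology) is an element of $\llcalm_N(\Theta)$, then transfer the parametric $\mathcal{O}_P(\sqrt{\log(n)/n})$ rate in $L^2(\mu)$ to the required infimum.
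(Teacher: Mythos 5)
Your bracketing-entropy step is sound and is exactly the route the paper intends: the corollary is presented as following ``by similar arguments'' from Proposition~\ref{thm:function-convergence-specified-mono}, whose proof uses only that $\Gs\in\calm_N(\Theta)$ and the $\log(1/\epsilon)$ entropy bound, neither of which depends on the gating regime. So $\Vert\tf_{\tGn}-\tf_{\Gs}\Vert_{L^2(\mu)}=\mathcal{O}_P(\sqrt{\log(n)/n})$ holds in the dense regime as well, and your triangle inequality correctly isolates the one remaining task: showing that $\inf_{G\in\calm_N(\Theta)\setminus\calm_{\ns}(\Theta)}\Vert\tf_G-\tf_{\Gs}\Vert_{L^2(\mu)}$ equals zero and is attained. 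This is not optional bookkeeping: if that infimum were some $\delta_0>0$, then every $\llG\in\llcalm_N(\Theta)$ would sit at distance exactly $\delta_0$ from $\tf_{\Gs}$ while $\tf_{\tGn}\to\tf_{\Gs}$, and the stated rate would be false.

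This is where your proposal has a genuine gap. The primary construction --- splitting a component's gating bias $c$ into two biases whose sigmoid weights sum to the original --- fails for two reasons. The first you already noticed: the Dirac mass sits at $(A_i,\eta_i)$, so the duplication does not increase the atom count. The second is more fundamental and is precisely the dense-regime phenomenon the whole paper turns on: $\sigma(x^{\top}Ax+c_1)+\sigma(x^{\top}Ax+c_2)$ is not equal to $\sigma(x^{\top}Ax+c^*)$ as a function of $x$ unless $A=0_{d\times d}$, so even granting extra atoms this duplication would not reproduce $\tf_{\Gs}$. Your workarounds (a perturbed auxiliary atom with very negative $c$, or a limit-point argument) only make the spurious contribution small --- compactness of $\Theta$ bounds $c$ from below and keeps the sigmoid weight bounded away from zero --- so they cannot show the infimum is zero, let alone attained, which the definition of $\llcalm_N(\Theta)$ as an $\argmin$ requires. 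The missing idea is to split the \emph{expert} rather than the gate: for $\cale(x,\eta)=\lambda\phi(\alpha^{\top}x+\beta)$ write $\lambda_1^*=\lambda'+\lambda''$ with $\lambda'\neq\lambda''$ and form two distinct atoms $(A_1^*,(\alpha_1^*,\beta_1^*,\lambda'))$ and $(A_1^*,(\alpha_1^*,\beta_1^*,\lambda''))$, each carrying the original weight $1/(1+\exp(-c_1^*))$; their contributions sum exactly to the first component of $\tf_{\Gs}$, yielding $G'\in\calm_N(\Theta)\setminus\calm_{\ns}(\Theta)$ with $\tf_{G'}=\tf_{\Gs}$ (an analogous additive split of the expert output works for linear and polynomial experts). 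Hence $\tf_{\llG}=\tf_{\Gs}$ for every $\llG\in\llcalm_N(\Theta)$ and the corollary reduces to the proposition, which is what the paper implicitly assumes.
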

Similar with what we have discussed in Section \ref{sec:problem-setup}, given the results of Propositions \ref{thm:function-convergence-specified-mono} and \ref{crl:function-convergence-misspecified-mono} and a loss function $\call(G, \Gs)$ among parameters satisfying $\call(\tGn, \Gs)\lesssim \Vert \tf_{\tGn}-\tf_{\Gs}\Vert_{L^2(\mu)}$,
we will have that $\call(\tGn, \Gs)=\mathcal{O}_P(\sqrt{{\log(n)}/{n} })$.
From this, we can derive the expert convergence rate, which will be analyzed in the following.

\subsubsection{Sparse Regime of Gating Parameters}

Recall that under the sparse regime, all the over-specified parameters $\Asi=0_{d\times d}$.
Assume $\{\Asi\}_{i=1}^{\bn}$ are over-specified parameters, 
i.e.,
those fitted by at least two estimators,
where $1\leq\bn\leq\ns$. And the remaining parameters fitted by exactly one estimator $\{\Asi \}_{\bn+1}^{\ns}$:
\begin{align*}
    \underbrace{A^*_1,\cdots,A^*_{\bn}}_{\text{over-specified}},~
    \underbrace{A^*_{\bn+1},\cdots,A^*_{\ns}}_{\text{exactly-specified}}.
\end{align*}
As discussed in Section \ref{sec:sample-efficiency}, to derive the expert convergence rate, we decompose the difference $ \tf_{\tGn} - \tf_{\Gs} $ into a sum of linearly independent terms. This requires considering the Taylor expansion of the product of the sigmoid gating function and the expert function, given by  
\begin{align*}
     F(x; A, c, \eta) := \sigma(x^{\top}Ax + c) \cdot \mathcal{E}(x, \eta)  
    = \frac{1}{1+\exp(-x^{\top}Ax - c)} \cdot \mathcal{E}(x, \eta).
\end{align*}
To ensure that the parameter discrepancies converge to zero—thereby guaranteeing parameter and expert convergence—along with $\Vert f_{\hGn} - f_{\Gs} \Vert_{L^2(\mu)} \to 0 $, it is crucial to clarify the linear independence properties of $F(x; A, c, \eta) $, a fundamental expert component with a partially quadratic score in the mixture of experts model. Therefore, we impose a \textit{partial-strong identifiability} condition, defined as follows:
\begin{definition}[Partial-strong identifiability]
\label{def:strong_identifiability-mono}
    An expert function $\cale(x,\eta)$ is \textit{partial-strongly identifiable} if it is twice differentiable w.r.t its parameter $\eta$ for $\mu$-almost all $x$ and, for any positive integer $\ell$
    and any pair-wise distinct choices of parameters 
    $\left\{ (A_i,  c_i,\eta_i) \right\}_{i=1}^{\ell}$, 
    the functions in the classes
    \begin{align*}
        \left\{  
        \frac{\partial^{|\gamma_1|+|\gamma_2| } F}{\partial A^{\gamma_1} \partial\eta^{\gamma_2}}
    (x,0_{d\times d}, c_i,\eta_i):
    i\in[\ell],
    \sum_{i=1}^2|\gamma_i|
    \in[2]
        \right\}
    \text{ and }
        \left\{  
        \frac{\partial
        F}
        {\partial A^{\tau_1} \partial c^{\tau_2}\partial\eta^{\tau_3}}
    (x,A_i, c_i,\eta_i):
    i\in[\ell],
    \sum_{i=1}^3|\tau_i|
    =1
        \right\}
    \end{align*}
    are linearly independent, for $\mu$-almost all $x$,
    where $(\gamma_1,\gamma_2 )\in\bbN^{d\times d} \times\bbN^q$ and 
    $(\tau_1,\tau_2,\tau_3)\in\bbN^{d\times d} \times\bbN\times\bbN^q$.
\end{definition}

\textbf{Examples.}
Similar to Section \ref{sec:sparse_regime}, we consider two-layer neural networks of the form  
$
\mathcal{E}(x; (\alpha, \beta, \lambda)) = \lambda \phi(\alpha^{\top}x + \beta),
$
where $ \phi $ is an activation function and $ (\alpha, \beta, \lambda) \in \mathbb{R}^d \times \mathbb{R} \times \mathbb{R} $.  
It can be verified that if $ \phi $ is the ReLU or GELU function, and if $ \alpha \neq 0_d $ and $ \lambda \neq 0 $, then the function $ x \mapsto \mathcal{E}(x; (\alpha, \beta, \lambda)) $ is partial-strongly identifiable.

However, unlike the fully quadratic score case, where the term $ b^{\top}x $ is present, its removal allows the partial-strong identifiability condition in Definition \ref{def:strong_identifiability-mono} to remain valid even when the expert function follows a polynomial form, given by $ \mathcal{E}(x; (\alpha, \beta)) = (\alpha^{\top}x + \beta)^p $ for some $ p \in \mathbb{N} $.  
Now, the function takes the form  
$F(x;A,c,\alpha,\beta)=1/[1+\exp\left(-x^{\top}Ax -c\right)]\cdot (\alpha^{\top}x+\beta)^p$.
Notably, the sigmoid function contributes only a second-order term $ -x^{\top}Ax $ along with a constant, while the expert function contains only a first-order term $ \alpha^{\top}x $ and a constant. This distinct structure ensures that parameter interactions disappear as PDEs in equation \eqref{eq:PDE}.


\textbf{Voronoi loss.}
For a mixing measure $G$ with $1\leq \np\leq N$ atoms, we allocate its atoms across the Voronoi cells $\{\mathcal{A}_j\equiv
    \mathcal{A}_j(G),j\in[\ns] \}$ generated by the atoms of $\Gs$, where 
\begin{align}
    \mathcal{A}_j:=
    \left\{
    i\in[\np]:
    \| \theta_i-\theta_j^* \|
    \leq
    \| \theta_i-\theta_{\ell}^* \|,
    \forall \ell\neq j
    \right\},
\end{align}
with $\theta_i:=(A_i,\eta_i)$ and
$\theta^*_j:=(A^*_j,\eta^*_j)$ for all $j\in[\ns]$.
Similar to the fully quadratic score case, we define the Voronoi loss function as
\begin{align}
\label{eq:done-dfour-mono}
    \lfour&:=\sum_{j=1}^{\bn}
    \left|
    \sum_{i\in\calAj} \frac{1}{1+\exp(-c_i)}-\frac{1}{1+\exp(-c_j^*)}
    \right|
    \nonumber
    +\sum_{j=1}^{\bn}
    \sum_{i\in\calAj}
    \left[
    \|\Delta A_{ij} \|^2
    +\|\Delta \eta_{ij} \|^2
    \right]
    \nonumber
    \\
    &
    +\sum_{j=\bn+1}^{\ns}
    \sum_{i\in\calAj}
    \left[
    \|\Delta A_{ij} \|
    +|\Delta c_{ij} |
    +\|\Delta \eta_{ij} \|
    \right]
\end{align}
where we denote $\Delta A_{ij}:=A_i-A_j^*$,
$\Delta c_{ij}:=c_i-c_j^*$
and $\Delta \eta_{ij}:=\eta_i-\eta_j^*$.
In the aforementioned statement, if the Voronoi cell $ \mathcal{A}_j $ is empty, the associated summation term is conventionally set to zero. Additionally, the computation of the Voronoi loss function $ \mathcal{L}_4 $ is efficient, with a computational complexity of $ \mathcal{O}(N \times \ns) $.

With the Voronoi loss function established, we are now prepared to present the parameter convergence rate in Theorem~\ref{thm:done_loss-dfour-mono}.
\begin{theorem}
\label{thm:done_loss-dfour-mono}
    If the expert function $x\mapsto\cale(x,\eta)$ is partial-strongly identifiable, then the lower bound $\Vert\tf_{G}-\tf_{\Gs}\Vert_{L^2(\mu)}\gtrsim\lfour$ 
    holds true for any $G\in\calm_N(\Theta)$.
    As a result, when combined with the regression estimation rate in Proposition \ref{thm:function-convergence-specified-mono}, this suggests that
    $$\call_4(\tGn,\Gs)=
        \mathcal{O}_P\left(
        \sqrt{
        {\log(n)}/{n} 
        }
        \right).$$
\end{theorem}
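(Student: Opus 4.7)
Given Proposition \ref{thm:function-convergence-specified-mono}, the stated rate $\call_4(\tGn,\Gs) = \mathcal{O}_P(\sqrt{\log(n)/n})$ follows at once from the functional lower bound $\|\tf_G - \tf_{\Gs}\|_{L^2(\mu)} \gtrsim \call_4(G,\Gs)$ valid for every $G \in \calm_N(\Theta)$, by specializing $G = \tGn$. Thus the substantive task is to establish this inequality. My plan is to follow the template of the proof of Theorem \ref{thm:done_loss} in Appendix \ref{proof:done_loss} and split the argument into a \emph{local} regime $\{G : \call_4(G,\Gs) \le \varepsilon\}$ for small enough $\varepsilon > 0$ and the complementary \emph{global} regime, so that it suffices to prove each infimum of $\|\tf_G - \tf_{\Gs}\|_{L^2(\mu)}/\call_4(G,\Gs)$ is strictly positive.

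\textbf{Local step via Taylor expansion.} I would argue by contradiction: suppose there exists a sequence $G_n \in \calm_N(\Theta)$ with $\call_{4n} := \call_4(G_n,\Gs) \to 0$ and $\|\tf_{G_n}-\tf_{\Gs}\|_{L^2(\mu)}/\call_{4n} \to 0$. The vanishing of $\call_{4n}$ forces, for each over-specified $j \in [\bn]$, the atoms $(A_i^n, c_i^n, \eta_i^n)$ with $i \in \calAj$ to approach $(0_{d \times d}, c_j^*, \eta_j^*)$, while for $j \in \{\bn+1,\ldots,\ns\}$ the unique fitting atom approaches $(A_j^*, c_j^*, \eta_j^*)$. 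Taylor-expand the building block $F(x;A,c,\eta) := \sigma(x^\top A x + c)\,\cale(x,\eta)$ to second order around $(0_{d\times d}, c_i^n, \eta_j^*)$ on each over-specified Voronoi cell, and to first order around $(A_j^*, c_j^*, \eta_j^*)$ on each exactly-specified cell, and isolate the sigmoid-only weight shift on the over-specified cells into the separate term $\sum_j L_j^n \cale(x,\eta_j^*)$ with $L_j^n := \sum_{i\in\calAj}\sigma(c_i^n) - \sigma(c_j^*)$. This yields a decomposition of $\tf_{G_n}(x) - \tf_{\Gs}(x)$ into derivatives of $F$ with coefficients $K^n_{j,i,\gamma_{1:2}}$, $L^n_j$, $T^n_{j,\tau_{1:3}}$ built from products of $\Delta A_{ij}^n, \Delta c_{ij}^n, \Delta\eta_{ij}^n$, plus a Taylor remainder of order $o(\call_{4n})$.

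\textbf{Non-vanishing coefficients and identifiability.} By choosing canonical multi-indices $\gamma$, $\tau$ as in the proof of Theorem \ref{thm:done_loss}, each summand appearing in the definition \eqref{eq:done-dfour-mono} of $\call_{4n}$ is dominated in absolute value by one of the ratios $|K^n_\cdot|/\call_{4n}$, $|L^n_j|/\call_{4n}$, $|T^n_\cdot|/\call_{4n}$, so simultaneous vanishing of all such ratios would contradict $\call_{4n}/\call_{4n} = 1$. Let $m_n$ be the maximum of these absolute values; extracting a subsequence yields limits $k_{j,i,\gamma_{1:2}}$, $l_j$, $t_{j,\tau_{1:3}}$ not all zero. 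Applying Fatou's lemma to $|\tf_{G_n}(x) - \tf_{\Gs}(x)|/(m_n\call_{4n})$, which tends to zero in $L^1(\mu)$ by assumption, produces a non-trivial linear combination of the derivative atoms $\partial^{|\gamma_1|+|\gamma_2|}F/\partial A^{\gamma_1}\partial \eta^{\gamma_2}$ evaluated at $(x, 0_{d\times d}, c_j^*, \eta_j^*)$, the expert atoms $\cale(x,\eta_j^*)$, and $\partial F/\partial A^{\tau_1}\partial c^{\tau_2}\partial\eta^{\tau_3}$ evaluated at $(x,A_j^*,c_j^*,\eta_j^*)$ that vanishes for $\mu$-a.e.\ $x$. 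Partial-strong identifiability (Definition \ref{def:strong_identifiability-mono}) forces all these limiting coefficients to vanish, giving the desired contradiction.

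\textbf{Global step and main obstacle.} On the region $\call_4(G,\Gs) > \varepsilon'$, a standard compactness argument over $\Theta$ combined with Fatou's lemma reduces the claim to the identifiability statement $\tf_G = \tf_{\Gs}$ $\mu$-a.e.\ $\Rightarrow$ $G \equiv \Gs$, which can be proven by mirroring Proposition \ref{prop:identifiability} with the $b$-coordinate simply absent; the sparse-regime constraint that some $\Asi = 0_{d\times d}$ again resolves the translation ambiguity that the sigmoid carries in $(A,c)$. The main obstacle I anticipate is the local step, specifically verifying that partial-strong identifiability really is strong enough to close the argument once the $b^\top x$ term is dropped: one must check that the PDE interactions of equation \eqref{eq:PDE} that ruined strong identifiability for polynomial experts (notably $\partial^2 F/\partial b\partial b^\top = \partial^2 F/\partial A \partial c$) no longer arise in the decomposition produced by the Taylor expansion above, so that the over-specified second-order derivatives in $A$ and $\eta$ at $A = 0_{d\times d}$ remain linearly independent from the first-order exactly-specified derivatives at $A = A_j^* \neq 0$. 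This is precisely what Definition \ref{def:strong_identifiability-mono} is designed to provide, but the distinct structure of the fully versus partially quadratic scores must be tracked carefully through the Taylor expansion.
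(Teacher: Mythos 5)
Your proposal is correct and follows essentially the same route as the paper: the same local/global split, the same second-order Taylor expansion of $F(x;A,c,\eta)=\sigma(x^{\top}Ax+c)\,\cale(x,\eta)$ around $(0_{d\times d},c_i^n,\eta_j^*)$ on over-specified cells and first-order expansion on exactly-specified cells, the same non-vanishing-coefficient argument via canonical multi-indices, the same Fatou's lemma step closed by Definition~\ref{def:strong_identifiability-mono}, and the same global argument via compactness and the identifiability proposition. The "obstacle" you flag at the end is resolved exactly as you anticipate — the paper simply invokes the partial-strong identifiability condition, which is tailored to the derivative families that actually appear once the $b^{\top}x$ term is dropped.
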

Proof of Theorem \ref{thm:done_loss-dfour-mono} is in Appendix \ref{proof:done_loss-dfour-mono}.
A few comments regarding the above result are in order.

\emph{(i) Parameter convergence rates:} 
From the construction of the Voronoi loss $ \mathcal{L}_4 $, the estimation of over-specified parameters $ \Asi, \etasi $ for $ i \in [\bn] $ converges at a rate of $ \mathcal{O}_P([\log(n)/n]^{1/4}) $. In the mean time, the exactly-specified parameters $ \Asi, \etasi $ for $ \bn+1 \leq i \leq \ns $ exhibit a faster convergence rate of $ \mathcal{O}_P([\log(n)/n]^{1/2}) $.

\emph{(ii) Expert convergence rates:} 
Because $\mathcal{E}(x,\eta)$ is twice differentiable (and thus Lipschitz) in $\eta$ over a bounded domain, we have, for any $i \in \mathcal{A}_j$,
$
\sup_{x} \bigl|\mathcal{E}(x,\widetilde{\eta}_i^n) - \mathcal{E}(x,\eta_j^*)\bigr|
\;\le\;
\tilde{L}\,\bigl\|\widetilde{\eta}_i^n - \eta_j^*\bigr\|,
$
where $\tilde{L}$ is a Lipschitz constant and
$
\tGn 
= 
\sum_{i=1}^{\tilde{N}_n} \frac{1}{1+\exp(-\tilde{c}^n_i)} \delta_{(\tilde{A}^n_i, \tilde{b}^n_i, \tilde{\eta}^n_i)}.
$
This implies that exactly-specified experts converge at a rate of $\mathcal{O}_P\bigl(({\log(n)/n})^{1/2}\bigr)$, while over-specified experts converge at $\mathcal{O}_P\bigl((\log(n)/n)^{1/4}\bigr)$. Consequently, exactly-specified experts require $\mathcal{O}(\epsilon^{-2})$ samples to achieve an approximation error $\epsilon$, whereas over-specified experts need $\mathcal{O}(\epsilon^{-4})$ samples for the same error.

\textbf{Sample complexity comparison under the sparse regime:} 
Recall from Section \ref{sec:sparse_regime} that polynomial experts, although not strongly identifiable, are partial-strongly identifiable. Hence, while they might require $\mathcal{O}\bigl(\exp(\epsilon^{-1/\tau})\bigr)$ samples to achieve an approximation error $\epsilon$ under a fully quadratic score function (for some constant $\tau > 0$), they only need $\mathcal{O}(\epsilon^{-4})$ samples under a partially quadratic score function.

According to \cite{akbarian2024quadratic}, linear experts with quadratic monomial gates require an exponential number of samples.
But the linear expert $\cale(x,(\alpha,\beta))=\alpha^{\top}x + \beta$  satisfies partial-strong identifiability when $\beta\neq0$, so it need only a polynomial number $\mathcal{O}(\epsilon^{-4})$ data points to attain the approximation error $\epsilon$.

Consequently, for partially quadratic score functions with linear experts, the sigmoid self-attention is more sample-efficient than the softmax self-attention under the sparse gating regime.

\subsubsection{Dense Regime of Gating Parameters}
In this section, we focus on the dense regime.
Under dense regime, for the overspecified parameters, there exists $i\in[\ns]$, s.t. $ \Asi \neq 0_{d\times d} $.
In this situation, the least squares regression estimator $f_{\hGn}\to f_{\llG}$, a regression function, where the parameters 
\begin{align*}
\llG\in\llcalm_N(\Theta):=\argmin_{G\in\calm_N(\Theta)\setminus\calm_{\ns}(\Theta)}
\|
\tf_G-\tf_{\Gs}
\|_{L^2(\mu)}.
\end{align*}
In other words, the estimators of the parameters defining $ \tf_{\tGn} $ converge to the parameters of $ \tf_{\llG} $.
WLOG, we assume that 
\begin{align*}
    \llG:=\sum_{i=1}^N
    \frac{1}{1+\exp(-\llc_i)}
    \delta_{(\lla_i,\lleta_i)}.
\end{align*}
Now we introduce a definition for \textit{partial-weak identifiability} to specify the experts that has a similar fast convergence rate as the partial-strongly identifiable experts in Definition \ref{def:strong_identifiability-mono} under sparse regime. 
Partial-weakly identifiable experts need to satisfy only a subset of the conditions required for partial-strongly identifiable experts.
\begin{definition}[Partial-weak identifiability]
\label{def:weak_identifiability-mono}
    An expert function $\cale(x,\eta)$ is \textit{partial-weakly identifiable} if it is differentiable w.r.t its parameter $\eta$ for $\mu$-almost all $x$ and, for any positive integer $\ell$
    and any pair-wise distinct choices of parameters 
    $\left\{ (A_i ,c_i,\eta_i) \right\}_{i=1}^{\ell}$, 
    the functions in the class
    \begin{align*}
        \left\{  
        \frac{\partial
        F}
        {\partial A^{\tau_1} \partial c^{\tau_2}\partial\eta^{\tau_3}}
    (x,A_i ,c_i,\eta_i):
    i\in[\ell],
    \sum_{i=1}^3|\tau_i|
    =1
        \right\}
    \end{align*}
    are linearly independent, for $\mu$-almost all $x$,
    where 
    $(\tau_1,\tau_2,\tau_3 )\in\bbN^{d\times d} \times\bbN\times\bbN^q$.
\end{definition}
\textbf{Examples.} It is worth noting that partial-strongly identifiable experts also meet the partial-weak identifiability condition. 
In particular, the previously mentioned two-layer neural networks
$
\mathcal{E}(x; (\alpha,\beta,\lambda)) 
= 
\lambda \phi(\alpha^\top x + \beta),
$
where $\phi$ is either $\mathrm{ReLU}$ or $\mathrm{GELU}$, $\alpha \neq 0_d$, and $\lambda \neq 0$, are weakly identifiable. Likewise, polynomial experts of the form
$
\mathcal{E}(x; (\alpha,\beta)) 
= 
(\alpha^\top x + \beta)^p, p \in \mathbb{N},
$
also satisfy the weak identifiability condition.

We now establish, in Theorem \ref{thm:dthree_loss-dfive-mono}, the convergence behavior of partial-weakly identifiable experts in the dense regime. Specifically, let $\mathcal{L}_5$ denote the Voronoi loss function defined by
\begin{align}
    \lfive:=
    \sum_{j=1}^N
    \sum_{i\in\calAj}
    \big[
    \|A_i-\llai\|
    +|c_i-\llci|
    +\|\eta_i-\lletai\|
    \big].
\end{align}
\begin{theorem}
\label{thm:dthree_loss-dfive-mono}
    If the function $x\mapsto\cale(x,\eta)$ is partial-weakly identifiable, then the lower bound 
    $
        \inf_{\llG\in\llcalm_N(\Theta)}
        \Vert \tf_{G}-\tf_{\llG}\Vert_{L^2(\mu)}
        \gtrsim
        \lfive    
    $
    holds true for any mixing measure $G\in\calm_N(\Theta)$.
As a consequence, we obtain that 
\begin{align*}
    \inf_{\llG\in\llcalm_N(\Theta)}
    \call_5(\tGn,\llG)=
        \mathcal{O}_P\left(
        \sqrt{
        {\log(n)}/{n} 
        }
        \right).
\end{align*}
\end{theorem}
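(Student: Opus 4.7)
\textbf{Proof proposal for Theorem~\ref{thm:dthree_loss-dfive-mono}.} The plan is to follow the blueprint established for Theorem~\ref{thm:dthree_loss} in Appendix~\ref{proof:dthree_loss}, with the necessary modifications for the partially quadratic score. In view of the regression rate in Corollary~\ref{crl:function-convergence-misspecified-mono}, it suffices to prove the lower bound
\begin{align*}
\inf_{G \in \calm_N(\Theta)} \frac{\|\tf_G - \tf_{\llG}\|_{L^2(\mu)}}{\lfive} > 0
\end{align*}
for every $\llG \in \llcalm_N(\Theta)$. I would split the analysis into a local regime $\lfive \le \varepsilon$ and a global regime $\lfive > \varepsilon'$. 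The global part is handled verbatim by the compactness-plus-identifiability argument of Appendix~\ref{proof:dthree_loss}, using an analogue of Proposition~\ref{prop:identifiability} for the partially quadratic score $x^\top A x + c$, which is in fact simpler because one fewer gating parameter block is in play.

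For the local part, I would argue by contradiction: suppose there exists a sequence $G_n = \sum_{i=1}^N (1 + \exp(-c_i^n))^{-1} \delta_{(A_i^n, \eta_i^n)}$ with $\call_{5n} := \call_5(G_n, \llG) \to 0$ and $\|\tf_{G_n} - \tf_{\llG}\|_{L^2(\mu)}/\call_{5n} \to 0$. Since $G_n$ and $\llG$ share the same number $N$ of atoms and $\call_{5n} \to 0$, each Voronoi cell $\calAj(G_n)$ is eventually a singleton, so WLOG $\calAj = \{j\}$ and $(A_j^n, c_j^n, \eta_j^n) \to (\lla_j, \llc_j, \lleta_j)$ for every $j \in [N]$. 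Setting $\tilde\sigma(x; A, c) := [1 + \exp(-x^\top A x - c)]^{-1}$, a first-order Taylor expansion around each atom $(\lla_j, \llc_j, \lleta_j)$ yields
\begin{align*}
\tf_{G_n}(x) - \tf_{\llG}(x) = \sum_{j=1}^N \sum_{|\alpha|=1} S^n_{j,\alpha_{1:3}} \, \frac{\partial^{|\alpha_1|+|\alpha_2|} \tilde\sigma}{\partial A^{\alpha_1} \partial c^{\alpha_2}}(x; \lla_j, \llc_j) \, \frac{\partial^{|\alpha_3|} \cale}{\partial \eta^{\alpha_3}}(x; \lleta_j) + R(x),
\end{align*}
where $S^n_{j,\alpha_{1:3}}$ bundles the first-order monomial in $\Delta A_j^n, \Delta c_j^n, \Delta \eta_j^n$ with $\alpha = (\alpha_1, \alpha_2, \alpha_3) \in \bbN^{d\times d} \times \bbN \times \bbN^q$, and the remainder satisfies $R(x)/\call_{5n} \to 0$ pointwise in $x$.

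Summing the absolute values $|S^n_{j,\alpha_{1:3}}|$ over the unit multi-indices $(\alpha_1, \alpha_2, \alpha_3)$ recovers, up to constants, every term comprising $\lfive$; consequently not all ratios $S^n_{j,\alpha_{1:3}}/\call_{5n}$ can vanish as $n \to \infty$, for otherwise $1 = \call_{5n}/\call_{5n} \to 0$. Let $m_n > 0$ denote the maximum of these absolute ratios, so $1/m_n \not\to \infty$. Applying Fatou's lemma to $|\tf_{G_n}(x) - \tf_{\llG}(x)|/(m_n \call_{5n})$ and passing to a subsequential limit $S^n_{j,\alpha_{1:3}}/(m_n \call_{5n}) \to s_{j,\alpha_{1:3}}$, we obtain, for $\mu$-almost every $x$, a \emph{nontrivial} linear combination of the derivative family
\begin{align*}
\left\{ \frac{\partial F}{\partial A^{\tau_1} \partial c^{\tau_2} \partial \eta^{\tau_3}}(x; \lla_j, \llc_j, \lleta_j) : j \in [N],\ \textstyle\sum_i |\tau_i| = 1 \right\}
\end{align*}
that equals zero. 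The main obstacle --- and the only place the hypothesis enters --- is concluding $s_{j,\alpha_{1:3}} = 0$ for every $(j, \alpha)$; this is precisely the content of partial-weak identifiability (Definition~\ref{def:weak_identifiability-mono}). The absence of the $b^\top x$ term in the partially quadratic score is what makes this condition easy to verify, since it removes the PDE interactions analogous to those in equation~\eqref{eq:PDE} that obstruct strong identifiability for polynomial experts in the fully quadratic case. The resulting contradiction closes the local part, and combining with the global part and Corollary~\ref{crl:function-convergence-misspecified-mono} yields the claimed $\mathcal{O}_P(\sqrt{\log(n)/n})$ rate for $\call_5(\tGn, \llG)$.
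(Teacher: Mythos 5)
Your proposal is correct and follows essentially the same route as the paper's proof: reduction to the local lower bound via Corollary~\ref{crl:function-convergence-misspecified-mono}, the singleton-Voronoi-cell observation, a first-order Taylor expansion of $\tilde\sigma(\cdot;A,c)\,\cale(\cdot;\eta)$ around the atoms of $\llG$, the non-vanishing-coefficient argument, and Fatou's lemma combined with partial-weak identifiability to reach the contradiction. The only cosmetic difference is that the paper simply omits the global part by reference to the fully quadratic case, whereas you sketch it explicitly via the compactness-plus-identifiability argument; the substance is the same.
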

Proof of Theorem \ref{thm:dthree_loss-dfive-mono} is in Appendix \ref{proof:dthree_loss-dfive-mono}.
Since $\widetilde{\eta}^n_i$ converges at a rate of $\mathcal{O}_P\bigl(\sqrt{{\log(n)}/{n}}\bigr)$, the partial-weakly identifiable expert estimators $\mathcal{E}\bigl(x,\widetilde{\eta}^n_i\bigr)$ enjoy the same convergence order. Consequently, these experts require a polynomial number of samples, $\mathcal{O}(\epsilon^{-2})$, to achieve an approximation error of $\epsilon$.

\textbf{Sample complexity comparison under the dense regime:} 
Recall that, under softmax self-attention, partial-strongly identifiable and polynomial “experts” require $\mathcal{O}(\epsilon^{-4})$ and $\mathcal{O}\bigl(\exp(\epsilon^{-1/\tau})\bigr)$ samples, respectively, to achieve an approximation error $\epsilon$. However, because these experts all satisfy partial-weak identifiability under sigmoid self-attention, they need only $\mathcal{O}(\epsilon^{-2})$ samples in the dense regime for partially quadratic scores. We therefore conclude that, in practice, where the dense regime is more common, sigmoid self-attention is more sample-efficient than its softmax counterpart.

\subsection{Proofs of the results for Partially Quadratic Scores in Appendix \ref{appendix:analysis_quadratic_mono}}
\label{appendix:proof_quadratic_mono}

\subsubsection{Proof of Proposition \ref{thm:function-convergence-specified-mono}}
\label{proof:function-convergence-specified-mono}

\begin{proof}
    The proof of Proposition \ref{thm:function-convergence-specified-mono} can be done in a similar fashion to that of Proposition \ref{thm:function-convergence-specified} in Appendix 
\ref{appendix:proof_function_convergence}.
\end{proof}

\subsubsection{Proof of Theorem \ref{thm:done_loss-dfour-mono}}
\label{proof:done_loss-dfour-mono}


\begin{proof}
In order to prove $\Vert \tf_{G}-\tf_{\Gs}\Vert_{L^2(\mu)}
        \gtrsim
        \lfour$
for any $G\in\calm_N(\Theta)$, 
it is sufficient to show that 
\begin{align}
\label{pfeq:done_aim-dfour-mono}
    \inf_{G\in\calm_N(\Theta)}
    \frac{\Vert \tf_{G}-\tf_{\Gs}\Vert_{L^2(\mu)}}{\lfour}
    >0.
\end{align}
To prove the above inequality, we consider two cases for the denominator $\lfour$: either it lies within a ball $B(0, \varepsilon)$ where the loss is sufficiently small, or it falls outside this region, where $\lfour$ will not vanish.

\textbf{Local part:}
At first, we focus on that 
\begin{align}
\label{pfeq:done_local-dfour-mono}
    \lim_{\varepsilon\to0}
    \inf_{G\in\calm_N(\Theta):\lfour\leq\varepsilon}
    \frac{\Vert f_{G}-f_{\Gs}\Vert_{L^2(\mu)}}{\lfour}
    >0.
\end{align}
Assume, for contradiction, that the above claim does not hold. Then there exists a sequence of mixing measures
$\tGn=\sum_{i=1}^{\ns}\delta_{(A_i^n ,c_i^n,\eta_i^n)}$ in
$\calm_N(\Theta)$ such that as $n\to\infty$, we get
\begin{align}
    \begin{cases}
        \call_{4n}:=\call_4(\tGn,\Gs)\to 0,\\
        \|\tf_{\tGn}-\tf_{\Gs} \|_{L^2(\mu)}/\call_{4n}\to0.
    \end{cases}
\end{align}
Let us recall that
    \begin{align*}
    \call_{4n}
    &
    :=\sum_{j=1}^{\bn}
    \left|
    \sum_{i\in\calAj} \frac{1}{1+\exp(-c^n_i)}-\frac{1}{1+\exp(-c_j^*)}
    \right|
    +\sum_{j=1}^{\bn}
    \sum_{i\in\calAj}
    \left[
    \|\Delta A^n_{ij} \|^2
    +\|\Delta \eta^n_{ij} \|^2
    \right]
    \\
    &+\sum_{j=\bn+1}^{\ns}
    \sum_{i\in\calAj}
    \left[
    \|\Delta A^n_{ij} \|
    +|\Delta c^n_{ij} |
    +\|\Delta \eta^n_{ij} \|
    \right],
\end{align*}
where $\Delta A^n_{ij}:=A^n_i-A^*_j$, 
$\Delta c^n_{ij}:=c^n_i-c^*_j$, 
$\Delta \eta^n_{ij}:=\eta^n_i-\eta^*_j$.
Since $\call_{4n}\to0$ as $n\to0$, there are two different situation for parameter convergence:
\begin{itemize}
    \item For $j=1,\cdots,\bn$, parameters are over-fitted: $\sum_{i\in\calAj} \frac{1}{1+\exp(-c^n_i)}\rightarrow\frac{1}{1+\exp(-c_j^*)}$
    and $(A^n_i ,\eta^n_i)\rightarrow (A^*_j,\eta^*_j)$, $\forall i\in\calAj$;
    \item For $j=\bn+1,\cdots,\ns$, parameters are exact-fitted: 
    $(A^n_i ,c^n_i,\eta^n_i)\rightarrow (A^*_j ,c^*_j,\eta^*_j), \forall i\in\calAj$.
\end{itemize}
 
\textbf{Step 1 - Taylor expansion:}
In this step, we decompose the term $\tf_{\tGn}(x) - \tf_{\Gs}(x)$ using a Taylor expansion. First, let us denote
\begin{align}
\label{eq:done_taylor_expension-dfour-mono}
    &{\tf}_{\tGn}(x)-\tf_{\Gs}(x)
    =
    \sum_{j=1}^{\bn}
    \left[
    \sum_{i\in\calAj}
    \frac{1}{1+
    \exp(-x^{\top}A^n_ix
    -c_i^n)
    }\cdot
    \cale(x,\eta^n_i)
    -
    \frac{1}{1+
    \exp(-x^{\top}A^*_jx
    -c_j^*)
    }\cdot
    \cale(x,\ej)
    \right]
    \nonumber
    \\&
    +
    \sum_{j=\bn+1}^{\ns}
    \left[
    \sum_{i\in\calAj}
    \frac{1}{1+
    \exp(-x^{\top}A^n_ix
    -c_i^n)
    }\cdot
    \cale(x,\eta^n_i)
    -
    \frac{1}{1+
    \exp(-x^{\top}A^*_jx
    -c_j^*)
    }\cdot
    \cale(x,\ej)
    \right]
    \\&=
    \sum_{j=1}^{\bn}
    \sum_{i\in\calAj}
    \left[
    \frac{1}{1+
    \exp(-x^{\top}A^n_ix
    -c_i^n)
    }\cdot
    \cale(x,\eta^n_i)
    -
    \frac{1}{1+
    \exp(-c_i^n)
    }\cdot
    \cale(x,\ej)
    \right]:=\Ione_n
    \nonumber
    \\&
    +
    \sum_{j=1}^{\bn}
    \left[
    \sum_{i\in\calAj}
    \frac{1}{1+
    \exp(-c_i^n)
    }
    -
    \frac{1}{1+
    \exp(-c_j^*)
    }
    \right]
    \cdot
    \cale(x,\ej):=\Itwo_n
    \nonumber
    \\&
    +
    \sum_{j=\bn+1}^{\ns}
    \left[
    \sum_{i\in\calAj}
    \frac{1}{1+
    \exp(-x^{\top}A^n_ix
    -c_i^n)
    }\cdot
    \cale(x,\eta^n_i)
    -
    \frac{1}{1+
    \exp(-x^{\top}A^*_jx
    -c_j^*)
    }\cdot
    \cale(x,\ej)
    \right]:=\Ithree_n
    \nonumber
\end{align}
Let us denote $\sigma(x,A, c) = \frac{1}{1 + \exp(-x^{\top}Ax   - c)}$,
then $\Ione_n, \Itwo_n$ and $\Ithree_n$ could be denoted as
\begin{align}
    \Ione_n&=\sum_{j=1}^{\bn}
    \sum_{i\in\calAj}
    \left[
    \frac{1}{1+
    \exp(-x^{\top}A^n_ix
    -c_i^n)
    }\cdot
    \cale(x,\eta^n_i)
    -
    \frac{1}{1+
    \exp(-c_i^n)
    }\cdot
    \cale(x,\ej)
    \right]
    \nonumber
    \\&
    =
    \sum_{j=1}^{\bn}
    \sum_{i\in\calAj}
    \left[
    \sigma(x,A_i^n
    ,c_i^n) \cale(x,\eta^n_i)
    -
    \sigma(x,0_{d\times d}
    ,c_i^n) \cale(x,\eta^*_j)
    \right]
    \nonumber
    \\&
    =
    \sum_{j=1}^{\bn}
    \sum_{i\in\calAj}
    \sum_{|\gamma|=1}^2
    \frac{1}{\gamma!}
    (\Delta A_{ij}^n)^{\gamma_1}
    (\Delta \eta_{ij}^n)^{\gamma_2}
    \frac{\partial^{|\gamma_1| }\sigma}{\partial A^{\gamma_1} }
    (x,0_{d\times d} ,c_i^n) 
    \frac{\partial^{|\gamma_2|}\cale}{\partial\eta^{\gamma_2}}
    (x,\eta^*_j)
    +R_1(x),
\end{align}
\begin{align}
    \Itwo_n=
    \sum_{j=1}^{\bn}
    \left[
    \sum_{i\in\calAj}
    \frac{1}{1+
    \exp(-c_i^n)
    }
    -
    \frac{1}{1+
    \exp(-c_j^*)
    }
    \right]
    \cdot
    \cale(x,\ej),
\end{align}
\begin{align}    \Ithree_n&=\sum_{j=\bn+1}^{\ns}
    \left[
    \sum_{i\in\calAj}
    \frac{1}{1+
    \exp(-x^{\top}A^n_ix -c_i^n)
    }\cdot
    \cale(x,\eta^n_i)
    -
    \frac{1}{1+
    \exp(-x^{\top}A^*_jx -c_j^*)
    }\cdot
    \cale(x,\ej)
    \right]
    \nonumber
    \\&
    =\sum_{j=\bn+1}^{\ns}
    \left[
    \sum_{i\in\calAj}
    \sigma(x,A^n_i ,c^n_i)
    \cale(x,\eta^n_i)
    -
    \sigma(x,A^*_j ,c^*_j)
    \cale(x,\ej)
    \right]
    \nonumber
    \\&
    =\sum_{j=\bn+1}^{\ns}
    \sum_{|\tau|=1}
    \left[
    \sum_{i\in\calAj}
    \frac{1}{\tau !}
    (\Delta A_{ij}^n)^{\tau_1}
    (\Delta c_{ij}^n)^{\tau_2}
    (\Delta \eta_{ij}^n)^{\tau_3}
    \right]
    \frac{\partial^{|\tau_1|+|\tau_2| }\sigma}{\partial A^{\tau_1}   \partial c^{\tau_2}}(x,A^*_j ,c^*_j)
    \frac{\partial^{|\tau_3|} \cale}{\partial \eta^{\tau_3}}(x,\ej)
    +R_2(x),
\end{align}
where $R_i(x), i=1,2$ are Taylor remainder such that $R_i(x)/\call_{4n}\to0$ as $n\to\infty$ for $i=[2]$.

Now we could denote 
\begin{align}
    K^n_{j,i,\gamma_{1:2}}&= \frac{1}{\gamma!}
    (\Delta A_{ij}^n)^{\gamma_1}
    (\Delta \eta_{ij}^n)^{\gamma_2},~
    j\in[\bn],i\in\calAj,\gamma\in\Real^{d\times d} \times\Real^q
    \label{pfeq:done_loss_notation1-dfour-mono}
    \\
    L^n_j&=\sum_{i\in\calAj}
    \frac{1}{1+
    \exp(-c_i^n)
    }
    -
    \frac{1}{1+
    \exp(-c_j^*)
    },~
    j\in[\bn]
    \label{pfeq:done_loss_notation2-dfour-mono}
    \\
    T^n_{j,\tau_{1:3}}&=
    \sum_{i\in\calAj}
    \frac{1}{\tau !}
    (\Delta A_{ij}^n)^{\tau_1}
    (\Delta c_{ij}^n)^{\tau_2}
    (\Delta \eta_{ij}^n)^{\tau_3},~
    j\in\{\bn+1,\cdots,\ns\}, 
    \tau\in\Real^{d\times d} \times\Real\times\Real^q
    \label{pfeq:done_loss_notation3-dfour-mono}
\end{align}
where $1\leq\sum_{i=1}^2|\gamma_i|\leq2$ and 
$\sum_{i=1}^3|\tau_i|=1$. 
Also recall that for the last term, $i$ was settled directly by $j$ since all the parameters are exact-fitted when $\bn+1\leq j\leq\ns$, i.e. $|\calAj|=1$.

Using these notations \eqref{pfeq:done_loss_notation1-dfour-mono} - \eqref{pfeq:done_loss_notation3-dfour-mono} we can now rewrite the difference
${\tf}_{\tGn}(x)-\tf_{\Gs}(x)$ as
\begin{align}
\label{pfeq:done_decomposition-dfour-mono}
    {\tf}_{\tGn}(x)-\tf_{\Gs}(x)
    &=
    \sum_{j=1}^{\bn}
    \sum_{i\in\calAj}
    \sum_{|\gamma|=1}^2
    K^n_{j,i,\gamma_{1:2}}
    \frac{\partial^{|\gamma_1| }\sigma}{\partial A^{\gamma_1} }
    (x,0_{d\times d} ,c_i^n) 
    \frac{\partial^{|\gamma_2|}\cale}{\partial\eta^{\gamma_2}}
    (x,\eta^*_j)
    +\sum_{j=1}^{\bn}
    L^n_j
    \cdot
    \cale(x,\ej)\nonumber
    \\&
    +\sum_{j=\bn+1}^{\ns}
    \sum_{|\tau|=1}
    T^n_{j,\tau_{1:3}}
    \frac{\partial^{|\tau_1|+|\tau_2|}\sigma}{\partial A^{\tau_1} \partial c^{\tau_2}}(x,A^*_j,c^*_j)
    \frac{\partial^{|\tau_3|} \cale}{\partial \eta^{\tau_3}}(x,\ej)
    +R_1(x)
    +R_2(x).    
\end{align}

\textbf{Step 2 - Non-vanishing coefficients:}
Now we claim that at least one in the set 
$$
\mathcal{S}=
\left\{ 
\frac{K^n_{j,i,\gamma_{1:2}}}{\call_{4n}},
\frac{L^n_{j}}{\call_{4n}},
\frac{T^n_{j,\tau_{1:3}}}{\call_{4n}}
\right\}$$
will not vanish as n goes to infinity.
We prove by contradiction that all of them converge to zero when $n\to 0$:
\begin{align*}
    \frac{K^n_{j,i,\gamma_{1:2}}}{\call_{4n}}\to0,~
\frac{L^n_{j}}{\call_{4n}}\to0,~
\frac{T^n_{j,\tau_{1:3}}}{\call_{4n}}\to0.
\end{align*}
Then follows directly from $L^n_j/\call_{4n}\to 0$ we could conclude that
\begin{align}
\label{pfeq:done_loss_coefficient1-dfour-mono}
    \frac{1}{\call_{4n}}
    \sum_{j=1}^{\bn}
    \left|
    \sum_{i\in\calAj} \frac{1}{1+\exp(-c^n_i)}-\frac{1}{1+\exp(-c_j^*)}
    \right|
    =
    \frac{1}{\call_{4n}}
    \sum_{j=1}^{\bn}
    \left|
    L^n_j
    \right|
    \to0.
\end{align}
Before consider other coefficients, for simplicity, we denote
 $e_{d,u}=\overbrace{(0,\ldots,0,\underbrace{1}_{u\text{-th}},0,\ldots,0)}^{d\text{-tuple}}\in\Real^d$
as a $d$-tuple with all components equal to 0, except the $u$-th, which is $1$;
and $e_{d\times d,uv}$ as a $d\times d$ matrix with all components equal to 0, except the element in the $u$-th row and $v$-th column , which is $1$, i.e. $e_{d\times d,uv}=\overbrace{(0_d^{\top},\ldots,0_d^{\top},\underbrace{e^{\top}_{d,u}}_{v\text{-th}},0_d^{\top},\ldots,0_d^{\top})}^{d\text{-column}}\in\Real^{d\times d}$.

Now consider for arbitrary
$ u,v \in[d]$,
let $\gamma_1=2e_{d\times d,uv} $ and $\gamma_2=0_{q}$,
we will have 
\begin{align*}
\frac{1}{\call_{4n}}
\left|
\Delta (A^n_{ij})^{(uv)}
\right|^2
=
\frac{1}{2 \call_{4n}}
\left| 
K^n_{j,i,2e_{d\times d,uv}, 0_q}
\right|
\to 0,~ n\to \infty.
\end{align*}
Then by taking the summation of the term with $u,v\in[d]$, we will have
\begin{align}
\label{pfeq:done_loss_coefficient2-dfour-mono}
    \frac{1}{\call_{4n}}
    \sum_{j=1}^{\bn}
    \sum_{i\in\calAj}
    \|\Delta A^n_{ij} \|^2
    =
    \frac{1}{2\call_{4n}}
    \sum_{j=1}^{\bn}
    \sum_{i\in\calAj}
    \sum_{u=1}^d
    \sum_{v=1}^d
    | K^n_{j,i,2e_{d\times d,uv}, 0_q}|\to0.
\end{align}
Similarly we will have 
\begin{align}
\label{pfeq:done_loss_coefficient4-dfour-mono}
    \frac{1}{\call_{4n}}
    \sum_{j=1}^{\bn}
    \sum_{i\in\calAj}
    \|\Delta \eta^n_{ij} \|^2
    =
    \frac{1}{2\call_{4n}}
    \sum_{j=1}^{\bn}
    \sum_{i\in\calAj}
    \sum_{w=1}^q
    | K^n_{j,i,0_{d\times d}, 2e_{q,w}}|\to0.
\end{align}
Now, consider 
$\bn+1\leq j\leq\ns$,
such that its corresponding Voronoi cell has only one element, i.e.
$|\calAj|=1$. 
For arbitrary
$ u,v \in[d]$,
let $\tau_1=e_{d\times d,uv},  \tau_2=0$ and $\tau_3=0_{q}$,
we will have 
\begin{align*}
\frac{1}{\call_{4n}}
\sum_{i\in\calAj}
\left|
\Delta (A^n_{ij})^{(uv)}
\right|
=
\frac{1}{\call_{4n}}
\left| 
T^n_{j,e_{d\times d,uv},  0,0_q}
\right|
\to 0,~ n\to \infty.
\end{align*}
Then by taking the summation of the term with $u,v\in[d]$, we will have
\begin{align*}
    \frac{1}{\call_{4n}}
    \sum_{j=\bn+1}^{\ns}
    \sum_{i\in\calAj}
    \|\Delta A^n_{ij} \|_1
    =
    \frac{1}{\call_{4n}}
    \sum_{j=\bn+1}^{\ns}
    \sum_{u=1}^d
    \sum_{v=1}^d
    | T^n_{j,e_{d\times d,uv} ,0,0_q}|\to0.
\end{align*}
Recall the topological equivalence between $L_1$-norm and $L_2$-norm on finite-dimensional vector space over $\Real$, we will have
\begin{align}
\label{pfeq:done_loss_coefficient5-dfour-mono}
    \frac{1}{\call_{4n}}
    \sum_{j=\bn+1}^{\ns}
    \sum_{i\in\calAj}
    \|\Delta A^n_{ij} \|
    \to0.
\end{align}
Following a similar argument, 
since
\begin{align*}
    \frac{1}{\call_{4n}}
    \sum_{j=\bn+1}^{\ns}
    \sum_{i\in\calAj}
    |\Delta c^n_{ij} |
    &=
    \frac{1}{\call_{4n}}
    \sum_{j=\bn+1}^{\ns}
    | T^n_{j,0_{d\times d} ,1,0_q}|\to0,
    \\
    \frac{1}{\call_{4n}}
    \sum_{j=\bn+1}^{\ns}
    \sum_{i\in\calAj}
    \|\Delta \eta^n_{ij} \|_1
    &=
    \frac{1}{\call_{4n}}
    \sum_{j=\bn+1}^{\ns}
    \sum_{w=1}^q
    | T^n_{j,0_{d\times d} ,0,e_{q,w}}|\to0,
\end{align*}
we obtain that 
\begin{align}
\label{pfeq:done_loss_coefficient6-dfour-mono}
    \frac{1}{\call_{4n}}
    \sum_{j=\bn+1}^{\ns}
    \sum_{i\in\calAj}
    |\Delta c^n_{ij} |\to0,~
    \frac{1}{\call_{4n}}
    \sum_{j=\bn+1}^{\ns}
    \sum_{i\in\calAj}
    \|\Delta \eta^n_{ij} \|\to0.
\end{align}
Now taking
the summation of limits in equations
\eqref{pfeq:done_loss_coefficient1-dfour-mono} - 
\eqref{pfeq:done_loss_coefficient6-dfour-mono},
we could deduce that
    \begin{align*}
    1=\frac{\call_{4n}}{\call_{4n}}
    &
    =
    \frac{1}{\call_{4n}}\sum_{j=1}^{\bn}
    \left|
    \sum_{i\in\calAj} \frac{1}{1+\exp(-c^n_i)}-\frac{1}{1+\exp(-c_j^*)}
    \right|
    +
    \frac{1}{\call_{4n}}
    \sum_{j=1}^{\bn}
    \sum_{i\in\calAj}
    \left[
    \|\Delta A^n_{ij} \|^2
    +\|\Delta \eta^n_{ij} \|^2
    \right]
    \\
    &+
    \frac{1}{\call_{4n}}
    \sum_{j=\bn+1}^{\ns}
    \sum_{i\in\calAj}
    \left[
    \|\Delta A^n_{ij} \|
    +|\Delta c^n_{ij} |
    +\|\Delta \eta^n_{ij} \|
    \right]\to0,
\end{align*}
as $n\to\infty$,
which is a contradiction.
Thus, not all the coefficients of elements in the set
\begin{align*}
\mathcal{S}=
\left\{ 
\frac{K^n_{j,i,\gamma_{1:2}}}{\call_{4n}},
\frac{L^n_{j}}{\call_{4n}},
\frac{T^n_{j,\tau_{1:3}}}{\call_{4n}}
\right\}
\end{align*}
tend to 0 as $n\to\infty$. 
Let us denote by $m_n$ the maximum of the absolute values of those elements.
It follows from the previous result that $1/m_n\not\to \infty$ as $n\to\infty$.

\textbf{Step 3 - Application of Fatou’s lemma:} 
In this step, we apply Fatou's lemma to obtain the desired inequality in equation \eqref{pfeq:done_local-dfour-mono}.
Recall in the beginning we have assumed that
$
        \|{\tf}_{\tGn}-\tf_{\Gs} \|_{L^2(\mu)}/\call_{4n}\to0
$
and the topological equivalence between $L_1$-norm and $L_2$-norm on finite-dimensional vector space over $\Real$,
we obtain $
        \|{\tf}_{\tGn}-\tf_{\Gs} \|_{L^1(\mu)}/\call_{4n}\to0
$. 
By applying the Fatou’s lemma, we get
\begin{align*}
    0=\lim_{n\to\infty}\frac{\|{\tf}_{\tGn}-\tf_{\Gs} \|_{L^1(\mu)}}{m_n \call_{4n}}
    \geq
    \int\liminf_{n\to\infty}
    \frac{|{\tf}_{\tGn}(x)-\tf_{\Gs}(x)|}{m_n \call_{4n}}d\mu(x)
    \geq 0.
\end{align*}
This result suggests that for almost every $x$, 
\begin{align}
\label{pfeq:done_fatou-dfour-mono}
    \frac{{\tf}_{\tGn}(x)-\tf_{\Gs}(x)}{m_n \call_{4n}}\to 0,
\end{align}
recall equation \eqref{pfeq:done_decomposition-dfour-mono}, we deduce that
\begin{align*}
    \frac{{\tf}_{\tGn}(x)-\tf_{\Gs}(x)}{m_n \call_{4n}}
    &=
    \sum_{j=1}^{\bn}
    \sum_{i\in\calAj}
    \sum_{|\gamma|=1}^2
    \frac{K^n_{j,i,\gamma_{1:2}}}{m_n \call_{4n}}
    \frac{\partial^{|\gamma_1| }\sigma}{\partial A^{\gamma_1} }
    (x,0_{d\times d} ,c_i^n) 
    \frac{\partial^{|\gamma_2|}\cale}{\partial\eta^{\gamma_2}}
    (x,\eta^*_j)
    +\sum_{j=1}^{\bn}
    \frac{L^n_j}{m_n \call_{4n}}
    \cdot
    \cale(x,\ej)\nonumber
    \\&
    +\sum_{j=\bn+1}^{\ns}
    \sum_{|\tau|=1}
    \frac{T^n_{j,\tau_{1:3}}}{m_n \call_{4n}}
    \frac{\partial^{|\tau_1|+|\tau_2|}\sigma}{\partial A^{\tau_1} \partial c^{\tau_2}}(x,A^*_j,c^*_j)
    \frac{\partial^{|\tau_3|} \cale}{\partial \eta^{\tau_3}}(x,\ej)
    +\frac{R_1(x)}{m_n \call_{4n}}
    +\frac{R_2(x)}{m_n \call_{4n}}.    
\end{align*}
Let us denote 
\begin{align*}
    \frac{K^n_{j,i,\gamma_{1:2}}}{m_n \call_{4n}}\to k_{j,i,\gamma_{1:2}},~~
    \frac{L^n_j}{m_n \call_{4n}}\to l_j,~~
    \frac{T^n_{j,\tau_{1:3}}}{m_n \call_{4n}}\to t_{j,\tau_{1:3}},
\end{align*}
then from equation \eqref{pfeq:done_fatou-dfour-mono},
we will have
\begin{align*}
    &
    \sum_{j=1}^{\bn}
    \sum_{i\in\calAj}
    \sum_{|\gamma|=1}^2
    k_{j,i,\gamma_{1:2}}
    \frac{\partial^{|\gamma_1| }\sigma}{\partial A^{\gamma_1} }
    (x,0_{d\times d} ,c_i^n) 
    \frac{\partial^{|\gamma_2|}\cale}{\partial\eta^{\gamma_2}}
    (x,\eta^*_j)
    +\sum_{j=1}^{\bn}
    l^n_j
    \cdot
    \cale(x,\ej)\nonumber
    \\
    &
    +\sum_{j=\bn+1}^{\ns}
    \sum_{|\tau|=1}
    t^n_{j,\tau_{1:3}}
    \frac{\partial^{|\tau_1|+|\tau_2|}\sigma}{\partial A^{\tau_1} \partial c^{\tau_2}}(x,A^*_j,c^*_j)
    \frac{\partial^{|\tau_3|} \cale}{\partial \eta^{\tau_3}}(x,\ej)=0,    
\end{align*}
for almost every $x$.
Note that the expert function $\cale(\cdot,\eta)$ is strongly identifiable, then the above equation implies that
\begin{align*}
    k_{j,i,\gamma_{1:2}}= l_j= t_{j,\tau_{1:3}}=0,
\end{align*}
for any $j\in[\ns]$,  $(\gamma_1, \gamma_2)\in\bbN^{d\times d} \times\bbN^q$ and 
    $(\tau_1,\tau_2,\tau_3 )\in\bbN^{d\times d} \times\bbN\times\bbN^q$
such that 
$1\leq \sum_{i=1}^2|\gamma_i| \leq 2$ and
$\sum_{i=1}^3|\tau_i|=1 $.
This violates that at least one among the limits in the set $\{k_{j,i,\gamma_{1:2}}, l_j,  t_{j,\tau_{1:3}} \}$ is different from zero.

Thus, we obtain the local inequality in equation \eqref{pfeq:done_local-dfour-mono}. Consequently, there exists some $\varepsilon' > 0$ such that
\begin{align*}
    \inf_{G\in\calm_N(\Theta):\lfour\leq\varepsilon'}
    \frac{\Vert \tf_{G}-\tf_{\Gs}\Vert_{L^2(\mu)}}{\lfour}
    >0.
\end{align*}

\textbf{Global part:}
We now proceed to demonstrate equation \eqref{pfeq:done_aim-dfour-mono} for the case where the denominator does not vanish, i.e.
\begin{align}
\label{pfeq:done_global-dfour-mono}
    \inf_{G\in\calm_N(\Theta):\lfour>\varepsilon'}
    \frac{\Vert \tf_{G}-\tf_{\Gs}\Vert_{L^2(\mu)}}{\lfour}
    >0.
\end{align}
Suppose, for contradiction, that inequality \eqref{pfeq:done_global-dfour-mono} does not hold. Then there exists a sequence of mixing measures $G'_n \in \mathcal{M}_N(\Theta)$ such that $\call_4(G'_n, \Gs) > \varepsilon'$ and
\begin{align*}
    \lim_{n\to\infty}
    \frac{\Vert \tf_{G'_n}-\tf_{\Gs}\Vert_{L^2(\mu)}}{\call_4(G'_n,\Gs)}
    =0.
\end{align*}
Under this situation, we could deduce that $\Vert \tf_{G'_n}-\tf_{\Gs}\Vert_{L^2(\mu)}\to0$ as $n\to\infty$.
Since $\Theta$ is a compact set, we can replace the sequence $G'_n$ with a convergent subsequence, which approaches a mixing measure $G' \in \mathcal{M}_N(\Theta)$. Given that $\call_4(G'_n, \Gs) > \varepsilon'$, we conclude that $\call_4(G', \Gs) > \varepsilon'$. Then, using Fatou's lemma, we deduce:
\begin{align*}
    0=\lim_{n\to\infty}
    {\|\tf_{G'_n}-\tf_{\Gs} \|^2_{L^2(\mu)}}
    \geq
    \int\liminf_{n\to\infty}
    {\left|\tf_{G'_n}(x)-\tf_{\Gs}(x)\right|^2}d\mu(x),
\end{align*}
which indicates that $\tf_{G'}(x)=\tf_{\Gs}(x)$ for almost every $x$. 
Based on the Proposition \ref{prop:identifiability-mono} followed, we will have that 
$G'\equiv\Gs$.

\end{proof}

\begin{proposition}
\label{prop:identifiability-mono}
    If the equation $\tf_G (x) = \tf_{\Gs} (x)$ holds true for almost every x, then it follows that $G \equiv \Gs $.
\end{proposition}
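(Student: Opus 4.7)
The plan is to mirror the proof of Proposition~\ref{prop:identifiability}, exploiting the simplification that the partially quadratic affinity score $x^\top A x + c$ lacks the linear term $b^\top x$.

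First, I would rewrite the hypothesis $\tf_G(x) = \tf_{\Gs}(x)$ as the pointwise identity
\begin{align*}
\sum_{i=1}^{N} \frac{\mathcal{E}(x,\eta_i)}{1+\exp(-x^\top A_i x - c_i)} = \sum_{i=1}^{N_*} \frac{\mathcal{E}(x,\eta_i^*)}{1+\exp(-x^\top A_i^* x - c_i^*)},
\end{align*}
valid for $\mu$-almost every $x$. Invoking the identifiability of the expert family $\{\mathcal{E}(\cdot,\eta)\}$ exactly as in the proof of Proposition~\ref{prop:identifiability}, I would argue that $N = N_*$ and that, up to relabeling, there is a bijective correspondence between the two parameter families pairing each $\eta_i$ with $\eta_i^*$. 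Matching like terms then yields, for each $i \in [N]$, the equality of sigmoid weights at $\mu$-almost every $x$.

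Second, for each matched pair this reduces to
\begin{align*}
\frac{1}{1+\exp(-x^\top A_i x - c_i)} = \frac{1}{1+\exp(-x^\top A_i^* x - c_i^*)}.
\end{align*}
Since the sigmoid is strictly monotone (hence injective) and both sides of the resulting scalar identity $x^\top A_i x + c_i = x^\top A_i^* x + c_i^*$ are analytic polynomials in $x$, the equality on a set of positive $\mu$-measure forces the coefficients to coincide: $A_i + A_i^\top = A_i^* + (A_i^*)^\top$ and $c_i = c_i^*$. A notable simplification over Proposition~\ref{prop:identifiability} is that the absence of the linear term removes the translation-freedom step in which the sparse-regime anchor was invoked.

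Finally, I would execute the same grouping argument as in Proposition~\ref{prop:identifiability}: partition $[N]$ into equivalence classes $J_1, \ldots, J_m$ on which $(A_i, c_i)$ is constant; since $(A_i, c_i) = (A_i^*, c_i^*)$, the same partition applies on the $\Gs$ side; within each class the identity reduces to a common scalar weight times a difference of expert sums, and identifiability of $\mathcal{E}$ forces $\{\eta_i : i \in J_j\} \equiv \{\eta_i^* : i \in J_j\}$, giving $G \equiv \Gs$. The main obstacle I anticipate is the initial identifiability step, where both the gating weights and the experts depend on $x$; I would handle this either by grouping terms with identical gating weights to recover constant-coefficient linear independence, or by exploiting analyticity in $x$ on an open set of positive $\mu$-measure and then extending.
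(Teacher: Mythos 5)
Your proposal is correct and follows essentially the same route as the paper, whose own proof of this proposition simply defers to the argument for Proposition~\ref{prop:identifiability} (expert identifiability giving $N=N_*$ and a matching of components, injectivity of the sigmoid forcing the quadratic scores and hence the coefficients to agree, then the grouping argument to match the $\eta_i$'s). Your observations that the quadratic form only pins down the symmetric part of $A_i$ and that the anchoring/translation step becomes unnecessary are minor refinements of, not departures from, the paper's argument.
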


\begin{proof}
    The proof of Proposition \ref{prop:identifiability-mono} can be done in a similar fashion to that of Proposition \ref{prop:identifiability}.
\end{proof}

\subsubsection{Proof of Theorem \ref{thm:dthree_loss-dfive-mono}}
\label{proof:dthree_loss-dfive-mono}


\begin{proof}
Following from the result of 
Corollary \ref{crl:function-convergence-misspecified-mono}, it is sufficient to show that the following inequality holds true:
    \begin{align}
    \label{pfeq:dthree_loss_aim-dfive-mono}
        \inf_{\G\in\calm_N(\Theta)}
        \frac{\Vert \tf_{G}-\tf_{\llG}\Vert_{L^2(\mu)}}
        {\lfive}
        >0,
    \end{align}
for any mixing measure ${\llG\in\llcalm_N(\Theta)}$.
To prove the above inequality, we follow a similar approach to the proof in Appendix \ref{proof:dthree_loss}, dividing the analysis into a local part and a global part and omitting the global part. 

Therefore, for an arbitrary mixing measure $ \llG := \sum_{i=1}^N\frac{1}{1+\exp(-\llci)}\delta_{(\llai, \lletai)}\in\llcalm_N(\Theta) $, we focus exclusively on demonstrating that:
\begin{align}
\label{pfeq:dthree_local-dfive-mono}
    \lim_{\varepsilon\to0}
    \inf_{G\in\calm_N(\Theta):\lfive\leq\varepsilon}
    \frac{\Vert \tf_{G}-\tf_{\llG}\Vert_{L^2(\mu)}}{\lfive}
    >0.
\end{align}
Assume, for contradiction, that the above claim does not hold. Then there exists a sequence of mixing measures
$G_n=\sum_{i=1}^{N}\frac{1}{1+\exp(-c_i^n)}\delta_{(A_i^n,\eta_i^n)}$ in
$\calm_N(\Theta)$ such that as $n\to\infty$, we get
\begin{align}
    \begin{cases}
        \call_{5n}:=\call_5(\tG_n,\llG)\to 0,\\
        \|{\tf}_{\tGn}-\tf_{\llG} \|_{L^2(\mu)}/\call_{5n}\to0.
    \end{cases}
\end{align}
Let us denote by $\calA_i^n:=\calA_i(\tG_n)$ a Voronoi cell of $\tG_n$ generated by the $j$-th components of $\llG$. 
Since our analysis is asymptotic, we can assume that the Voronoi cells are independent of the sample size, i.e. $\calAj=\calA_i^n$.

In Dense Regime, since $\tG_n$ and $\llG$ have the same number of atoms $k$, and $\call_{5n} \to 0$, it follows that each Voronoi cell $\calA_i$ contains precisely one element for all $i \in [N]$. Without loss of generality, we assume $\calA_i = \{i\}$ for all $i \in [N]$. This ensures that $(A_{i}^n , c_{i}^n,\eta_i^n) \to (\llai,   \llci,\lletai)$ as $n \to \infty$ for every $i \in [N]$.

Consequently, the Voronoi loss $\call_{5n}$ can be expressed as:  
\begin{align}
\call_{5n} := \sum_{i=1}^N 
\left( 
\|\Delta \llai^n\| + 
|\Delta \llci^n | + 
\|\lletai^n\| 
\right),
\end{align}
where the increments are given by:  
$
\Delta \llai^n = A_{i}^n - \llai, 
\Delta \llci^n = c_{i}^n - \llci, 
\Delta \lletai^n = \eta_i^n - \lletai.
$

We now break the proof of the local part into the following three steps:

\textbf{Step 1 - Taylor expansion:}
In this step, we decompose the term $\tf_{\tG_n}(x) - \tf_{\llG}(x)$ using a Taylor expansion. First, let us denote
$\sigma(x,A, c):=\frac{1}{1+\exp(-x^{\top}Ax -c)}$,
then we have that
\begin{align}
\label{eq:dthree_taylor_expension-dfive-mono}
    &\tf_{\tG_n}(x)-\tf_{\llG}(x)
    =
    \sum_{i=1}^{N}
    \left[
    \frac{1}{1+
    \exp(-x^{\top}A^n_ix-c_i^n)
    }\cdot
    \cale(x,\eta^n_i)
    -
    \frac{1}{1+
    \exp(-x^{\top}\llai x -\llci)
    }\cdot
    \cale(x,\lletai)
    \right]
    \nonumber
    \\&
    =
    \sum_{i=1}^N
    \sum_{|\alpha|=1}
    \frac{1}{\alpha!}
    (\Delta\llai^n)^{\alpha_1}
    (\Delta\llci^n)^{\alpha_2}
    (\Delta\lletai^n)^{\alpha_3}
    \frac{\partial^{|\alpha_1| +|\alpha_2|}\sigma}{\partial A^{\alpha_1}   \partial c^{\alpha_2}}(x,\llai,\llci)
    \frac{\partial^{|\alpha_3|} \cale}{\partial \eta^{\alpha_3}}(x,\lletai)
    +R_1(x)    
    \nonumber
    \\&
    =
    \sum_{i=1}^N
    \sum_{|\alpha|=1}
    S^n_{i,\alpha_1,\alpha_2,\alpha_3}
    \frac{\partial^{|\alpha_1| +|\alpha_2|}\sigma}{\partial A^{\alpha_1}   \partial c^{\alpha_2}}(x,\llai,\llci)
    \frac{\partial^{|\alpha_3|} \cale}{\partial \eta^{\alpha_3}}(x,\lletai)
    +R_1(x),    
\end{align}
where $R_1(x)$ is a Taylor remainder such that $R_1(x)/\call_{5n}\to0$ as $n\to\infty$.
Now we could denote 
\begin{align}
    S^n_{i,\alpha_{1:3}}&=
    \frac{1}{\alpha!}
    (\Delta\llai^n)^{\alpha_1}
    (\Delta\llci^n)^{\alpha_2}
    (\Delta\lletai^n)^{\alpha_3},
    ~
    i\in[N], 
    \alpha\in\Real^{d\times d} \times\Real\times\Real^q
    \label{pfeq:dthree_loss_notation-dfive-mono}
\end{align}
where
$\sum_{i=1}^3|\alpha_i|=1$.

\textbf{Step 2 - Non-vanishing coefficients:}
Now we claim that at least one among the ratios
$S^n_{i,\alpha_{1:3}}/\call_{5n}$
will not vanish as n goes to infinity.
We prove by contradiction that all of them converge to zero when $n\to 0$:
\begin{align*}
\frac{S^n_{i,\alpha_{1:3}}}{\call_{5n}}\to0.
\end{align*}
for any $i\in[N], 
    \alpha\in\Real^{d\times d} \times\Real\times\Real^q$ 
such that
$\sum_{i=1}^3|\alpha_i|=1$. 

Now, consider 
$1\leq i\leq k$,
for arbitrary
$ u,v \in[d]$,
let $\alpha_1=e_{d\times d,uv}, \alpha_2=0$ and $\alpha_3=0_{q}$,
we will have 
\begin{align*}
\frac{1}{\call_{5n}}
\sum_{i=1}^N
\left|
\Delta (\llai^n)^{(uv)}
\right|
=
\frac{1}{\call_{5n}}
\left| 
S^n_{i,e_{d\times d,uv} ,0,0_q}
\right|
\to 0,~ n\to \infty.
\end{align*}
Then by taking the summation of the term with $u,v\in[d]$, we will have
\begin{align*}
    \frac{1}{\call_{5n}}
    \sum_{i=1}^N
    \|\Delta \llai^n \|_1
    =
    \frac{1}{\call_{5n}}
    \sum_{i=1}^{N}
    \sum_{u=1}^d
    \sum_{v=1}^d
    | S^n_{i,e_{d\times d,uv} ,0,0_q}|\to0.
\end{align*}
Recall the topological equivalence between $L_1$-norm and $L_2$-norm on finite-dimensional vector space over $\Real$, we will have
\begin{align}
\label{pfeq:dthree_loss_coefficient1-dfive-mono}
    \frac{1}{\call_{5n}}
    \sum_{i=1}^N
    \|\Delta \llai^n \|
    \to0.
\end{align}
Following a similar argument, 
since
\begin{align*}
    \frac{1}{\call_{5n}}
    \sum_{i=1}^N
    |\Delta \llci^n |
    &=
    \frac{1}{\call_{5n}}
    \sum_{i=1}^N
    | S^n_{i,0_{d\times d}, 1,0_q}|\to0,
    \\
    \frac{1}{\call_{5n}}
    \sum_{i=1}^N
    \|\Delta \lletai^n \|_1
    &=
    \frac{1}{\call_{1n}}
    \sum_{i=1}^N
    \sum_{w=1}^q
    | S^n_{i,0_{d\times d}, 0,e_{q,w}}|\to0,
\end{align*}
we obtain that 
\begin{align}
\label{pfeq:dthree_loss_coefficient2-dfive-mono}
    \frac{1}{\call_{5n}}
\sum_{i=1}^N
    |\Delta \llci^n |\to0,~
    \frac{1}{\call_{5n}}
\sum_{i=1}^N
    \|\Delta \lletai^n \|\to0.
\end{align}
Now taking
the summation of limits in equations
\eqref{pfeq:dthree_loss_coefficient1-dfive-mono} - 
\eqref{pfeq:dthree_loss_coefficient2-dfive-mono},
we could deduce that
    \begin{align*}
    1=\frac{\call_{5n}}{\call_{5n}}
    =
    \frac{1}{\call_{5n}}\cdot
    \sum_{i=1}^N
    \left(
    \|\Delta \llai^n \|+
    |\Delta \llci^n |+
    \|\Delta \lletai^n \|
    \right)
    \to 0,
\end{align*}
as $n\to\infty$,
which is a contradiction.
Thus, at least one among the ratios
$S^n_{i,\alpha_{1:3}}/\call_{5n}$
must not approach zero as $n\to\infty$.
Let us denote by $m_n$ the maximum of the absolute values of those elements.
It follows from the previous result that $1/m_n\not\to \infty$ as $n\to\infty$.

\textbf{Step 3 - Application of Fatou’s lemma:} 
In this step, we apply Fatou's lemma to obtain the desired inequality in equation \eqref{pfeq:dthree_local-dfive-mono}.
Recall in the beginning we have assumed that
$
        \|\tf_{\tG_n}-\tf_{\llG} \|_{L^2(\mu)}/\call_{5n}\to0
$
and the topological equivalence between $L_1$-norm and $L_2$-norm on finite-dimensional vector space over $\Real$,
we obtain $
        \|\tf_{\tG_n}-\tf_{\llG} \|_{L^1(\mu)}/\call_{5n}\to0
$. 
By applying the Fatou’s lemma, we get
\begin{align*}
    0=\lim_{n\to\infty}\frac{\|\tf_{\tG_n}-\tf_{\llG} \|_{L^1(\mu)}}{m_n \call_{5n}}
    \geq
    \int\liminf_{n\to\infty}
    \frac{|\tf_{\tG_n}(x)-\tf_{\llG}(x)|}{m_n \call_{5n}}d\mu(x)
    \geq 0.
\end{align*}
This result suggests that for almost every $x$, 
\begin{align}
\label{pfeq:dthree_fatou-dfive-mono}
    \frac{\tf_{\tG_n}(x)-\tf_{\llG}(x)}{m_n \call_{5n}}\to 0.
\end{align}
Let us denote 
\begin{align*}
    \frac{S^n_{i,\alpha_{1:3}}}{m_n \call_{5n}}\to s_{i,\alpha_{1:3}},
\end{align*}
as $n\to\infty$ with a note that at least one among the limits $s_{i,\alpha_{1:3}}$
is non-zero.
Then from equation \eqref{pfeq:dthree_fatou-dfive-mono},
we will have
\begin{align*}
    \sum_{i=1}^N
    \sum_{|\alpha|=1}
    s^n_{i,\alpha_1,\alpha_2,\alpha_3}
    \frac{\partial^{|\alpha_1| +|\alpha_2|}\sigma}{\partial A^{\alpha_1}   \partial c^{\alpha_2}}(x,\llai,\llci)
    \frac{\partial^{|\alpha_3|} \cale}{\partial \eta^{\alpha_3}}(x,\lletai)
    =0,   
\end{align*}
for almost every $x$.
Note that the expert function $\cale(\cdot,\eta)$ is weakly identifiable, then the above equation implies that
\begin{align*}
     s_{i,\alpha_{1:3}}=0,
\end{align*}
for any $i\in[N]$,   
    $(\alpha_1 ,\alpha_2,\alpha_3)\in\bbN^{d\times d}\times\bbN\times\bbN^q$
such that 
$\sum_{i=1}^3|\alpha_i|=1 $.
This violates that at least one among the limits in the set $\{s_{i,\alpha_{1:3}} \}$ is different from zero.

Thus, we obtain the local inequality in equation \eqref{pfeq:dthree_local-dfive-mono}. Completes the proof.
\end{proof}

\section{Additional Experiments}
\label{appsec:additional-experiments}

\begin{figure*}[ht]
    \centering
    \subfloat[\textbf{Sparse Regime}\label{fig:sparse-regime}]{
        \includegraphics[width=0.75\textwidth]{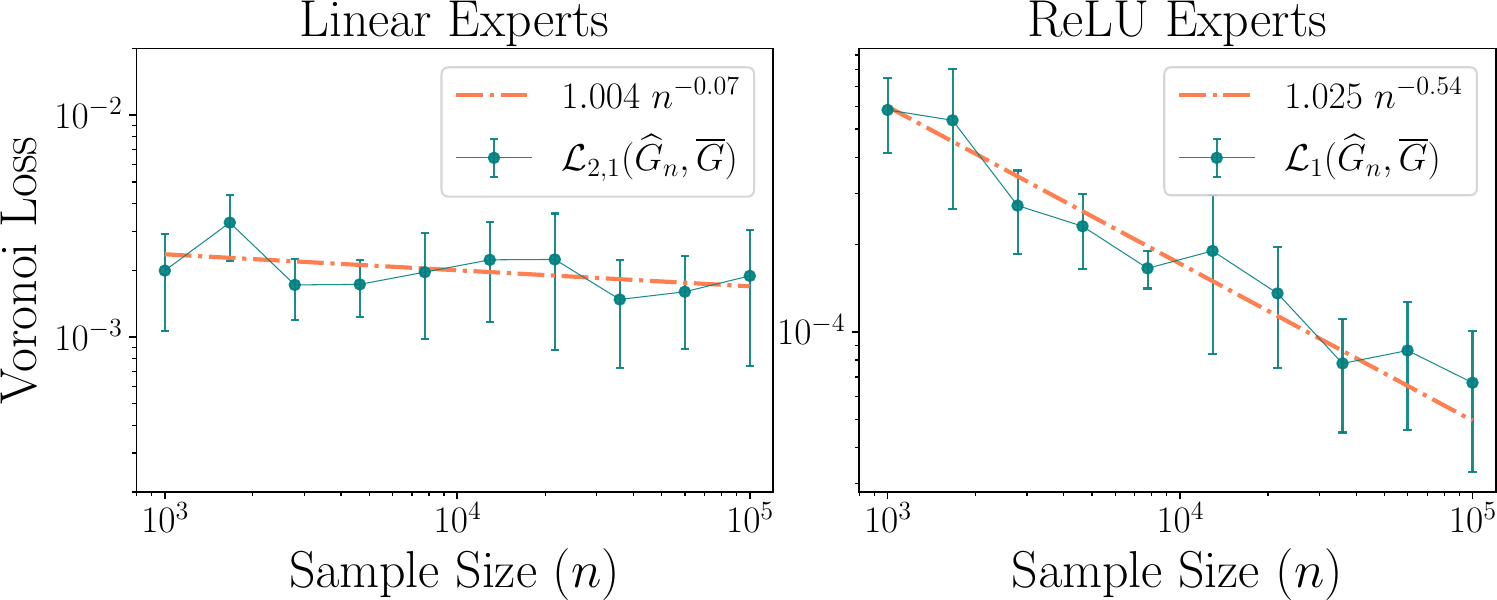}
    }\\
    \subfloat[\textbf{Dense Regime}\label{fig:dense-regime}]{
        \includegraphics[width=0.75\textwidth]{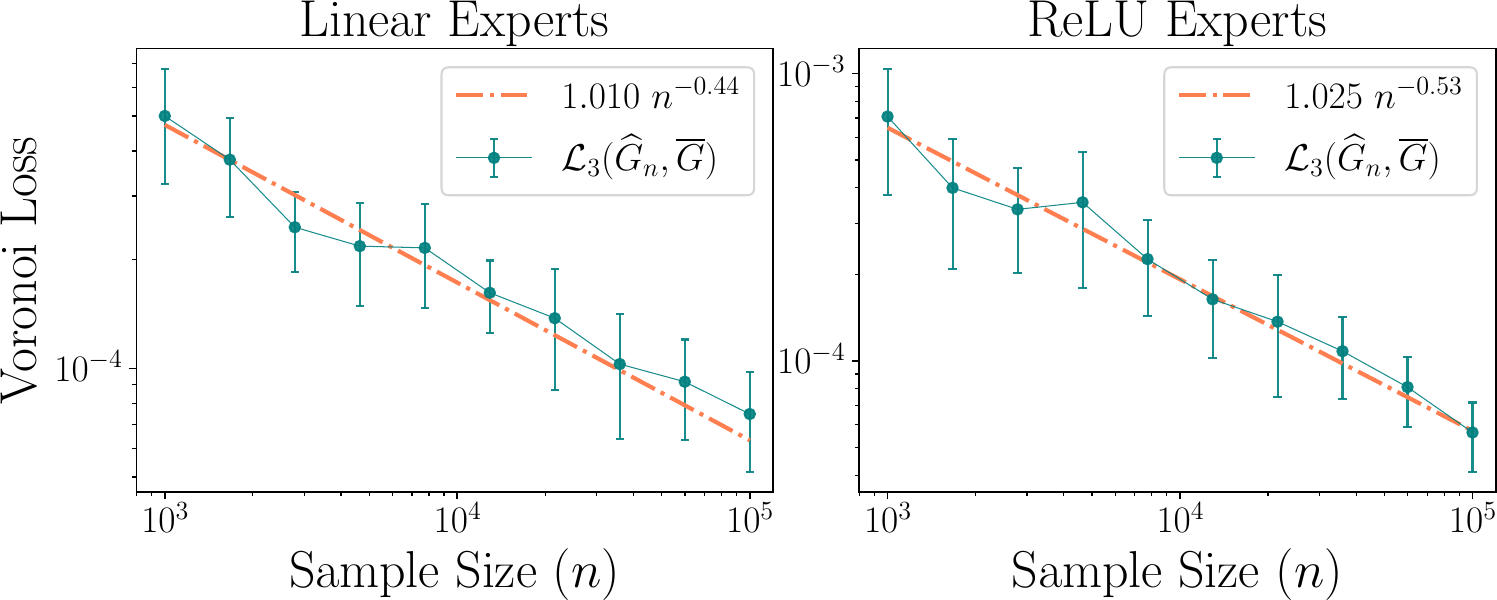}
    }
    \caption{Log-log plots of empirical convergence rates of Voronoi losses for softmax and sigmoid quadratic gating mechanisms. 
    \ref{fig:sparse-regime} Comparison between sigmoid quadratic gating MoE with ReLU and linear experts under a sparse regime for ground-truth parameters. 
    \ref{fig:dense-regime} Comparison between sigmoid quadratic gating with ReLU and linear experts under a dense regime for ground-truth parameters. 
    Each plot illustrates the empirical Voronoi loss convergence rates, with solid lines representing the Voronoi losses and dash-dotted lines showing fitted trends to emphasize the empirical rates.}
    \label{fig:sparse-vs-dense}
\end{figure*}


\textbf{Sigmoid gating convergence rate.} To empirically validate the theoretical findings presented in Section~\ref{sec:sample-efficiency}, we analyze the parameter estimation rates of the sigmoid quadratic gating MoE in a well-specified setting. Our evaluation spans various expert configurations, considering both dense and sparse regimes, to assess the sample complexity of the model.

\textbf{Setup.} Synthetic data is generated using a sigmoid quadratic gating MoE model with both ReLU and linear experts. Models are then fitted to the data for varying sample sizes, and the empirical parameter estimation rates are evaluated. Comparisons are made for ReLU and linear experts under both dense and sparse regimes to verify the theoretical findings. Please refer to Appendix~\ref{appendix:experimental-details} for a detailed description of synthetic data generation and ground-truth parameters.

\textbf{Results.}
Figure~\ref{fig:sparse-vs-dense} illustrates the empirical convergence rates of Voronoi loss for different expert configurations in the sigmoid quadratic gating MoE model under both sparse and dense regimes. Error bars indicate three standard deviations to account for variability across different runs. Specifically, Figure~\ref{fig:sparse-regime} compares the empirical Voronoi loss for MoE models with ReLU and linear experts in the sparse regime. The sigmoid quadratic gating MoE with linear experts exhibits a slower convergence rate of $\mathcal{O}(n^{-0.07})$, whereas the model with ReLU experts achieves a significantly faster rate of $\mathcal{O}(n^{-0.54})$.

Similarly, Figure~\ref{fig:dense-regime} presents results for the dense regime, where both expert types achieve fast convergence rates. Notably, the MoE model with ReLU experts attains a convergence rate of $\mathcal{O}(n^{-0.53})$, while the model with linear experts follows with a rate of $\mathcal{O}(n^{-0.44})$. These results empirically validate our theoretical predictions, demonstrating that the sigmoid quadratic gating MoE yields superior parameter estimation rates across different expert configurations, particularly benefiting from the use of ReLU experts.

\section{Experimental Details}
\label{appendix:experimental-details}
In this appendix, we provide a comprehensive overview of the generation process of synthetic data in our numerical experiments.

For each experiment, we generate synthetic data using the corresponding MoE model. Specifically, we construct a dataset $\{(X_i, Y_i)\}_{i=1}^n \subset \mathbb{R}^d \times \mathbb{R}$ as follows. First, we sample the input features independently from a uniform distribution:
\begin{align}
    X_i \sim \mathrm{Uniform}([-1, 1]^d), \quad \text{for } i = 1, \dots, n.
\end{align}
The response variable $Y_i$ is then generated according to the following model:
\begin{align}
    Y_{i} = f_{G_{*}}(X_{i}) + \varepsilon_{i}, \quad \text{for } i = 1, \dots, n,
\end{align}
where $\varepsilon_i$ represents independent Gaussian noise with variance $\nu = 0.01$. The underlying regression function $f_{G_{*}}(\cdot)$ is defined as:
\begin{align}
    f_{G_{*}}(x) = \sum_{i=1}^{k} \mathcal{G}_i(x) \cdot \phi((\alpha_i^*)^\top x + \beta_i^*),
\end{align}
where $\mathcal{G}_i(x)$ represents the gating function, which is modeled using either a softmax or a sigmoid quadratic transformation, and $\phi(\cdot)$ denotes the expert activation function, which can be either a ReLU or a linear transformation. The input dimension is set to $d = 8$, and we employ $\ns = 8$ experts. The gating and expert parameters are drawn from pre-specified distributions, as described in the following sections. 

\textbf{Ground-truth gating parameters.}  
In the dense regime, for both sigmoid and softmax quadratic gating functions, the ground-truth parameters $(A_i^*, b_i^*, c_i^*)$ are independently drawn from an isotropic Gaussian distribution with variance $\nu_g = 1/d$ for all $1 \leq i \leq 8$.  

In the sparse regime, we enforce sparsity by setting $(A_i^*, b_i^*, c_i^*) = 0$ for $i = 7, 8$, while the remaining parameters are sampled as in the dense regime.

\textbf{Ground-truth expert parameters.} All ground-truth expert parameters are similarly drawn independently from an isotropic Gaussian distribution with zero mean and variance $\nu_e = 1/d$. These parameters are sampled once before running the experiments and remain unchanged throughout all trials.


\textbf{Computational Resources.} All numerical experiments were conducted on a MacBook Air equipped with an Apple M4 chip.

\bibliography{references}
\bibliographystyle{abbrv}
\end{document}